\documentclass{article} 
\usepackage{iclr2026_conference,times}

\usepackage{amsmath,amsfonts,bm}

\def\eqref#1{equation~\ref{#1}}

\def\1{\bm{1}}

\DeclareMathAlphabet{\mathsfit}{\encodingdefault}{\sfdefault}{m}{sl}
\SetMathAlphabet{\mathsfit}{bold}{\encodingdefault}{\sfdefault}{bx}{n}

\usepackage{hyperref}
\usepackage{url}

\usepackage{multirow}
\usepackage{booktabs} 
\usepackage{caption}
\usepackage{subcaption}
\usepackage{tabularx}
\usepackage{makecell}
\usepackage{bm, bbm}
\usepackage{ulem}
\usepackage{listings}
\usepackage{xcolor}
\definecolor{codegray}{gray}{0.95}
\usepackage{algorithmic}
\usepackage{cleveref}
\usepackage{float}
\usepackage{wrapfig}

\usepackage{graphicx}
\usepackage[ruled,vlined]{algorithm2e}

\newtheorem{proposition}{Proposition}

\title{Revisiting Weight Regularization for Low-Rank Continual Learning}
\author{
  \makebox[\textwidth][c]{%
    \textbf{Yaoyue Zheng}$^{1,2,3}$ \quad
    \textbf{Yin Zhang}$^{4,2,3}$ \quad
    \textbf{Joost van de Weijer}$^{2,3}$ \quad
    \textbf{Gido M van de Ven}$^{5}$%
  } \\
  \makebox[\textwidth][c]{%
    \textbf{Shaoyi Du}$^{1}$ \quad
    \textbf{Xuetao Zhang}$^{1}$ \quad
    \textbf{Zhiqiang Tian}$^{6}$\thanks{Corresponding author.}%
  } \\
  \\
  $^{1}$ State Key Laboratory of Human-Machine Hybrid Augmented Intelligence \\
  \hspace*{0.8em}Institute of Artificial Intelligence and Robotics, Xi’an Jiaotong University, Shaanxi, China \\
  $^{2}$ Computer Vision Center, Barcelona, Spain \\
  $^{3}$ Universitat Autònoma de Barcelona, Barcelona, Spain \\
  $^{4}$ School of Instrument Science and Engineering, Harbin Institute of Technology, Harbin, China \\
  $^{5}$ Bernoulli Institute, University of Groningen, the Netherlands \\
  $^{6}$ School of Software Engineering, Xi’an Jiaotong University, Shaanxi, China
}

\iclrfinalcopy 
\begin{document}

\maketitle

\begin{abstract}
Continual Learning (CL) with large-scale pre-trained models (PTMs) has recently gained wide attention, shifting the focus from training from scratch to continually adapting PTMs. This has given rise to a promising paradigm: parameter-efficient continual learning (PECL), where task interference is typically mitigated by assigning a task-specific module during training, such as low-rank adapters.
However, weight regularization techniques, such as Elastic Weight Consolidation (EWC)—a key strategy in CL—remain underexplored in this new paradigm. 
In this paper, we revisit weight regularization in low-rank CL as a new perspective for mitigating task interference in PECL. Unlike existing low-rank CL methods, we mitigate task interference by regularizing a shared low-rank update through EWC, thereby keeping the storage requirement and inference costs constant regardless of the number of tasks. 
Our proposed method EWC-LoRA leverages a low-rank representation to estimate parameter importance over the full-dimensional space. This design offers a practical, computational- and memory-efficient solution for CL with PTMs, and provides insights that may inform the broader application of regularization techniques within PECL.
Extensive experiments on various benchmarks demonstrate the effectiveness of EWC-LoRA, achieving a stability–plasticity trade-off superior to existing low-rank CL approaches. These results indicate that, even under low-rank parameterizations, weight regularization remains an effective mechanism for mitigating task interference.
Code is available at: \url{https://github.com/yaoyz96/low-rank-cl}.
\end{abstract}

\section{Introduction}

Continual Learning (CL) has emerged as a rapidly growing research area, aiming to enable machine learning systems to acquire new knowledge without forgetting previously learned concepts~\citep{parisi2019continual}. This ability plays a crucial role in addressing real-world problems~\citep{shaheen2022continual, wang2024comprehensive} where data distributions are constantly changing. Ideally, a CL model should be able to maintain stable performance across all previously encountered tasks. A significant decline in performance on previous tasks after learning new ones is known as {\it catastrophic forgetting}~\citep{mccloskey1989catastrophic, ratcliff1990connectionist}, which typically arises from task interference.

With the rise of large-scale pre-trained models (PTMs)~\citep{bommasani2021opportunities, awais2025foundation}, the research focus in CL has shifted from training models from scratch to continually adapting these powerful models~\citep{ostapenko2022continual, yang2025recent}. This trend is driven by the impressive transferability and robustness of PTMs, and a growing body of work has shown promising results in PTM-based continual adaptation. A particularly popular paradigm is parameter-efficient continual learning (PECL)~\citep{qiao2024learn}, in which the PTM is typically kept frozen and augmented with lightweight modules such as prompts~\citep{wang2022learning, smith2023coda}, adapters~\citep{ermis2022continual, gao2024beyond}, or low-rank adaptations (LoRA)~\citep{liang2024inflora, wu2025sd}. The predominant strategy in these works is to prevent task interference by assigning task-specific modules during training—either structurally isolated adapters or LoRA modules, or prompts that provide task-specific conditioning at the feature level.

On the other hand, weight regularization, as a key continual learning strategy, remains underexplored in the era of continual learning with PTMs. A canonical example is Elastic Weight Consolidation (EWC)~\citep{kirkpatrick2017overcoming}, which has played a central role in combating catastrophic forgetting in small-scale models~\citep{schwarz2018progress, liu2018rotate, ehret2020continual}. Although effective for smaller models, EWC is difficult to apply to PTMs, as estimating parameter importance via the Fisher Information Matrix (FIM) is computationally expensive, requiring storage of a frozen copy of the old model and a Fisher matrix of equal size, resulting in a memory overhead three times that of the original model. Several studies have attempted to apply EWC to the fine-tuning of large language models~\citep{xiang2023language, vsliogeris2025full}. However, they typically fine-tune the model with a precomputed Fisher matrix that is fixed throughout training, making them impractical for CL.

In this paper, we adopt EWC as a canonical example to study weight regularization in low-rank CL, systematically analyzing key considerations for applying EWC within low-rank adaptations and proposing a feasible weight-regularization-based solution for low-rank CL.
First, we revisit weight regularization in low-rank CL as a new perspective to mitigating catastrophic forgetting.
Existing low-rank CL methods assign each task an independent LoRA module, constraining updates to subspaces that reduce interference with prior tasks. While effective, the addition of LoRA modules incurs storage overhead that scales linearly with the number of tasks. In contrast, we mitigate task interference by regularizing a shared low-rank update through EWC, rather than structurally isolating task-specific parameters, thereby keeping the storage requirement constant regardless of the number of tasks.
Moreover, we provide the first systematic investigation of EWC in low-rank CL, and we theoretically and empirically demonstrate that a naïve integration of EWC with low-rank adaptation is suboptimal. To address this limitation, we propose a principled method to estimate parameter importance within the low-rank space. Our method leverages a full-dimensional FIM to reliably estimate the importance of parameters in low-rank updates. 
We empirically show that the proposed regularization on low-rank matrices achieves a better and more flexible stability–plasticity trade-off than existing low-rank methods \citep{wang2022learning, liang2024inflora, wu2025sd}, which are typically constrained to a fixed operating point.


Drawing on these insights, we propose \textbf{EWC-LoRA}, which updates the model via low-rank adaptation while leveraging the full-dimensional space FIM for weight regularization. EWC-LoRA does not explicitly fine-tune the full model or store model components for all previous tasks, thereby significantly reducing computational and memory overhead while enabling effective Fisher estimation, making it a resource-efficient solution for CL with PTMs.
The main contributions of this work are as follows:
\begin{itemize}
\item 
We revisit weight regularization as a new perspective for mitigating catastrophic forgetting in low-rank CL. By exploiting the low-rank structure, we develop an efficient realization of EWC in PTMs. Specifically, by regularizing a shared LoRA module, EWC-LoRA maintains a constant memory footprint regardless of the number of tasks.
\item 
We present the first systematic investigation of EWC in low-rank CL and propose estimating the FIM over the full-dimensional space to accurately capture parameter importance. Our analysis shows that naïvely integrating EWC with low-rank adaptation is suboptimal, while EWC-LoRA effectively regularizes learning and achieves a better stability–plasticity trade-off than existing low-rank CL methods.
\item 
Extensive experiments across multiple benchmarks demonstrate that EWC-LoRA is effective, improving over vanilla LoRA by an average of 8.92\%, while achieving comparable or even superior performance to state-of-the-art low-rank CL methods, with better computational and storage efficiency.
\end{itemize}

\section{Related Works}

\paragraph{Continual Learning (CL).}
In contrast to standard supervised learning, which assumes that training data are independent and identically distributed (i.i.d.), CL focuses on training models on data streams that exhibit non-stationary and often continuous distribution shifts~\citep{lesort2021understanding}. This departure from the i.i.d.\ assumption introduces the central challenge of catastrophic forgetting, where the model experiences significant performance degradation on previously learned tasks as new tasks are introduced~\citep{mccloskey1989catastrophic, ratcliff1990connectionist}. Following~\cite{van2019three}, CL can be categorized into three main scenarios: task-incremental~\citep{gao2023enhancing}, domain-incremental~\citep{wang2024non}, and class-incremental learning~\citep{hersche2022constrained}. Among these, class-incremental learning can be considered the most challenging. 
\textit{In this work, we adhere to the class-incremental learning setting,  where the model must learn to distinguish between all classes encountered across all tasks}~\citep{masana2022class}.

\paragraph{Parameter-Efficient Continual Learning (PECL).}
PECL~\citep{qiao2024learn} has recently emerged as a promising paradigm in CL. It builds upon the idea of parameter-efficient fine-tuning~\citep{houlsby2019parameter, hu2022lora, jia2022visual}, where a pre-trained model is kept frozen and a small number of learnable parameters are introduced to adapt to new tasks.
To prevent task interference, existing PECL methods can be categorized into two types: (1) prompt-based methods, which typically provide each task with task-specific prompts to condition PTMs at feature level~\citep{wang2022learning, wang2022dualprompt, smith2023coda}, and (2) adapter- or low-rank adaptation-based methods, which typically insert task-specific lightweight modules during training, thereby providing isolation at the structural level~\citep{gao2024beyond, liang2024inflora, dou2024loramoe, wu2025sd, qian2025treelora}.
Existing PECL works thus primarily focus on introducing task-specific modules to mitigate task interference, which often leads to increased memory and computational costs as the number of tasks grows. In contrast, weight regularization techniques have received little attention, and it remains unclear how they can be effectively applied in PECL—a setting that presents unique structural and optimization challenges compared to full-model tuning.
For regularization-based low-rank CL methods, O-LoRA~\citep{wang2023orthogonal} mitigates task interference by enforcing geometric orthogonalization in the update subspace, thereby preserving the direction of parameter updates.
\textit{Within the context of low-rank CL, we revisit weight regularization as a means to mitigate task interference. In contrast to O-LoRA, we preserve the magnitude of updates through Fisher-based penalties, providing insights that may inform the broader application of regularization techniques within PECL—an area that remains underexplored in the current literature.}

\paragraph{Elastic Weight Consolidation (EWC).}
EWC~\citep{kirkpatrick2017overcoming} mitigates catastrophic forgetting in CL by penalizing changes to parameters that are deemed important for previous tasks, as quantified by the Fisher Information Matrix (FIM). 
As a canonical example of weight regularization techniques, EWC has inspired a series of follow-up studies that aimed to address its limitations and broaden its applicability. For example, \cite{huszar2018note} analyzed its behavior beyond two tasks, while \cite{van2025computation} investigated strategies for estimating the FIM in the context of CL.
In the era of PTMs, \cite{xiang2023language} apply EWC during fine-tuning of a large language model (LLM), using a precomputed FIM to protect the knowledge acquired by the original model. Similarly, \cite{vsliogeris2025full} employ EWC in the context of LLMs, estimating the FIM on a comprehensive benchmark to preserve domain knowledge. However, both studies rely on a precomputed FIM, which is kept fixed throughout training, making them unsuitable for our setting.
\cite{thede2024reflecting} briefly note the continued value of regularization in the context of PTMs, but they do not examine its detailed effect and do not combine regularization with low-rank adaptation. \cite{wei2025online} do combine EWC with low-rank adaptation, but they separately regularize each low-rank module, 
causing inaccurate Fisher estimation and suboptimal performance.
\textit{In this work, we conduct a focused investigation of EWC in PTMs-based CL. By leveraging a low-rank structure, we propose a practical approach for adapting EWC to PTM-based CL and demonstrate its effectiveness.}

\section{Methodology}
\label{sec3}

In this section, we review the necessary preliminaries, then discuss the structural and optimization challenges of applying EWC to low-rank adaptation, and finally present an overview of EWC-LoRA, highlighting its learning procedure and differences from existing low-rank CL methods.

\subsection{Preliminaries}

\paragraph{Notations.} 
In this paper, bold lowercase letters represent vectors, while bold uppercase letters denote matrices. The superscript $\top$ indicates the transpose of a matrix, and $\mathbb{E}[\cdot]$ stands for the expectation operator. Optimal values of variables are indicated with a superscript $^*$.

\paragraph{Problem Formulation.} 
We start with a pre-trained model parameterized by $\mathbf{W}_{0}$ and fine-tune it sequentially on a series of new tasks $\{\mathcal{T}_t\}_{t=1}^{T}$ with corresponding datasets $\{\mathcal{D}_t\}_{t=1}^{T}$.
For each task $\mathcal{T}_t$, the model receives a batch of samples $\{ x_k^t, y_k^t \}_{k=1}^{|\mathcal{D}_t|}$ drawn from $C$ classes, where $x_k^t$ and $y_k^t$ denote the input image and its corresponding label, respectively. After completing training on $\mathcal{T}_t$, the model is evaluated on all so-far encountered tasks $\mathcal{T}_{1:t}$. The objective is to learn parameters~$\mathbf{W}$ that generalize well across all tasks so far, without storing any past data. With the model parameterized by $\mathbf{W}$, the training loss function at task $\mathcal{T}_t$ is usually defined as:
\begin{equation}
\mathcal{L}_t(\mathbf{W}) \;=\; -\frac{1}{|\mathcal{D}_t|} \sum_{k=1}^{|\mathcal{D}_t|} \sum_{c=1}^{C} \mathbbm{1}_{[y_k^t = c]} \log p_{\mathbf{W}}(y = c \mid x_k^t)
\label{loss1}
\end{equation}

\paragraph{Elastic Weight Consolidation.} 
Following~\citep{huszar2018note}, we maintain a single penalty term that approximates the combined effect of all previous tasks, preventing the double-counting inherent in the multi-penalty approach. When learning on task $\mathcal{T}_t$, we approximate the posterior $p(\mathbf{W} | \mathcal{D}_{1:t-1})$ using the Laplace approximation~\citep{mackay1992practical}, forming a Gaussian distribution $\mathcal{N}(\mathbf{W}; \mathbf{W}^*_{t-1}, (\mathbf{F}^{\text{cum}}_{t-1})^{-1})$, where $\mathbf{W}^{*}_{t-1}$ denotes the optimal parameters on task $\mathcal{T}_{1:t-1}$, and the accumulated Fisher matrix $\mathbf{F}^{\text{cum}}_{t-1}$ serves as the precision matrix, reflecting the importance of each parameter for retaining knowledge from all previous tasks. To improve computational efficiency, EWC assumes parameter independence and retains only the diagonal elements of the Fisher matrix. During training on $\mathcal{T}_t$, the loss function in Eq.~\ref{loss1} is augmented with a quadratic penalty term that constrains important parameters to remain close to their previously learned values:
\begin{equation}
\mathcal{L}'_{t}(\mathbf{W}) \;=\; 
\mathcal{L}_t(\mathbf{W}) \;+\; \frac{\lambda}{2} 
(\mathbf{W} - \mathbf{W}_{t-1}^{*})^\top \operatorname{diag}(\mathbf{F}_{t-1}^{\text{cum}}) (\mathbf{W} - \mathbf{W}_{t-1}^{*})
\label{loss2}
\end{equation}
where $\lambda$ is a hyperparameter that controls the relative importance of the new task compared to the old one(s). $\operatorname{diag}(\mathbf{F}_{t-1}^{\text{cum}})$ denotes the diagonal matrix formed from the diagonal elements of the accumulated Fisher matrix. Hereafter, unless otherwise stated, $\mathbf{F}$ refers to the diagonal Fisher matrix, with the $\operatorname{diag}(\cdot)$ omitted for simplicity.

\paragraph{Low-rank Adaptation (LoRA).} 
When fine-tuning a model, LoRA~\citep{hu2022lora} restricts changes to the parameters to lie in a low-rank subspace. Suppose $\mathbf{W}$, $\mathbf{W}_0 \in \mathbb{R}^{d_O \times d_I}$ are the adapted and pre-trained weight matrix, respectively. LoRA expresses the weight update $\Delta \mathbf{W} = \mathbf{W} - \mathbf{W}_0$ as the product of two learnable matrices $\mathbf{A} \in \mathbb{R}^{d_O \times r}$ and $\mathbf{B} \in \mathbb{R}^{r \times d_I}$, with $r \ll \text{min}(d_I, d_O)$. Thus, the adapted weight matrix can be represented as $\mathbf{W} = \mathbf{W}_0 + \Delta \mathbf{W} = \mathbf{W}_0 + \mathbf{AB}$. During fine-tuning, the pre-trained weights $\mathbf{W}_0$ are frozen and only the parameters of $\mathbf{A}$ and $\mathbf{B}$ are trainable.

\subsection{EWC with Low-rank Adaptation}
\label{sec:3.2}

The core idea of EWC in Eq. \ref{loss2} lies in the second term, which measures how far the current parameters deviate from the previously learned ones. Directly applying EWC entails fine-tuning all parameters, as well as preserving both a frozen copy of the old model and a Fisher matrix of the same size. However, in the case of large PTMs, this is often not feasible.
To reduce the number of trainable parameters and improve efficiency, we represent the weight update as $\Delta \mathbf{W} = \mathbf{W}_t - \mathbf{W}^*_{t-1} = \mathbf{AB}$ via low-rank decomposition. 

To regularize low-rank matrices, a straightforward way is to compute individual Fisher matrices for $\mathbf{A}$ and $\mathbf{B}$, and apply regularization to each accordingly. This is the approach of~\cite{wei2025online}. However, focusing solely on the individual low-rank matrices ignores the interaction between $\mathbf{A}$ and $\mathbf{B}$, which is problematic because each element of the update $\Delta \mathbf{W}$ depends on their joint product, i.e., $\Delta \mathbf{W}_{ij} = \sum_{k=1}^{r} \mathbf{A}_{ik} \mathbf{B}_{kj}$. In Appendix~\ref{pro1}, we mathematically prove that regularization performed separately in the low-rank space generally diverges from that performed in the full-dimensional space. Another way to regularize low-rank matrices is to precompute the Fisher matrix on the pre-trained model and then apply the regularization to the update $\Delta \mathbf{W} = \mathbf{AB}$ using this fixed Fisher matrix~\citep{xiang2023language}. However, this estimation also includes directions associated with the frozen $\mathbf{W}_0$, which can introduce noise in measuring sensitivity to the loss. This issue is further illustrated in Appendix~\ref{pro2}. In Table \ref{tab:ablation}, we can observe that the two naïve methods for regularizing low-rank matrices result in suboptimal performance.

To address the above limitations, we propose updating parameters in the low-rank space while regularizing over the full-dimensional subspace of $\mathbf{W}$ spanned by $\mathbf{A}$ and $\mathbf{B}$, as illustrated in Figure~\ref{fig:overview} (b). This ensures that the penalty captures the true sensitivity of the model output to low-rank updates, rather than merely the local gradient magnitudes. Consequently, we reformulate Eq.~\ref{loss2} as:
\begin{equation}
\mathcal{L}'_{t}(\mathbf{A}, \mathbf{B}) = \mathcal{L}_{t}(\mathbf{A}, \mathbf{B}) + \frac{\lambda}{2} \operatorname{vec}(\mathbf{AB})^\top \mathbf{F}_{t-1}^{\text{cum}} \operatorname{vec}(\mathbf{AB})
\label{eq:ewc_loss}
\end{equation}
where $\operatorname{vec}(\mathbf{AB})$ is flattened $\mathbf{AB}$. This formulation enables regularization in the full-dimensional subspace $\Delta \mathbf{W}$, without requiring explicit storage of $\operatorname{vec}(\mathbf{AB})$. After training on $\mathcal{T}_t$, we estimate $\mathbf{F}_t$ and incorporate it into the previously accumulated Fisher matrix to obtain the updated Fisher $\mathbf{F}_{t}^{\text{cum}}$.

To estimate the Fisher matrix $\mathbf{F}_t$ at the optimal parameters $\mathbf{W}_t^{*}$ for task $\mathcal{T}_t$, we follow the definition in~\citep{martens2020new}. Specifically, the $i$-th diagonal element of $\mathbf{F}_t$ is defined as:
\begin{equation}
F_{t}^{i,i} = \mathbb{E}_{x \sim \mathcal{D}_{t}} \bigg[ \mathbb{E}_{y \sim p_{\mathbf{W}^{*}_{t}}} \bigg[ \bigg( \frac{\partial \log p_{\mathbf{W}}(y|x) }{\partial w_i} \bigg|_{\mathbf{W}=\mathbf{W}^{*}_{t}}  \bigg)^2 \bigg] \bigg]
\label{eq:fisher}
\end{equation}

The outer expectation in Eq.~\ref{eq:fisher} is computed over the current task data $\mathcal{D}_t$, while the inner expectation is approximated using the empirical Fisher for computational efficiency. In practice, this inner expectation can be estimated by taking the squared gradients of the log-likelihood with respect to $\mathbf{W}$, evaluated at $\mathbf{W}_t^*$. Since only the low-rank update $\Delta \mathbf{W}$ is trainable, the gradient with respect to $\mathbf{W}$ and $\Delta \mathbf{W}$ are identical. Consequently, the Fisher matrix is effectively computed in the $\Delta \mathbf{W}$- space. The equivalence between estimating the Fisher information in the $\mathbf{W}$-space and in the $\Delta \mathbf{W}$-space is established in Appendix~\ref{pro3}.

\subsection{Overview of EWC-LoRA}

\begin{figure}[t!]
\centering
\includegraphics[width=0.95\linewidth]{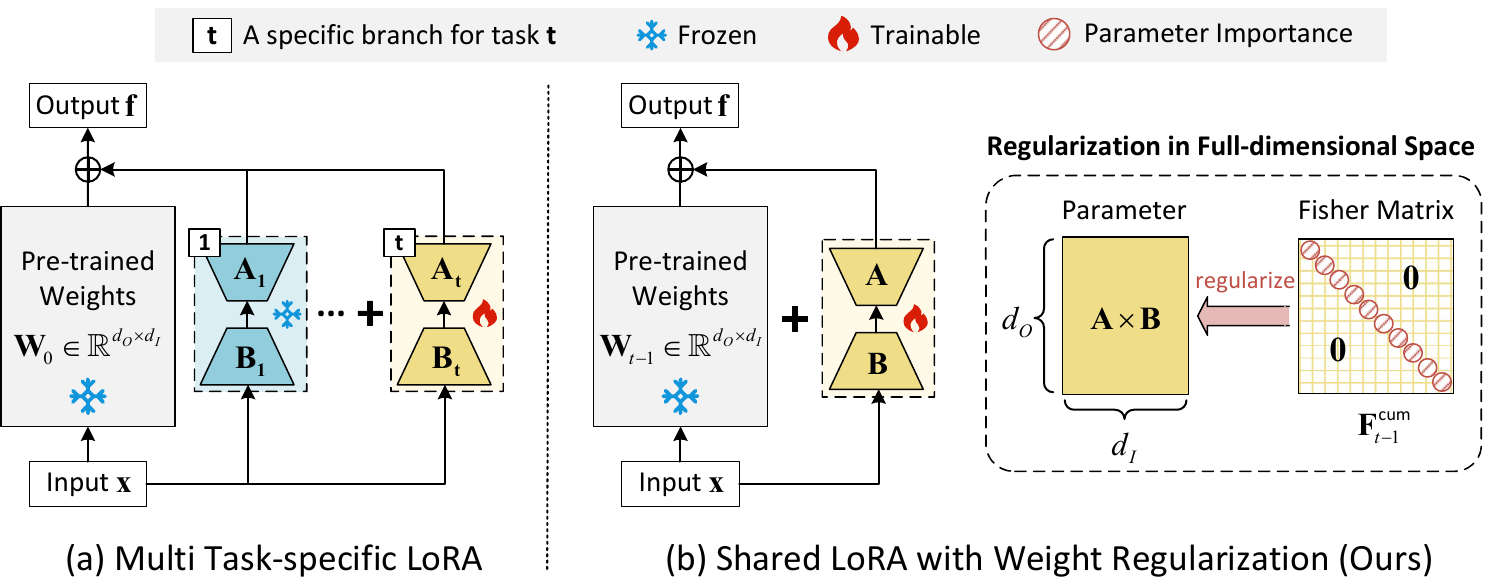}
\caption{Overview of learning task $\mathcal{T}_t$ at a specific layer of the ViT model. \textbf{(a)} Prior low-rank CL methods structurally isolate task-specific LoRA parameters by adding a new LoRA branch for each task. \textbf{(b)} The proposed EWC-LoRA employs a shared LoRA module that is learned across all tasks and regularized according to parameter importance measured by a Fisher Information Matrix, which is updated after learning each task.}
\label{fig:overview}
\end{figure}

Figure~\ref{fig:overview} illustrates the difference between EWC-LoRA and existing state-of-the-art low-rank CL methods, in the context of learning task $\mathcal{T}_t$ at a specific layer of the Vision Transformer (ViT).
For task $\mathcal{T}_t$, we initialize the shared LoRA branch, and the forward computation is given by: $\mathbf{f} = \mathbf{W}_{t-1} \mathbf{x} + \mathbf{A} \mathbf{B} \mathbf{x}$. Specifically, $\mathbf{A}$ is zero-initialized, while $\mathbf{B}$ is drawn from a uniform distribution. During training on $\mathcal{T}_t$, only $\mathbf{A}$ and $\mathbf{B}$ are updated, while the base weights $\mathbf{W}_{t-1}$ keep frozen. After completing task $\mathcal{T}_t$, the learned parameters are integrated to the base weight as $\mathbf{W}_t = \mathbf{W}_{t-1} + \mathbf{A} \mathbf{B}$. For any sample from a previous task, the forward computation becomes $\mathbf{f} = \mathbf{W}_t \mathbf{x}$.
During training, the update of the low-rank matrices $\mathbf{A}$ and $\mathbf{B}$ are regularized using the accumulated Fisher matrix $\mathbf{F}_{t-1}^{\text{cum}}$ from step $t-1$. 
Accordingly, EWC-LoRA maintains only two states after learning step $t$:
(1)~the updated model parameters $\mathbf{W}_{t}$ obtained from the current task $\mathcal{T}_t$, and
(2)~the accumulated Fisher matrix $\mathbf{F}_{t}^{\text{cum}}$, a diagonal matrix that aggregates information from all previous tasks $\mathcal{T}_{1:t}$.
The dataset $\mathcal{D}_{t}$ and the task-specific Fisher matrix $\mathbf{F}_{t}$ can then be discarded. For clarity, we outline the learning procedure of EWC-LoRA in Algorithm~\ref{alg:ewc_lora}, provided in Appendix~\ref{ap:optimization}. For computational and memory efficiency, the only additional memory cost of EWC-LoRA comes from storing the FIM. However, due to the low-rank parameterization, EWC-LoRA incurs only a modest memory overhead while achieving higher computational efficiency than existing low-rank CL methods. A detailed efficiency analysis is provided in Section 4.2.

\begin{table}[t!]
\centering
\small
\caption{Comparison of different Fisher estimation strategies on CIFAR-100. ``\textbf{+ Mem.}'' indicates the additional memory required for Fisher estimation and regularization during training.}
\begin{tabular}{l|cccc|c}
\toprule
\textbf{Strategy} & $\overline{A}_{10}$ & Avg. & Stability & Plasticity & \textbf{+ Mem.} \\
\midrule
w/o $\mathbf{F}$ & $82.99_{\scriptsize(0.84)}$ & $89.74_{\scriptsize(0.58)}$ & $87.56_{\scriptsize(0.09)}$ & $\textbf{98.86}_{\scriptsize(0.09)}$ & 0 GB \\
Precomputed $\mathbf{F}_{\mathbf{W}}$ & $83.87_{\scriptsize(0.21)}$ & $89.36_{\scriptsize(0.49)}$ & $93.15_{\scriptsize(0.45)}$ & $94.74_{\scriptsize(0.56)}$ & 1 GB \\
Separate $\mathbf{F}_{\mathbf{A}}, \mathbf{F}_{\mathbf{B}}$ & $86.41_{\scriptsize(0.69)}$ & $91.33_{\scriptsize(0.50)}$ & $94.23_{\scriptsize(0.46)}$ & $96.47_{\scriptsize(0.21)}$ & 4 GB \\
$\mathbf{F}_{\Delta \mathbf{W}}$ (Ours) & $\textbf{87.91}_{\scriptsize(0.57)}$ & $\textbf{92.27}_{\scriptsize(0.39)}$ & $\textbf{94.45}_{\scriptsize(0.59)}$ & $97.99_{\scriptsize(0.50)}$ & 6 GB \\
\bottomrule
\end{tabular}
\label{tab:ablation}
\end{table}

\section{Experiments}
\label{sec4}

\subsection{Benchmarks}

\paragraph{Datasets.} 
In line with existing continual learning methods~\citep{wang2023orthogonal, liang2024inflora, wu2025sd}, we evaluate the performance of EWC-LoRA under two settings:
(1) Four widely used CL vision benchmarks, including CIFAR-100~\citep{krizhevsky2009learning}, DomainNet~\citep{peng2019moment, wang2022s}, ImageNet-R~\citep{hendrycks2021many, wang2022dualprompt}, and ImageNet-A~\citep{hendrycks2021nae}; and (2) A standard language CL benchmark with five text classification datasets: AG News, Amazon Reviews, Yelp Reviews, DBpedia, and Yahoo Answers. Following~\cite{wang2023orthogonal}, we evaluate the models under three different task orders. The detailed task order is provided in Appendix~\ref{ap3}.
CIFAR-100 consists of 100 natural image classes and is the most commonly used dataset in continual learning.
DomainNet includes 345 classes across six diverse visual domains, making it a challenging multi-domain benchmark. Note that similar to~\cite{liang2024inflora}, DomainNet is split as a class-incremental learning problem.
ImageNet-R contains 200 ImageNet classes~\citep{deng2009imagenet} rendered with various artistic styles, introducing significant distribution shifts. ImageNet-A consists of 200 natural adversarial examples that are frequently misclassified by standard ImageNet-trained models. We follow the mostly used task splits for continual learning benchmarks: CIFAR-100 is divided into 10 tasks (10 classes per task); DomainNet into 5 tasks (69 classes per task); ImageNet-A into 10 tasks (20 classes each); ImageNet-R into 5, 10, and 20 tasks (with 40, 20, and 10 classes per task, respectively). 

\paragraph{Evaluation metrics.} 
We adopt accuracy as our evaluation metric, in line with standard practice~\citep{lopez2017gradient, chaudhry2019tiny}. Let $A_{t, i}$ denote the classification accuracy on the $i$-th task after training on the $t$-th task. The average accuracy at learning step $t$ is defined as: $\overline{A}_t = \frac{1}{t} \sum_{i=1}^{t} A_{t,i}$. The overall average accuracy is then computed as the mean of all intermediate averages: $\text{Avg.} = \frac{1}{T} \sum_{t=1}^{T} \overline{A}_t$, where $T$ is the total number of tasks. 

To further analyze a model's stability and plasticity and to enable intuitive comparisons across tasks, we propose measuring each metric on a normalized scale. Specifically, the stability score is defined as one minus the normalized forgetting:
\begin{equation}
\text{Stability} = 1 - \overline{F} = 1 - \frac{1}{T - 1} \sum_{i=1}^{T - 1} \frac{\max_{t < T} A_{t, i} - A_{T, i}}{\max_{t < T} A_{t, i}}
\label{eq:s}
\end{equation}
where normalized forgetting $\overline{F}$ represents the relative drop in a task’s performance from its peak to the end of continual learning. For each task $i$, plasticity is defined as the ratio between the model’s performance on the task after learning it and the corresponding reference performance. The overall plasticity is defined as:
\begin{equation}
\text{Plasticity} = \frac{1}{T} \sum_{i=1}^{T} \frac{A_{i,i}}{A_{i,i}^{\text{ref}}}
\label{eq:p}
\end{equation}
where $A_{i,i}^{\text{ref}}$ denotes the accuracy obtained by fine-tuning a model exclusively on task $i$. This normalization facilitates intuitive comparison across tasks and methods.

\paragraph{Implementation details.}
Following prior works~\citep{wang2022learning, wang2023orthogonal, smith2023coda, liang2024inflora}, we adopt the ViT-B/16 backbone~\citep{dosovitskiy2020image} pretrained on ImageNet-21K in a supervised manner as the initialization for vision models, and use the pre-trained encoder–decoder T5-large and LLaMA-3.2-1B-Instruct for language tasks.
To facilitate comparison, all methods are implemented within a unified framework. We align the experimental setup with that of InfLoRA~\citep{liang2024inflora}, fine-tuning the model with the Adam optimizer using hyperparameters $\beta_1 = 0.9$, $\beta_2 = 0.999$. For each comparison method, we report the best results using the hyperparameters provided by the authors whenever available. In cases where such configurations are not released, we apply a unified set of hyperparameters that has been validated to perform reliably across all methods, ensuring consistency in our experimental setup.
To better contextualize performance, we report both an upper target (\textit{Joint Train}) and a lower target (\textit{Finetune}). The upper target jointly trains on all tasks, while the lower target sequentially trains on tasks without any forgetting mitigation. For all experiments, we perform five runs with different seeds and report the average and standard deviation of the results.

\subsection{Main Results}

\begin{table}[t!]
\centering
\caption{Comparison results on CIFAR-100, DomainNet, ImageNet-R, and ImageNet-A (in \%). \textbf{Bold} and \uline{underline} indicate the highest and second-highest scores, respectively.}
\resizebox{\textwidth}{!}{
\begin{tabular}{l|cc|cc|cc|cc}
\toprule
Tasks & \multicolumn{2}{c|}{\textbf{CIFAR-100}} & \multicolumn{2}{c|}{\textbf{DomainNet}} & \multicolumn{2}{c|}{\textbf{ImageNet-R}} & \multicolumn{2}{c}{\textbf{ImageNet-A}} \\
\cmidrule(){1-9}
Methods & $\overline{A}_{10}$ ($\uparrow$) & Avg. ($\uparrow$) & $\overline{A}_{5}$ ($\uparrow$) & Avg. ($\uparrow$) & $\overline{A}_{10}$ ($\uparrow$) & Avg. ($\uparrow$) & $\overline{A}_{10}$ ($\uparrow$) & Avg. ($\uparrow$) \\
\midrule
Joint Train & $92.82_{\scriptsize(0.20)}$ & $95.41_{\scriptsize(0.05)}$ & $76.84_{\scriptsize(0.06)}$ & $81.25_{\scriptsize(0.05)}$ & $81.69_{\scriptsize(0.28)}$ & $86.25_{\scriptsize(0.09)}$ & $65.01_{\scriptsize(0.74)}$ & $74.10_{\scriptsize(0.44)}$ \\
Finetune & $79.09_{\scriptsize(1.53)}$ & $88.17_{\scriptsize(0.45)}$ & $65.57_{\scriptsize(0.20)}$ & $75.12_{\scriptsize(0.12)}$ & $60.42_{\scriptsize(1.64)}$ & $73.18_{\scriptsize(0.32)}$ & $32.85_{\scriptsize(1.53)}$ & $54.55_{\scriptsize(1.44)}$ \\
L2P & $83.18_{\scriptsize(1.20)}$ & $87.69_{\scriptsize(1.05)}$ & $70.26_{\scriptsize(0.25)}$ & $75.83_{\scriptsize(0.98)}$ & $71.26_{\scriptsize(0.44)}$ & $76.13_{\scriptsize(0.46)}$ & $42.94_{\scriptsize(1.27)}$ & $51.40_{\scriptsize(1.95)}$ \\
DualPrompt & $81.48_{\scriptsize(0.86)}$ & $86.41_{\scriptsize(0.66)}$ & $68.26_{\scriptsize(0.90)}$ & $73.84_{\scriptsize(0.45)}$ & $68.22_{\scriptsize(0.20)}$ & $73.81_{\scriptsize(0.39)}$ & $45.49_{\scriptsize(0.96)}$ & $54.68_{\scriptsize(1.24)}$ \\
CODA-Prompt & $86.31_{\scriptsize(0.12)}$ & $90.67_{\scriptsize(0.22)}$ & $70.58_{\scriptsize(0.53)}$ & $76.68_{\scriptsize(0.44)}$ & $74.05_{\scriptsize(0.41)}$ & $78.14_{\scriptsize(0.39)}$ & $45.36_{\scriptsize(0.78)}$ & $57.03_{\scriptsize(0.94)}$ \\
InfLoRA & $86.34_{\scriptsize(0.76)}$ & $91.33_{\scriptsize(0.48)}$ & $71.01_{\scriptsize(0.05)}$ & $77.75_{\scriptsize(0.03)}$ & $\uline{74.41}_{\scriptsize(0.63)}$ & $\uline{80.31}_{\scriptsize(0.60)}$ & $50.75_{\scriptsize(1.33)}$ & $64.36_{\scriptsize(1.01)}$ \\
SD-LoRA & $86.77_{\scriptsize(0.30)}$ & $90.96_{\scriptsize(0.30)}$ & $\uline{71.27}_{\scriptsize(0.14)}$ & $77.70_{\scriptsize(0.07)}$ & $72.93_{\scriptsize(2.76)}$ & $79.80_{\scriptsize(0.15)}$ & $55.23_{\scriptsize(0.94)}$ & $66.10_{\scriptsize(0.54)}$ \\
CL-LoRA & $\uline{87.65}_{\scriptsize(0.53)}$ & $\uline{92.25}_{\scriptsize(0.41)}$ & $71.06_{\scriptsize(0.31)}$ & $\uline{77.76}_{\scriptsize(0.29)}$ & $\textbf{78.72}_{\scriptsize(0.44)}$ & $\textbf{85.20}_{\scriptsize(0.45)}$ & $\uline{57.62}_{\scriptsize(0.89)}$ & $\textbf{70.76}_{\scriptsize(0.63)}$ \\
BiLoRA & $85.99_{\scriptsize(0.49)}$ & $90.62_{\scriptsize(0.42)}$ & $69.75_{\scriptsize(0.23)}$ & $73.86_{\scriptsize(0.21)}$ & $74.28_{\scriptsize(0.92)}$ & $77.38_{\scriptsize(0.88)}$ & $51.05_{\scriptsize(0.74)}$ & $62.82_{\scriptsize(0.65)}$ \\
\midrule
Vanilla LoRA & $82.99_{\scriptsize(0.84)}$ & $89.74_{\scriptsize(0.58)}$ & $69.79_{\scriptsize(0.11)}$ & $77.44_{\scriptsize(0.08)}$ & $64.87_{\scriptsize(0.73)}$ & $75.57_{\scriptsize(0.23)}$ & $40.01_{\scriptsize(1.32)}$ & $58.28_{\scriptsize(0.85)}$ \\
EWC-LoRA & $\textbf{87.91}_{\scriptsize(0.57)}$ & $\textbf{92.27}_{\scriptsize(0.39)}$ & $\textbf{73.46}_{\scriptsize(0.16)}$ & $\textbf{79.58}_{\scriptsize(0.10)}$ & $72.86_{\scriptsize(0.79)}$ & $78.95_{\scriptsize(0.86)}$ & $\textbf{59.89}_{\scriptsize(0.26)}$ & $\uline{68.33}_{\scriptsize(0.67)}$ \\
\bottomrule
\end{tabular}}
\label{tab:compare_main}
\end{table}

\begin{figure}[t!]
\centering
\begin{subfigure}[b]{0.24\textwidth}
\centering
\includegraphics[width=\textwidth]{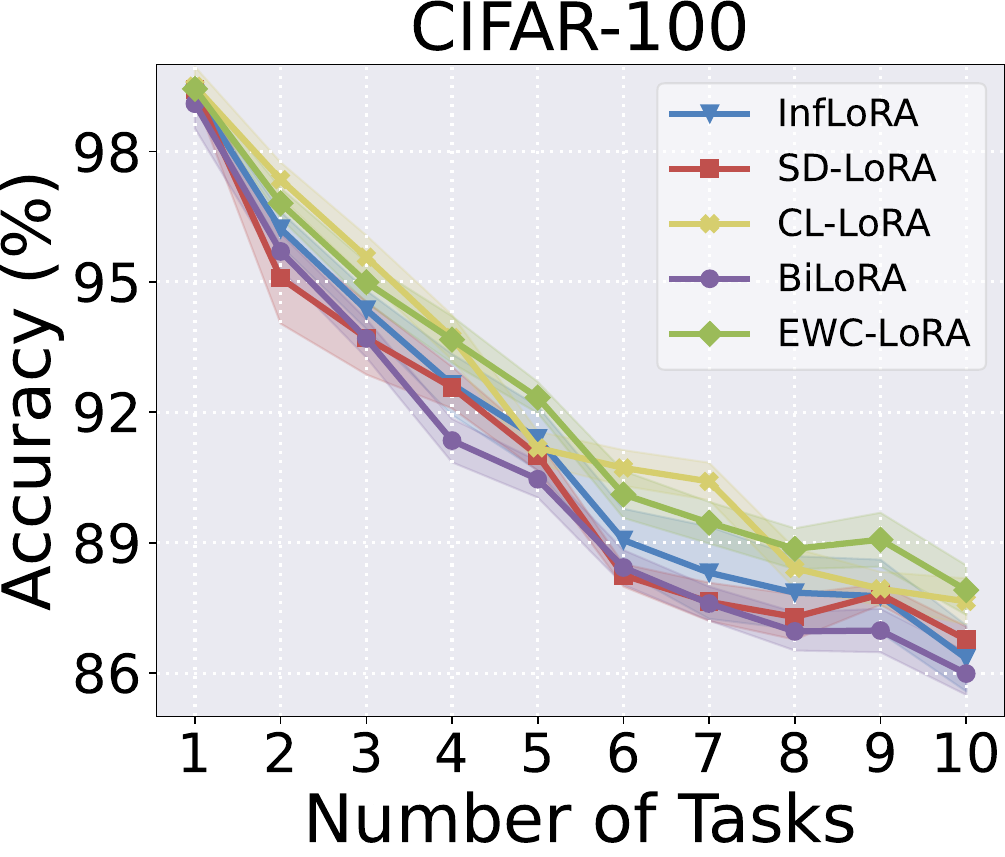}
\end{subfigure}
\hfill
\begin{subfigure}[b]{0.24\textwidth}
\centering
\includegraphics[width=\textwidth]{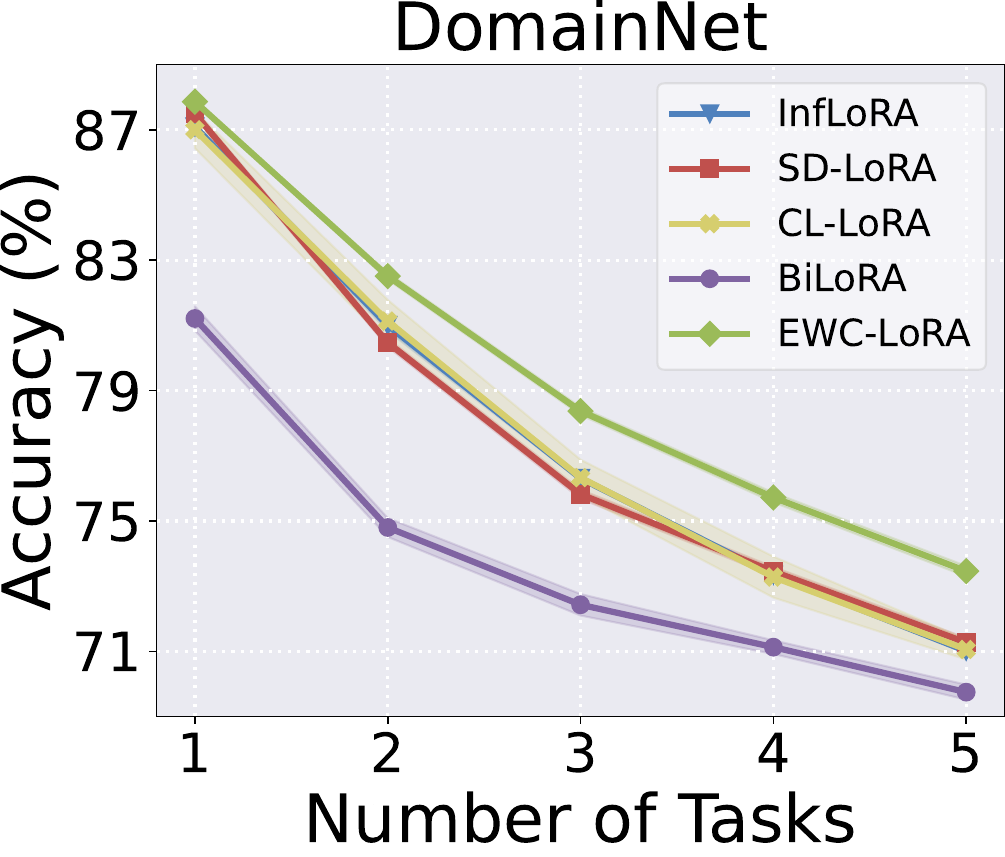}
\end{subfigure}
\hfill
\begin{subfigure}[b]{0.24\textwidth}
\centering
\includegraphics[width=\textwidth]{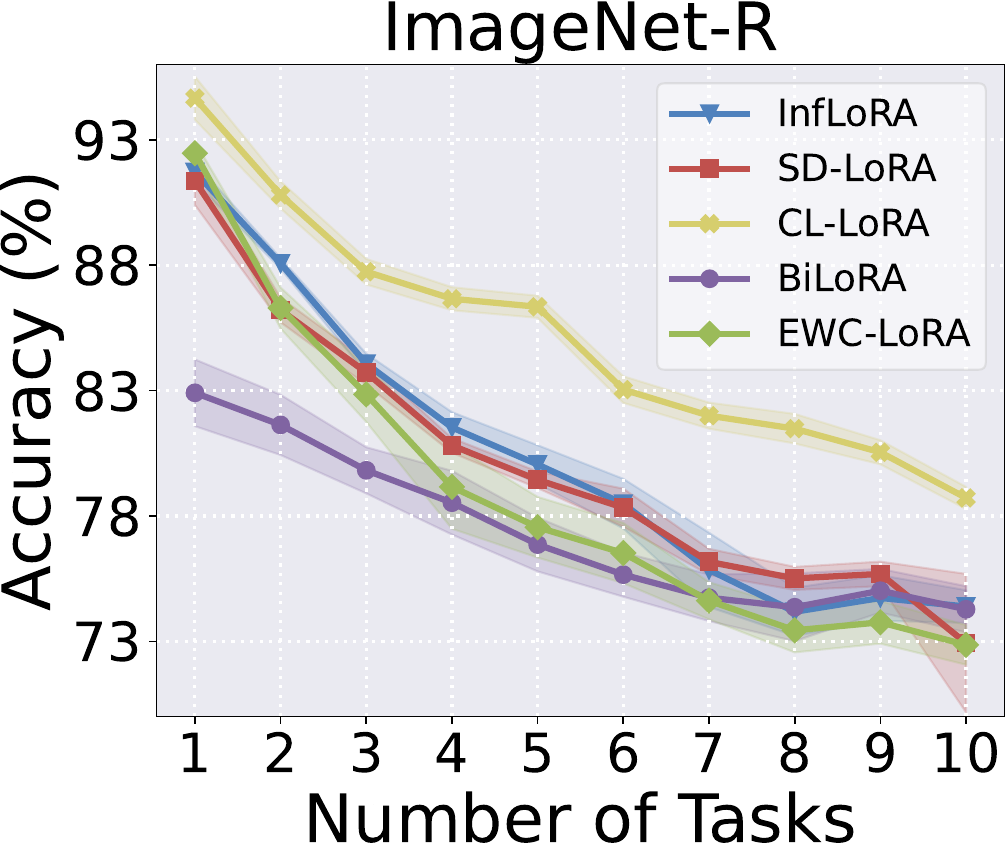}
\end{subfigure}
\hfill
\begin{subfigure}[b]{0.24\textwidth}
\centering
\includegraphics[width=\textwidth]{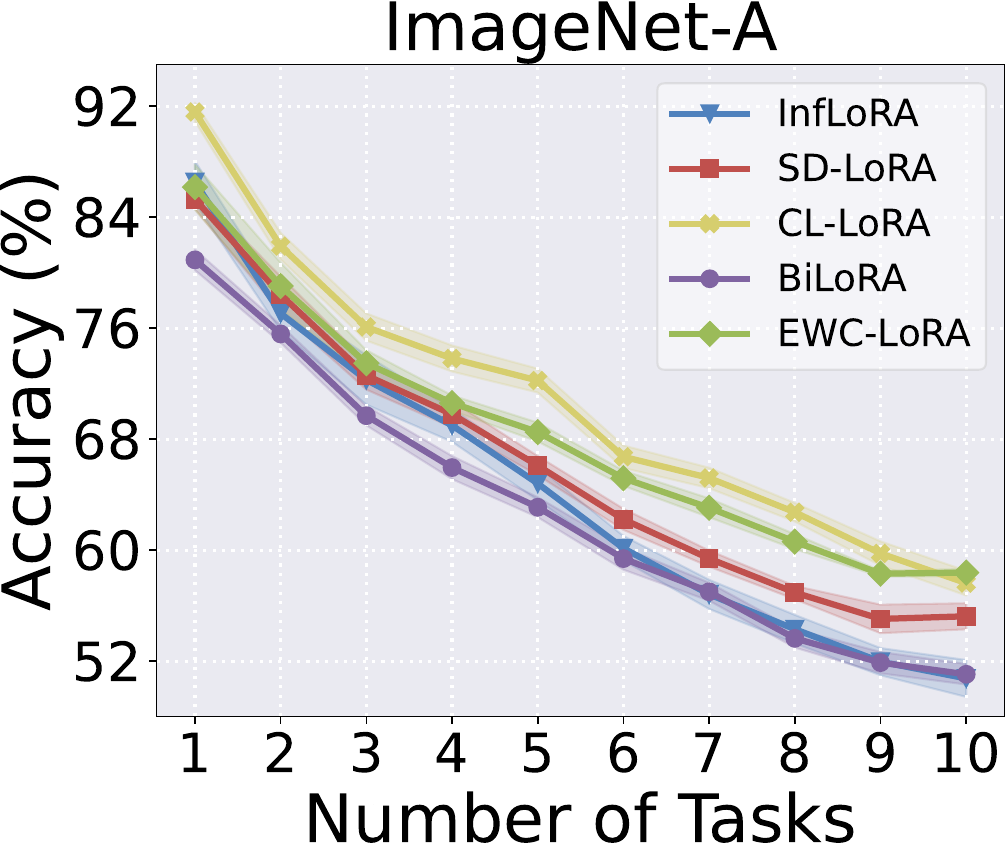}
\end{subfigure}
\caption{Task-wise performance comparison of different methods across various datasets.}
\label{fig:acc_comparison}
\end{figure}

\paragraph{Comparison with Various CL Baselines.}
We benchmark EWC-LoRA against state-of-the-art PTM-based continual learning methods, including the prompt-based methods L2P~\citep{wang2022learning}, DualPrompt~\citep{wang2022dualprompt}, and CODA-Prompt~\citep{smith2023coda}, as well as the LoRA-based methods InfLoRA~\citep{liang2024inflora}, SD-LoRA~\citep{wu2025sd}, CL-LoRA~\citep{he2025cl}, and BiLoRA~\citep{zhu2025bilora}.
We report results on 10 sequential tasks for CIFAR-100, ImageNet-R, and ImageNet-A, and on 5 tasks for DomainNet. The results are summarized in Table~\ref{tab:compare_main}. From the table, we observe that EWC-LoRA achieves the highest final accuracy on three out of four datasets. On average, across all four datasets, EWC-LoRA outperforms vanilla LoRA by a substantial margin of +8.92\%. EWC-LoRA even surpasses other LoRA-based methods that use task-specific low-rank modules, highlighting its effectiveness. On DomainNet, EWC-LoRA reduces the gap with the upper target Joint Training significantly from 5.57\% of SD-LoRA to 3.38\% for EWC-LoRA. On the challenging ImageNet-A, it reduces the gap considerably from 7.39\% for CL-LoRA to only 5.12\% for EWC-LoRA.
Figure~\ref{fig:acc_comparison} illustrates the task-wise performance of LoRA-based methods. We observe that EWC-LoRA consistently outperforms other methods throughout the entire task sequence on most datasets, while also exhibiting lower standard deviations, indicating greater stability. We further evaluate EWC-LoRA on LLMs for language CL tasks and compare it with the LoRA-based methods O-LoRA~\citep{wang2023orthogonal} and TreeLoRA~\citep{qian2025treelora}. The average accuracy is reported in Table~\ref{tab:compare_olora}. From the table, EWC-LoRA achieves comparable or even superior performance across the three task orders, demonstrating its effectiveness and applicability to LLMs.

\begin{table}[htbp]
\centering
\small
\caption{Comparison results (reported as average accuracy (\%)) on the standard language CL benchmark with the two pretrained models. Details about the task order are provided in Appendix~\ref{ap3.1}.}
\label{tab:compare_olora}
\begin{tabular}{l|l|cccc}
\toprule
\textbf{Backbone} & \textbf{Method} & \textbf{Order-1} & \textbf{Order-2} & \textbf{Order-3} & \textbf{Avg.} \\
\midrule
\multirow{2}{*}{T5-large} 
& O-LoRA   & $75.69$ & $74.92$ & $\textbf{74.40}$ & $75.01$ \\
& EWC-LoRA & $\textbf{78.01}$ & $\textbf{76.85}$ & $74.30$ & $\textbf{76.39}$ \\
\midrule
\multirow{3}{*}{LLaMA-3.2-1B-Instruct} 
& O-LoRA   & $56.96$ & $55.74$ & $67.32$ & $60.01$ \\
& TreeLoRA & $58.54$ & $56.96$ & $65.42$ & $60.30$ \\
& EWC-LoRA & $\textbf{61.17}$ & $\textbf{60.47}$ & $\textbf{67.61}$ & $\textbf{63.08}$ \\
\bottomrule
\end{tabular}
\end{table}

\paragraph{Analysis on Stability and Plasticity.} 
We evaluated both stability and plasticity for different low-rank CL methods across the four datasets. Stability reflects the ability of the model to retain previously learned knowledge, while plasticity measures its ability to adapt to new tasks. The results are reported in Table~\ref{tab:spt_analysis}. As expected, Vanilla LoRA typically exhibits the highest plasticity but the lowest stability, since it employs no strategy to mitigate catastrophic forgetting. We observe that InfLoRA emphasizes stability, while SD-LoRA favors plasticity. On CIFAR-100, EWC-LoRA matches InfLoRA in stability while achieving higher plasticity; on ImageNet-A, it matches SD-LoRA in plasticity while providing greater stability. On DomainNet and ImageNet-R, while other methods struggle to balance stability and plasticity, EWC-LoRA can achieve this trade-off effectively by tuning the regularization strength $\lambda$.

\begin{table}[htbp]
\centering
\caption{Stability ($\uparrow$) and plasticity ($\uparrow$) scores of different low-rank CL methods, reflecting how well each model retains previous knowledge and adapts to new tasks. We report the normalized form of the two metrics, which is independent of the absolute performance on the dataset.}
\resizebox{\textwidth}{!}{
\begin{tabular}{l|cc|cc|cc|cc}
\toprule
Tasks & \multicolumn{2}{c|}{\textbf{CIFAR-100}} & \multicolumn{2}{c|}{\textbf{DomainNet}} & \multicolumn{2}{c|}{\textbf{ImageNet-R}} & \multicolumn{2}{c}{\textbf{ImageNet-A}} \\
\cmidrule(){1-9}
Methods & Stability & Plasticity & Stability & Plasticity & Stability & Plasticity & Stability & Plasticity \\
\midrule
Vanilla LoRA & $87.56_{\scriptsize(0.09)}$ & $98.86_{\scriptsize(0.09)}$ & $81.29_{\scriptsize(0.34)}$ & $97.94_{\scriptsize(0.16)}$ & $78.63_{\scriptsize(1.38)}$ & $99.57_{\scriptsize(0.35)}$ & $85.56_{\scriptsize(1.82)}$ & $97.56_{\scriptsize(0.77)}$ \\
InfLoRA & $94.84_{\scriptsize(0.52)}$ & $95.80_{\scriptsize(1.00)}$ & $83.80_{\scriptsize(0.37)}$ & $97.34_{\scriptsize(0.29)}$ & $92.69_{\scriptsize(0.77)}$ & $97.33_{\scriptsize(0.46)}$ & $88.63_{\scriptsize(3.39)}$ & $72.69_{\scriptsize(3.13)}$ \\
SD-LoRA & $91.85_{\scriptsize(0.60)}$ & $98.24_{\scriptsize(0.25)}$ & $82.45_{\scriptsize(0.19)}$ & $98.69_{\scriptsize(0.15)}$ & $91.78_{\scriptsize(0.90)}$ & $95.61_{\scriptsize(0.32)}$ & $88.61_{\scriptsize(1.05)}$ & $92.13_{\scriptsize(2.77)}$ \\
EWC-LoRA & $94.45_{\scriptsize(0.59)}$ & $97.99_{\scriptsize(0.50)}$ & $91.51_{\scriptsize(0.36)}$ & $93.83_{\scriptsize(0.23)}$ & $95.62_{\scriptsize(0.42)}$ & $93.23_{\scriptsize(0.34)}$ & $89.52_{\scriptsize(1.14)}$ & $92.78_{\scriptsize(1.91)}$ \\
\bottomrule
\end{tabular}}
\label{tab:spt_analysis}
\end{table}

Specifically, for the results in Table~\ref{tab:compare_main} and~~\ref{tab:spt_analysis}, a unified regularization strength is used across datasets. With $\lambda = 10^7$, EWC-LoRA consistently achieves a favorable balance between stability and plasticity, without requiring dataset-specific tuning. 
To explore the trade-off between the two metrics, we show stability–plasticity curves for different values of the regularization strength, as illustrated in Figure~\ref{fig:spt_curve}.
Unlike other methods that typically exhibit fixed performance, EWC-LoRA provides a clear and controllable trade-off: smaller $\lambda$ promotes plasticity at the cost of stability, while larger values enhance stability but reduce plasticity. This tunability enables EWC-LoRA to achieve competitive or superior performance across a wide range of trade-off points, demonstrating its robustness and adaptability.
We also note that even when different methods achieve similar average accuracy, they can differ substantially in terms of stability and plasticity. This suggests that CL model evaluation should place greater emphasis on explicitly reporting stability and plasticity metrics.

\begin{figure*}[t!]
\centering
\begin{subfigure}[t]{0.3\textwidth}
    \centering
    \includegraphics[width=0.99\linewidth]{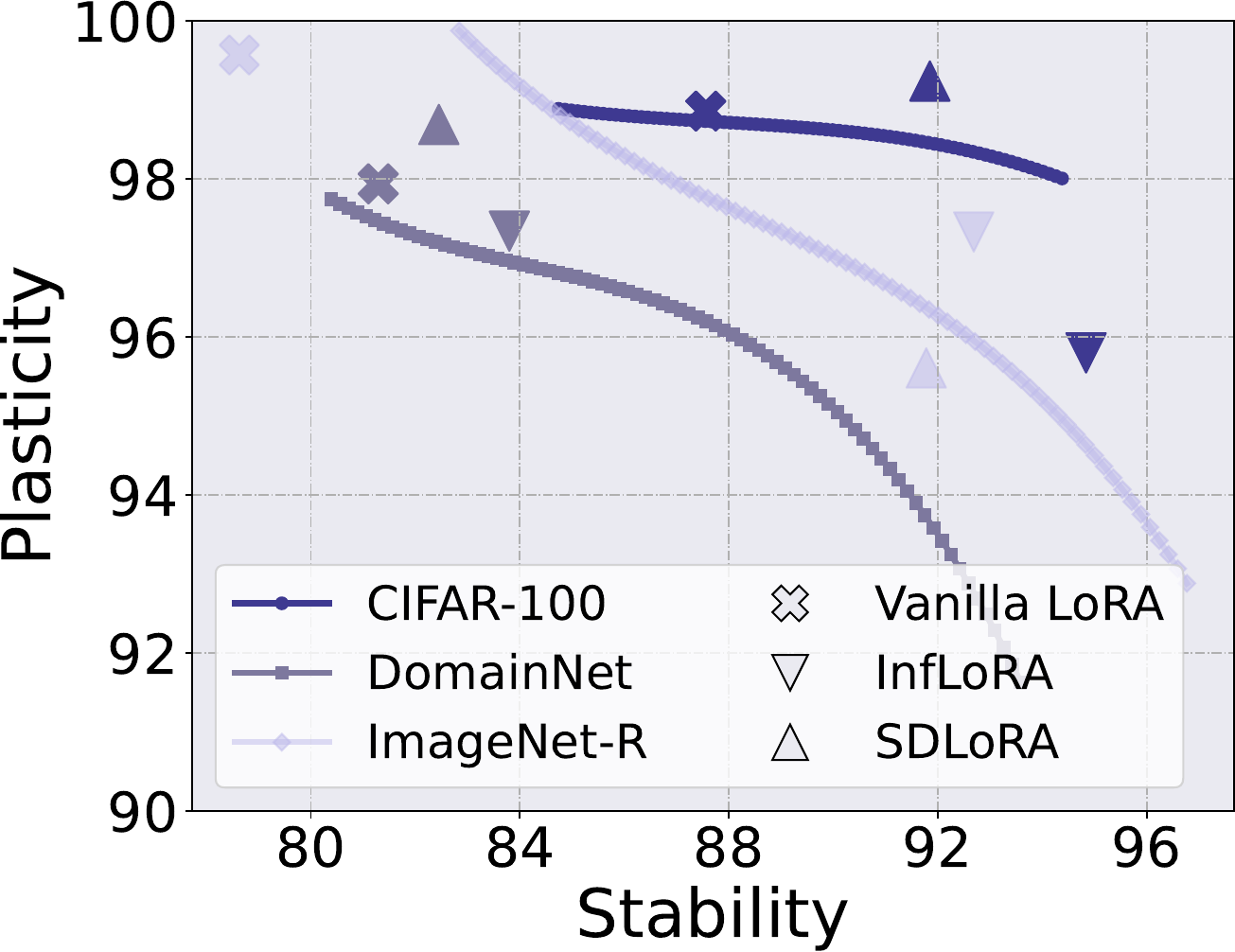}
    \caption{Stability–Plasticity curves.}
    \label{fig:spt_curve}
\end{subfigure}
\begin{subfigure}[t]{0.65\textwidth}
    \centering
    \begin{minipage}[t]{0.43\textwidth}
        \centering
        \includegraphics[width=\linewidth]{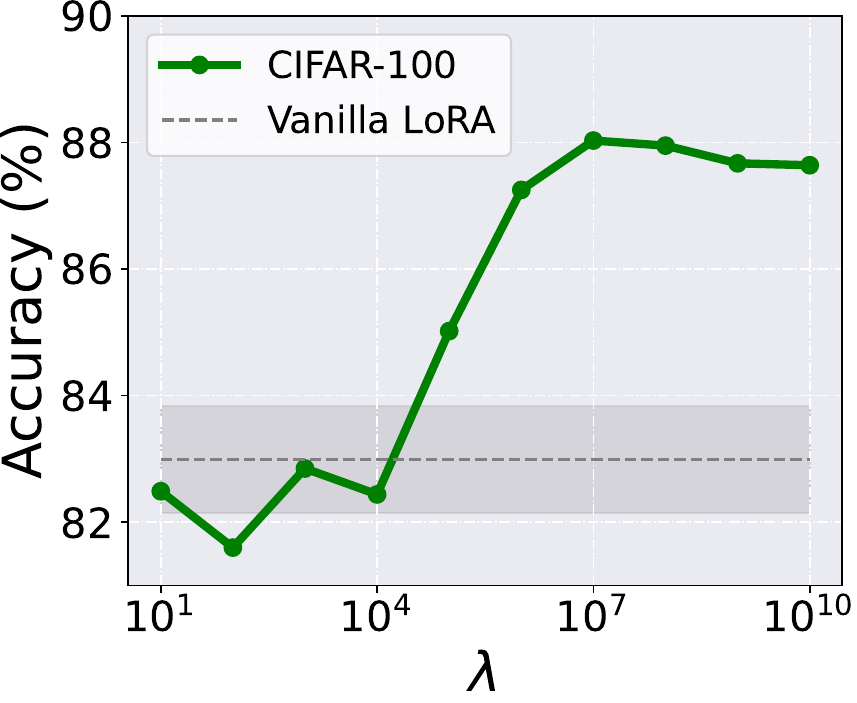}
    \end{minipage}
    \begin{minipage}[t]{0.43\textwidth}
        \centering
        \includegraphics[width=\linewidth]{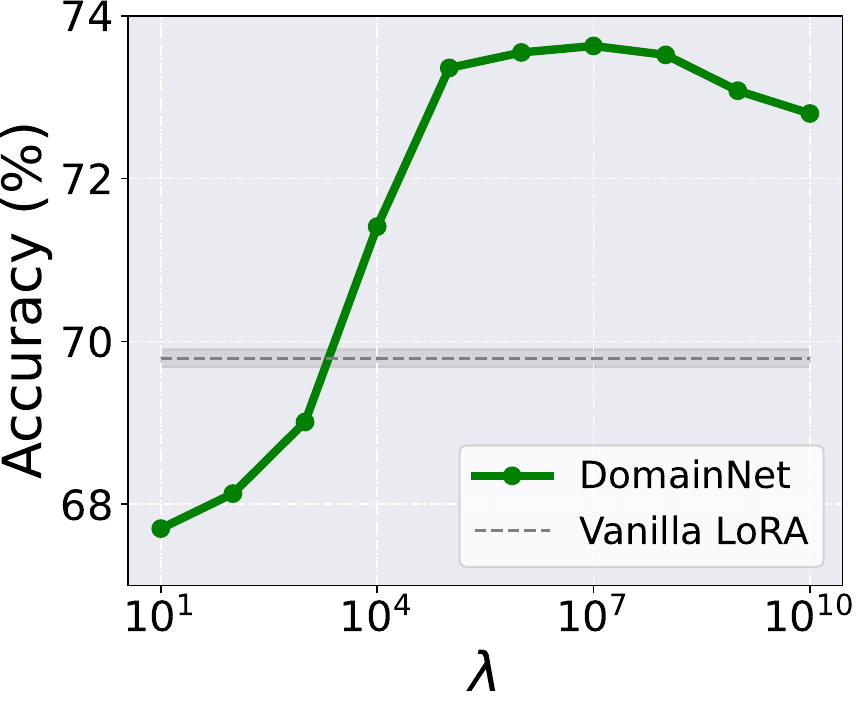}
    \end{minipage}
    \caption{Performance across a range of regularization strengths $\lambda$.}
    \label{fig:lambda_b_c}
\end{subfigure}
\caption{(a) Stability–Plasticity curves illustrating the trade-off between retaining previous knowledge and learning new tasks. (b) Performance across a range of regularization strengths $\lambda$ on CIFAR-100 and DomainNet, showing the effect of $\lambda$ on accuracy.}
\label{fig:sp_fisher}
\end{figure*}

\paragraph{Memory footprint and training time.}
The additional memory and computational cost of EWC-LoRA compared to Vanilla LoRA comes only from the FIM. After each learning step, the LoRA parameters are merged into the backbone, so the total parameter size matches that of the pre-trained model. As a result, only one shared LoRA is required for new tasks, and the memory footprint remains constant regardless of the number of tasks. 
In contrast, other low-rank CL methods require maintaining separate LoRA parameters for each task or performing more complex computations, leading to a linear increase in memory usage and training time, or to higher computational cost as the number of tasks grows.
EWC-LoRA incurs modest memory overhead during training while improving computational efficiency, achieving comparable or even superior performance.

\begin{table}[ht]
\small
\centering
\caption{Comparison of different methods in terms of memory cost and training time. Memory usage is measured on the Quadro RTX 6000 GPU with a batch size of 128. Training time is reported as the average time required to train a single task.}
\begin{tabular}{l|c|l|l|l|l}
\toprule
Methods & Memory & \textbf{CIFAR-100} & \textbf{DomainNet} & \textbf{ImageNet-R} & \textbf{ImageNet-A} \\
\midrule
Vanilla LoRA & $\sim$ 18 GB & 10m22s $\pm$ 7s & 32m8s $\pm$ 44s & 13m32s $\pm$ 105s & 0m55s $\pm$ 21s \\
InfLoRA & $\sim$ 20 GB & 11m9s $\pm$ 4s & 42m25s $\pm$ 68s & 14m33s $\pm$ 107s & 1m16s $\pm$ 22s \\
SD-LoRA & $\sim$ 37 GB & 12m16s $\pm$ 86s & 34m18s $\pm$ 160s & 22m53s $\pm$ 232s & 1m56s $\pm$ 36s \\
EWC-LoRA & $\sim$ 24 GB & 10m31s $\pm$ 3s & 33m10s $\pm$ 50s & 13m42s $\pm$ 104s & 0m55s $\pm$ 21s \\
\bottomrule
\end{tabular}
\label{tab:computation_full}
\end{table}

The memory cost and training time are reported in Table~\ref{tab:computation_full}. Memory usage is measured on two Quadro RTX 6000 GPUs with a batch size of 128. Training time is recorded as the average duration to train a single task, presented along with the standard deviation. Notably, the training time of EWC-LoRA is nearly identical to that of Vanilla LoRA, demonstrating that the additional computations introduced by Fisher estimation in the low-rank space incur relatively small overhead.

\subsection{Ablation study}

\paragraph{Results Across Varied Task Lengths.}
We further examine the performance of different low-rank CL methods under varying task lengths. As a complementary study to Table~\ref{tab:compare_main}, we split CIFAR-100 and ImageNet-R into 5-task and 20-task sequences, respectively. As reported in Table~\ref{tab:compare_length}, the improvement is most evident on CIFAR-100, where EWC-LoRA achieves the highest overall accuracy among all methods. In contrast, the gains on ImageNet-R are more moderate, which may be attributed to the domain shift in ImageNet-R, potentially limiting the effectiveness of regularization-based approaches for continual adaptation. Additional results are provided in the Appendix~\ref{ap3}.

\begin{table}[t!]
\centering
\begin{minipage}[t]{.48\linewidth}
\centering
\small
\captionof{table}{Final accuracy on CIFAR-100 and ImageNet-R for task lengths of 5 and 20 tasks.}
\begin{tabular}{l|cc|cc}
\toprule
Tasks & \multicolumn{2}{c|}{\textbf{CIFAR-100}} & \multicolumn{2}{c}{\textbf{ImageNet-R}} \\
\cmidrule(){1-5}
Methods & $\overline{A}_{5}$ & $\overline{A}_{20}$ & $\overline{A}_{5}$ & $\overline{A}_{20}$ \\
\midrule
Vanilla LoRA & $87.15$ & $74.78$ & $70.15$ & $56.17$ \\
InfLoRA      & $\uline{89.45}$ & $81.77$ & $\textbf{77.37}$ & $69.63$ \\
SD-LoRA      & $89.15$ & $\uline{83.57}$ & $74.90$ & $\textbf{72.26}$ \\
EWC-LoRA     & $\textbf{89.98}$ & $\textbf{85.46}$ & $\uline{76.36}$ & $\uline{70.18}$ \\
\bottomrule
\end{tabular}
\label{tab:compare_length}
\end{minipage}
\hfill
\begin{minipage}[t]{0.48\linewidth}
\centering
\small
\captionof{table}{Best performance with Exact Fisher (500 random samples) and Empirical Fisher on CIFAR-100.}
\begin{tabular}{l|c|c}
\toprule
\textbf{Estimation} & \textbf{Exact (n=500)} & \textbf{Empirical} \\
\midrule
$\overline{A}_{10}$ & $88.28$ & $87.91$ \\
Avg. & $92.76$ & $92.27$ \\
Best $\lambda$ & $\lambda = 10^5$ & $\lambda = 10^7$ \\
Training Time & $\sim$ 14m & $\sim$ 11m \\
Memory Cost & $\sim$ 20 GB & $\sim$ 24GB \\
\bottomrule
\end{tabular}
\label{tab:fisher_prefer}
\end{minipage}
\end{table}

\paragraph{Ablation on Regularization Strength $\lambda$.}
We evaluate performance under varying values of the regularization strength $\lambda$, ranging from $10$ to $10^{10}$. The results are presented in Figures~\ref{fig:lambda_b_c}. We use empirical Fisher in all experiments. From the figures, we observe that setting $\lambda$ to $10^{7}$ allows EWC-LoRA to consistently achieve favorable performance in terms of accuracy, stability, and plasticity. This suggests that an appropriate balance between stability and plasticity can be effectively maintained without fine-grained tuning. This finding facilitates the use of a unified regularization strength across datasets, eliminating the need for dataset-specific tuning.

\paragraph{Estimation of Fisher Information Matrix.} 

In the context of low-rank adaptation, we examine the trade-off between performance and the computational cost of estimating the Fisher Information Matrix, as discussed by~\cite{van2025computation}. In general, the Exact Fisher outperforms the Empirical Fisher, requiring a smaller regularization strength, but higher computational costs. To reduce computational costs, the Exact Fisher can be computed using only a small batch of data, which still yields superior results compared to the Empirical Fisher while significantly reducing computational overhead. We use $500$ randomly selected samples to estimate the Exact Fisher, and the comparison results are shown in Table~\ref{tab:fisher_prefer}. We report both the training time for a single task and the corresponding memory usage. Additional results comparing different strategies for estimating the Fisher matrix are provided in Appendix~\ref{ap3}.

\paragraph{Different Fisher Estimation Strategies.}

We compare three different Fisher estimation strategies, based on the discussion in Section~\ref{sec:3.2}. ``Precomputed $\mathbf{F}_{\mathbf{W}}$'' refers to using a precomputed FIM in full parameter space $\mathbf{W}$ to preserve prior knowledge. Instead of estimating over a large-scale dataset, we consider a dataset-based Fisher, computed using the entire dataset that will be learned sequentially. ``Separate $\mathbf{F}_{\mathbf{A}}, \mathbf{F}_{\mathbf{B}}$'' denotes estimating the Fisher separately for the low-rank matrices and regularizing them accordingly, while ``$\mathbf{F}_{\Delta \mathbf{W}}$'' denotes estimating the Fisher in the full-dimensional space as $\mathbf{W}$. The results are shown in Table~\ref{tab:ablation}. 
We observe that the precomputed Fisher results in the lowest plasticity, which may be caused by undesirable sensitivity arising from the frozen weights. Moreover, applying separate regularization in the low-rank space also improves performance but has noticeable drawbacks in terms of plasticity. This may be due to the joint contribution of the low-rank factors, which imposes stronger constraints on the parameters.

\section{Conclusion and Discussion}

In this work, we revisit weight regularization in low-rank continual learning (CL) as a means to mitigate catastrophic forgetting.
Using Elastic Weight Consolidation (EWC) as a canonical example, we discuss the main considerations about applying regularization in the low-rank space and propose EWC-LoRA, a computational- and memory-efficient solution for low-rank CL with large pre-trained models. This work aims to offer insights that may guide the broader application of regularization techniques in parameter-efficient continual learning.

A key limitation of regularization-based low-rank CL methods lies in two aspects. The first is the degradation in the accuracy of Fisher estimation as the task sequence grows longer. This can be alleviated by rehearsal-based Fisher estimation, as discussed in~\cite{wu2024meta} and our Appendix. The second is their sensitivity to dataset complexity. When combined with regularization techniques, it is important to carefully allocate the low-rank learnable space for each task. This sensitivity also helps explain why, on ImageNet-R, EWC-LoRA shows lower plasticity than methods based on task-specific modules. Consequently, a promising direction for future work is to investigate the performance of regularization techniques in domain-incremental settings and on more complex continual learning tasks.

\section{Acknowledgments}
This paper was supported by European Union’s Horizon Europe research and innovation programme under grant agreement number 101214398 (ELLIOT). We acknowledge project Grant AIA2025-163919-C52 and PID2022-143257NB-I00 funded by MICIU/AEI/ 10.13039/501100011033 and FEDER.
Yaoyue Zheng acknowledges the China Scholarship Council (CSC) No.202406280387.
This work was supported by the National Natural Science Foundation of China under Grant Nos. U24A20252 and 62125305, Fundamental and Interdisciplinary Disciplines Breakthrough Plan of the Ministry of Education of China under Grant No. JYB2025XDXM504, and Guangdong Major Project of Basic and Applied Basic Research under Grant No. 2023B0303000009.

\section{Reproducibility statement}

The theoretical analysis and complete proofs of the main results are provided in Appendix \ref{ap1}. The implementation details of our method, including model architecture, training procedures, and hyperparameters, are described in Section~\ref{sec3} of the main paper and Appendix~\ref{ap2}. All datasets used in our experiments are publicly available. Code is available at: \url{https://github.com/yaoyz96/low-rank-cl}.

\bibliography{iclr2026_conference}
\bibliographystyle{iclr2026_conference}

\appendix
\clearpage
\section{Appendix}

\subsection{Theoretical Analysis}
\label{ap1}

\paragraph{Setup.} 
In low-rank adaptation, the adapted weight matrix $\mathbf{W} \in \mathbb{R}^{d_{O} \times d_{I}}$ is reparameterized as $\mathbf{W} = \mathbf{W}_0 + \Delta \mathbf{W} = \mathbf{W}_0 + \mathbf{AB}$, where $\mathbf{W}_0$ denotes the pre-trained weight, which remains frozen during fine-tuning. The trainable parameters are the low-rank matrices $\mathbf{A} \in \mathbb{R}^{d_{O} \times r}$ and $\mathbf{B} \in \mathbb{R}^{r \times d_{I}}$, such that $\Delta \mathbf{W} = \mathbf{AB}$ is the only trainable update to the model. Note that the learned LoRA modules on the previous task are merged back into the base weights before starting the new task. The optimal weight matrix is indicated as $\mathbf{W}^*$.


\subsubsection{Regularization under Different Parameterizations}
\label{pro1}

\begin{proposition}\label{prop:regularization}
Let $\Delta \mathbf{W} \in \mathbb{R}^{d_{O} \times d_{I}}$ be a model parameter matrix factorized as $\Delta \mathbf{W} = \mathbf{A}\mathbf{B}$, with $\mathbf{A} \in \mathbb{R}^{d_{O} \times r}$, $\mathbf{B} \in \mathbb{R}^{r \times d_{O}}$. Define the EWC regularization term in the full-space as $\mathcal{R}_{\mathbf{W}}$, and in the low-rank parameter space as $\mathcal{R}_{\mathbf{A},\mathbf{B}}$. Under general conditions, $\mathcal{R}_{\Delta \mathbf{W}} \not\equiv \mathcal{R}_{\mathbf{A},\mathbf{B}}$.
\end{proposition}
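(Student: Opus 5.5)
We make both regularizers explicit and then show they are different functions by exhibiting a concrete pair of factors on which they disagree. Write $\mathbf{F}_{\Delta\mathbf{W}}$ for the diagonal Fisher in the full space, with entries $F_{ij} = \mathbb{E}[(\partial \log p/\partial \Delta W_{ij})^2]$. The full-space penalty (as in Eq.~\ref{eq:ewc_loss}) is
\[
\mathcal{R}_{\Delta\mathbf{W}}(\mathbf{A},\mathbf{B}) = \sum_{i,j} F_{ij}\Big(\sum_{k} A_{ik}B_{kj}\Big)^2 ,
\]
which is quartic in the factors. The separate low-rank penalty, following \cite{wei2025online}, is
\[
\mathcal{R}_{\mathbf{A},\mathbf{B}} = \sum_{i,k} F^{\mathbf{A}}_{ik}(A_{ik}-A^*_{ik})^2 + \sum_{k,j} F^{\mathbf{B}}_{kj}(B_{kj}-B^*_{kj})^2 ,
\]
which is quadratic in the factors.

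The key link between the two comes from the chain rule applied to $\Delta W_{ij}=\sum_k A_{ik}B_{kj}$:
\[
\frac{\partial \log p}{\partial A_{ik}} = \sum_j g_{ij} B_{kj}, \qquad g_{ij} = \frac{\partial \log p}{\partial \Delta W_{ij}} .
\]
Consequently
\[
F^{\mathbf{A}}_{ik} = \mathbb{E}\Big[\Big(\sum_j g_{ij}B_{kj}\Big)^2\Big],
\]
and analogously for $\mathbf{B}$. This expression mixes cross-covariances $\mathbb{E}[g_{ij}g_{ij'}]$ that the diagonal $\mathbf{F}_{\Delta\mathbf{W}}$ discards. It also depends on the current factor values, so the low-rank Fisher is not a fixed reweighting of the full-space one.

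The argument then proceeds in three steps.

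\textbf{Step 1 (degree mismatch).} Scale $(\mathbf{A},\mathbf{B}) \mapsto (s\mathbf{A}, s\mathbf{B})$ around the reference point. $\mathcal{R}_{\Delta\mathbf{W}}$ then scales as $s^4$, while $\mathcal{R}_{\mathbf{A},\mathbf{B}}$ (with the reference taken at zero) scales as $s^2$. Two such functions cannot coincide identically unless both vanish, which happens only when all the relevant Fisher entries are zero. This excludes that degenerate case, and it is exactly what ``general conditions'' will mean.

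\textbf{Step 2 (gauge invariance).} Use the invariance $\mathbf{AB} = (\mathbf{A}\mathbf{G})(\mathbf{G}^{-1}\mathbf{B})$ for invertible $\mathbf{G}$. Along this orbit $\mathcal{R}_{\Delta\mathbf{W}}$ is constant. $\mathcal{R}_{\mathbf{A},\mathbf{B}}$, by contrast, generally varies; for instance, with $\mathbf{G} = c\mathbf{I}$ the two terms scale as $c^2$ and $c^{-2}$. So the two penalties induce different geometries, not merely different constants.

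\textbf{Step 3 (explicit counterexample).} Take $r=1$ and $d_O=d_I=1$, with scalar Fisher $f>0$ and positive low-rank Fishers. Then $\mathcal{R}_{\Delta\mathbf{W}} = f a^2 b^2$, while $\mathcal{R}_{\mathbf{A},\mathbf{B}} = f_a a^2 + f_b b^2$ (taking the reference at zero for simplicity). Evaluating at $(a,b)=(1,0)$ gives $0$ versus $f_a>0$. This exhibits $\not\equiv$ directly, and the construction embeds into any dimensions by zero-padding.

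The main obstacle I expect is that the statement is informal. The notion of ``low-rank EWC'' must be pinned down, including the reference point for $\mathbf{A},\mathbf{B}$ and the precise meaning of ``general conditions.'' I would fix these as above: the reference is either the zero update or the previous factors, and the Fisher entries are nonzero. Then I would verify that the counterexample is robust to the choice of reference point, by repeating Step 1 with $(\mathbf{A}^*,\mathbf{B}^*)$ shifted; a polynomial-degree comparison still separates the two penalties.
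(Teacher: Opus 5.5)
Your proposal is correct, but it reaches the conclusion by a different route than the paper.

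\textbf{The paper's route.} The paper argues locally. It linearizes the factorization, $\operatorname{vec}(\Delta\mathbf{W})\approx J_A(\mathbf{B})\Delta\mathbf{a}+J_B(\mathbf{A})\Delta\mathbf{b}$, with $J_A(\mathbf{B})=\mathbf{B}^\top\otimes\mathbf{I}_{d_O}$ and $J_B(\mathbf{A})=\mathbf{I}_{d_I}\otimes\mathbf{A}$. It substitutes this into the full-space quadratic form with the (non-diagonal) $\mathbf{F}_{\mathbf{W}}$. It then uses the chain rule to recognize $\mathbf{F}_{\mathbf{A}}\approx J_A^\top\mathbf{F}_{\mathbf{W}}J_A$ and $\mathbf{F}_{\mathbf{B}}\approx J_B^\top\mathbf{F}_{\mathbf{W}}J_B$ as the two diagonal blocks. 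Separate regularization therefore reproduces everything except the cross term $\Delta\mathbf{a}^\top J_A(\mathbf{B})^\top\mathbf{F}_{\mathbf{W}}J_B(\mathbf{A})\Delta\mathbf{b}$, and the two agree when only one factor moves.

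\textbf{Your route.} You compare the exact penalties globally: quartic versus quadratic, invariance versus non-invariance under $(\mathbf{A},\mathbf{B})\mapsto(\mathbf{A}\mathbf{G},\mathbf{G}^{-1}\mathbf{B})$, plus a scalar example.

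\textbf{What each buys.} Your route buys rigor. It gives ``general conditions'' a precise meaning, namely $\mathbf{F}_{\Delta\mathbf{W}}\neq\mathbf{0}$. It uses no $\approx$ and needs no unproved claim that the cross term is nonzero. It also covers a zero reference $(\mathbf{A}^*,\mathbf{B}^*)=(\mathbf{0},\mathbf{0})$, where $J_A$ and $J_B$ vanish and the discrepancy appears only at fourth order, beyond any linearization.

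The paper's route buys diagnosis. Your degree argument would separate \emph{every} quadratic penalty in the factors from the full-space one. So it does not show that separate regularization is wrong at the second-order (Laplace) level, where EWC is actually justified. The paper's computation shows that even at that level it drops the $\mathbf{A}$--$\mathbf{B}$ interaction block, which is the mechanism the paper relies on. Your remark that a diagonal $F^{\mathbf{A}}$ mixes cross-covariances $\mathbb{E}[g_{ij}g_{ij'}]$, which a diagonal $\mathbf{F}_{\Delta\mathbf{W}}$ discards, is an additional source of mismatch the paper does not discuss.

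\textbf{A small inconsistency in Step 3.} Suppose the reference factors are zero and $f_a,f_b$ are computed by your own chain-rule formula at that reference. Then $f_a=\mathbb{E}[(g\,b^*)^2]=0$ and likewise $f_b=0$, so the point $(a,b)=(1,0)$ gives $0$ versus $0$. Evaluate at $(1,1)$ instead, which gives $f$ versus $0$, or rely on Step 1 alone, which needs only $f>0$. Positivity of the low-rank Fishers should not be part of your ``general conditions.''
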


\medskip

\paragraph{\textit{Proof.}}
Let $\operatorname{vec}(\cdot)$ denote the vectorization operator, and $\otimes$ the Kronecker product. The following identity holds:
\begin{equation}
\operatorname{vec}(\mathbf{A}\mathbf{B}) = (\mathbf{B}^\top \otimes \mathbf{I}_{d_O}) \operatorname{vec}(\mathbf{A}) = (\mathbf{I}_{d_I} \otimes \mathbf{A}) \operatorname{vec}(\mathbf{B})
\notag
\end{equation}

For simplicity, we define $\mathbf{a} = \operatorname{vec}(\mathbf{A}) \in \mathbb{R}^{d_O r}$, $\mathbf{b} = \operatorname{vec}(\mathbf{B}) \in \mathbb{R}^{r d_I}$. Define the Jacobians:
\begin{equation}
J_A(\mathbf{B}) := \mathbf{B}^\top \otimes \mathbf{I}_{d_O} \in \mathbb{R}^{d_O d_I \times d_O r}, \quad
J_B(\mathbf{A}) := \mathbf{I}_{d_I} \otimes \mathbf{A} \in \mathbb{R}^{d_O d_I \times r d_I}
\notag
\end{equation}

By the standard vectorization identity:
\begin{equation}
\operatorname{vec}(\Delta \mathbf{W}) = J_A(\mathbf{B}) \mathbf{a} = J_B(\mathbf{A}) \mathbf{b}
\notag
\end{equation}

The Fisher Information Matrix estimated in the full-dimensional space is defined as:
\begin{equation}
\mathbf{F}_{\mathbf{W}} = \mathbb{E}_{x \sim \mathcal{D}} \left[ \operatorname{vec}(\nabla_{\mathbf{W}} \mathcal{L}) \operatorname{vec}(\nabla_{\mathbf{W}} \mathcal{L})^{\top} \right]
\notag
\end{equation}

Although the full-space regularization $\mathcal{R}_{\mathbf{W}}$ can be expressed equivalently using either the $\mathbf{A}$ or $\mathbf{B}$ Jacobian, the update directions for $\mathbf{A}$ and $\mathbf{B}$ remain coupled. Thus, we consider a first-order approximation of $\operatorname{vec}(\Delta \mathbf{W})$ as a function of both $\mathbf{a}$ and $\mathbf{b}$:
\begin{equation}
\operatorname{vec}(\Delta \mathbf{W}) \approx J_A(\mathbf{B}) \Delta \mathbf{a} + J_B(\mathbf{A}) \Delta \mathbf{b}
\notag
\end{equation}

The EWC regularization term in the full-space is defined as:
\begin{equation}
\begin{aligned}
\mathcal{R}_{\mathbf{W}}
&\;=\; \frac{1}{2}(\operatorname{vec}(\mathbf{W}) - \operatorname{vec}(\mathbf{W}^*))^\top
\mathbf{F}_{\mathbf{W}}
(\operatorname{vec}(\mathbf{W}) - \operatorname{vec}(\mathbf{W}^*)) \\
&\;=\; \frac{1}{2} \operatorname{vec}(\Delta \mathbf{W})^\top
\mathbf{F}_{\mathbf{W}}
\operatorname{vec}(\Delta \mathbf{W}) \\
&\;=\; \frac{1}{2} \Delta \mathbf{a}^{\top} J_A (\mathbf{B})^{\top} \mathbf{F}_{\mathbf{W}} J_A (\mathbf{B})\Delta \mathbf{a} + \frac{1}{2} \Delta \mathbf{b}^{\top} J_B(\mathbf{A})^{\top} \mathbf{F}_{\mathbf{W}} J_B (\mathbf{A})\Delta \mathbf{b} \\
&\quad + \; \Delta \mathbf{a}^{\top} J_A (\mathbf{B})^{\top} \mathbf{F}_{\mathbf{W}} J_B (\mathbf{A}) \Delta \mathbf{b}
\notag
\end{aligned}
\end{equation}

If Fisher regularization is applied separately to $\mathbf{A}$ and $\mathbf{B}$, we have two Fisher matrices:
\begin{equation}
\begin{aligned}
\mathbf{F}_{\mathbf{A}} &\;=\; \mathbb{E}_{x \sim \mathcal{D}} \left[ \operatorname{vec}(\nabla_{\mathbf{A}} \mathcal{L}) \operatorname{vec}(\nabla_{\mathbf{A}} \mathcal{L})^\top \right] \\
\mathbf{F}_{\mathbf{B}} &\;=\; \mathbb{E}_{x \sim \mathcal{D}} \left[ \operatorname{vec}(\nabla_{\mathbf{B}} \mathcal{L}) \operatorname{vec}(\nabla_{\mathbf{B}} \mathcal{L})^\top \right]
\notag
\end{aligned}
\end{equation}

By the chain rule, the gradients in low-rank factor space are related to the full-space gradient by:
\begin{equation}
\begin{aligned}
\nabla_{\mathbf{A}} \mathcal{L} &\;=\; \frac{\partial \mathcal{L}}{\partial \mathbf{W}} \cdot \frac{\partial \mathbf{W}}{\partial \mathbf{A}} \;=\; \nabla_{\mathbf{W}} \mathcal{L} J_A(\mathbf{B})^\top\\
\nabla_{\mathbf{B}} \mathcal{L} &\;=\; \frac{\partial \mathcal{L}}{\partial \mathbf{W}} \cdot \frac{\partial \mathbf{W}}{\partial \mathbf{B}} \;=\; J_B(\mathbf{A})^\top \nabla_{\mathbf{W}} \mathcal{L}
\notag
\end{aligned}
\end{equation}

Hence, the Fisher Information Matrices can be expressed as:
\begin{equation}
\mathbf{F}_{\mathbf{A}} \approx J_A(\mathbf{B})^\top \mathbf{F}_{\mathbf{W}} J_A(\mathbf{B}), \quad
\mathbf{F}_{\mathbf{B}} \approx J_B(\mathbf{A})^\top \mathbf{F}_{\mathbf{W}} J_B(\mathbf{A})
\notag
\end{equation}

The EWC regularization term in the low-rank parameter space is defined as:
\begin{equation}
\begin{aligned}
\mathcal{R}_{\mathbf{A},\mathbf{B}} 
& = \frac{1}{2} \Delta \mathbf{a}^\top \mathbf{F}_{\mathbf{A}} \Delta \mathbf{a} + \frac{1}{2} \Delta \mathbf{b}^\top \mathbf{F}_{\mathbf{B}} \Delta \mathbf{b} \\
\notag
\end{aligned}
\end{equation}

The regularization term $\mathcal{R}_{\mathbf{A},\mathbf{B}}$ captures independent penalties in the factor space. Unless the cross-covariance term $\mathbf{F}_{\mathbf{A}, \mathbf{B}}$ is explicitly computed and retained, the interaction between $\mathbf{A}$ and $\mathbf{B}$ is ignored. Consequently, the two regularizations cannot be equal, except in special cases such as when only $\mathbf{A}$ or $\mathbf{B}$ is updated. In contrast, estimating $\mathbf{F}_{\mathbf{W}}$ in the full-dimensional space avoids this issue and captures the true sensitivity of the model to perturbations. Therefore, regularization in the full space better preserves the true geometry and parameter importance induced by the low-rank update, providing a more faithful and effective constraint on \(\mathbf{A}\) and \(\mathbf{B}\).

\subsubsection{Fisher Information Consistency with the Trainable Parameter Space}
\label{pro2}

\begin{proposition}
Consider a parameterization:
\begin{equation}
\mathbf{W} = \mathbf{W}_0 + \mathbf{A}\mathbf{B}
\notag
\end{equation}
where $\mathbf{W}_0$ is fixed and only $\mathbf{A}, \mathbf{B}$ are trainable. Let
\begin{equation}
\theta = \begin{bmatrix}
\mathrm{vec}(\mathbf{A}) \\
\mathrm{vec}(\mathbf{B})
\end{bmatrix},
\quad
\mathbf{w} = \mathrm{vec}(\mathbf{W}).
\notag
\end{equation}

Then, the Fisher Information Matrix with respect to $\theta$ satisfies
\begin{equation}
\label{eq:F_theta}
\mathbf{F}_\theta \;=\; \mathbf{J}^\top \mathbf{F}_\mathbf{W} \mathbf{J},
\notag
\end{equation}
where $\mathbf{F}_\mathbf{W}$ is the Fisher Information Matrix with respect to $\mathbf{w}$, and $\mathbf{J} = \partial \mathbf{w} / \partial \theta$ is the Jacobian of the reparameterization.
\end{proposition}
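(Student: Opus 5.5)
The identity is the standard reparameterization rule for the Fisher, obtained by applying the chain rule to the score inside the Fisher's defining expectation. Since $\mathbf{W}_0$ is frozen, the model's predictive distribution depends on $\theta$ only through $\mathbf{w}(\theta) = \mathrm{vec}(\mathbf{W}_0 + \mathbf{A}\mathbf{B})$, so $p_\theta(y\mid x) = p_{\mathbf{w}(\theta)}(y\mid x)$.

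First, I would fix the definitions, consistent with Eq.~\ref{eq:fisher} but without the diagonal truncation:
\[
\mathbf{F}_\theta = \mathbb{E}_{x}\mathbb{E}_{y\sim p_\theta}\big[\nabla_\theta \log p\,(\nabla_\theta \log p)^\top\big], \qquad \mathbf{F}_\mathbf{W} = \mathbb{E}_{x}\mathbb{E}_{y\sim p_\mathbf{w}}\big[\nabla_\mathbf{w} \log p\,(\nabla_\mathbf{w} \log p)^\top\big].
\]
Both are evaluated at the same model, i.e. with $\mathbf{w}=\mathbf{w}(\theta)$. This makes the sampling distributions for $x$ and $y$ identical in the two expressions.

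Second, I would compute the Jacobian explicitly. Using the vectorization identity from the proof of Proposition~\ref{prop:regularization},
\[
\mathbf{J} = \big[\, J_A(\mathbf{B}) \;\; J_B(\mathbf{A}) \,\big] = \big[\, \mathbf{B}^\top\otimes \mathbf{I}_{d_O} \;\; \mathbf{I}_{d_I}\otimes \mathbf{A} \,\big] \in \mathbb{R}^{d_Od_I\times (d_Or + rd_I)}.
\]
The term $\mathbf{W}_0$ contributes nothing to $\mathbf{J}$ because it is constant in $\theta$. This is exactly the point the proposition is making: directions tied to the frozen weights do not enter the trainable-space Fisher. The chain rule for the scalar function $\theta\mapsto \log p_{\mathbf{w}(\theta)}(y\mid x)$ then gives
\[
\nabla_\theta \log p = \mathbf{J}^\top \nabla_\mathbf{w}\log p .
\]

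Third, I would substitute this into the definition of $\mathbf{F}_\theta$:
\[
\nabla_\theta \log p\,(\nabla_\theta \log p)^\top = \mathbf{J}^\top\, \nabla_\mathbf{w}\log p\,(\nabla_\mathbf{w}\log p)^\top\, \mathbf{J}.
\]
Because $\mathbf{J}$ depends on $\theta$ but not on $(x,y)$, it factors out of both expectations by linearity, which yields $\mathbf{F}_\theta=\mathbf{J}^\top\mathbf{F}_\mathbf{W}\mathbf{J}$.

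The same argument applies verbatim to the empirical Fisher with loss gradients, as used in the paper. It also recovers the approximations $\mathbf{F}_\mathbf{A}$ and $\mathbf{F}_\mathbf{B}$ of Proposition~\ref{prop:regularization} as the diagonal blocks of $\mathbf{F}_\theta$, and now as exact equalities. The off-diagonal block $J_A^\top\mathbf{F}_\mathbf{W}J_B$ is the cross term that separate regularization discards.

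The only delicate step is the bookkeeping for orientation and transposes. In the proof of Proposition~\ref{prop:regularization}, the chain rule for $\nabla_\mathbf{A}$ was written in matrix rather than vectorized form. To avoid mixing conventions, I would work entirely with column-vector gradients of vectorized parameters, and check that $\mathrm{vec}(\nabla_\mathbf{A}\mathcal{L}) = (\mathbf{B}\otimes\mathbf{I})\,\mathrm{vec}(\nabla_\mathbf{W}\mathcal{L})$ agrees with $\nabla_\mathbf{W}\mathcal{L}\,\mathbf{B}^\top$. I would also state explicitly that the identity holds pointwise at the current $(\mathbf{A},\mathbf{B})$, since $\mathbf{J}$ varies with $\theta$.
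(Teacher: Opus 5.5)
Your proposal is correct and follows the paper's own argument. You apply the chain rule $\nabla_\theta \log p = \mathbf{J}^\top \nabla_{\mathbf{w}} \log p$, substitute it into the outer-product definition of $\mathbf{F}_\theta$, and pull $\mathbf{J}$, which does not depend on $(x,y)$, out of the expectation; your extra care (the explicit block Jacobian $[\mathbf{B}^\top\otimes\mathbf{I}_{d_O}\;\; \mathbf{I}_{d_I}\otimes\mathbf{A}]$, evaluating both Fishers at $\mathbf{w}=\mathbf{w}(\theta)$ so the sampling distributions coincide, and using the full rather than the diagonal Fisher) spells out what the paper leaves implicit.
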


\medskip

\paragraph{\textit{Proof.}}
Let $\mathcal{L}(\mathbf{W})$ denote the loss (negative log-likelihood). By the chain rule,
\begin{equation}
\nabla_\theta \mathcal{L} \;=\; \mathbf{J}^\top \nabla_{\mathbf{W}} \mathcal{L}
\notag
\end{equation}

The Fisher Information Matrix with respect to $\theta$ is defined as:
\begin{equation}
\mathbf{F}_\theta 
= \mathbb{E}\!\left[\, \nabla_\theta \log p \;\nabla_\theta \log p^\top \,\right]
\notag
\end{equation}

Substituting the chain rule,
\begin{equation}
\begin{aligned}
\mathbf{F}_\theta 
&= \mathbb{E}\!\left[\, (\mathbf{J}^\top \nabla_{\mathbf{w}} \log p)(\mathbf{J}^\top \nabla_{\mathbf{w}} \log p)^\top \,\right] \\
&= \mathbf{J}^\top \,\mathbb{E}\!\left[\, \nabla_{\mathbf{w}} \log p \;\nabla_{\mathbf{w}} \log p^\top \,\right]\, \mathbf{J}.
\notag
\end{aligned}
\end{equation}

The term inside the expectation is precisely the  Fisher Information Matrix with respect to $\mathbf{w}$, denoted $\mathbf{F}_\mathbf{W}$. Hence we have:
\begin{equation}
\mathbf{F}_\theta = \mathbf{J}^\top \mathbf{F}_\mathbf{W} \mathbf{J}
\label{eq:pro3_project}
\end{equation}
The Jacobian structure has been illustrated in Proposition ~\ref{prop:regularization}. The Fisher Information Matrix is fundamentally an estimation of the curvature of the loss function with respect to the trainable parameters. Eq.~\ref{eq:pro3_project} shows that the correct Fisher matrix for $\theta$ is obtained by projecting $\mathbf{F}_\mathbf{W}$ into the subspace spanned by $\mathbf{A}$ and $\mathbf{B}$ via $\mathbf{J}$. Computing $\mathbf{F}_\mathbf{W}$ directly in the full parameter space $\mathbf{W}$ introduces directions corresponding to the frozen $\mathbf{W}_0$, which are irrelevant for training. Thus, Fisher should be consistently defined in the trainable subspace.

\subsubsection{Constraining Low-Rank Updates via Full-Space Fisher Regularization}
\label{pro3}

\begin{proposition}\label{prop:fisher}
Let the adapted weight matrix be: $\mathbf{W} = \mathbf{W}_0 + \Delta \mathbf{W} = \mathbf{W}_0 + \mathbf{A}\mathbf{B}$, where $\mathbf{W}_0$ denotes the frozen pre-trained weights and $\Delta \mathbf{W} = \mathbf{A}\mathbf{B}$ is the low-rank update.  
Then, the empirical Fisher Information Matrix $\mathbf{F}_{\Delta \mathbf{W}}$ is estimated by the squared gradients of the log-likelihood with respect to $\mathbf{W}$.  
Consequently, the Fisher regularization on $\Delta \mathbf{W}$ induces constraints on the update directions of the low-rank factors $\mathbf{A}$ and $\mathbf{B}$.
\end{proposition}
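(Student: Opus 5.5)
The proof has two parts: the first claim is a chain-rule identity for the additive reparameterization, and the second follows by substituting the factorization into the quadratic penalty and taking derivatives with respect to $\mathbf{A}$ and $\mathbf{B}$.

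\textbf{Step 1: the gradient in $\mathbf{W}$ equals the gradient in $\Delta\mathbf{W}$.} Since $\mathbf{W} = \mathbf{W}_0 + \Delta\mathbf{W}$ with $\mathbf{W}_0$ frozen, the map $\Delta\mathbf{W} \mapsto \mathbf{W}$ is a translation. Its Jacobian $\partial \operatorname{vec}(\mathbf{W})/\partial \operatorname{vec}(\Delta\mathbf{W})$ is the identity $\mathbf{I}_{d_O d_I}$. The chain rule then gives
\begin{equation}
\nabla_{\Delta\mathbf{W}} \log p_{\mathbf{W}}(y\mid x) = \nabla_{\mathbf{W}} \log p_{\mathbf{W}}(y\mid x)
\notag
\end{equation}
at every point, in particular at $\mathbf{W}^*_t$. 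The next move is to apply the same projection argument as in the previous proposition, now with $\mathbf{J} = \mathbf{I}$. This yields $\mathbf{F}_{\Delta\mathbf{W}} = \mathbf{F}_{\mathbf{W}}$ for both the full and the empirical Fisher. Each diagonal entry in Eq.~\ref{eq:fisher} is therefore the expected squared partial derivative of the log-likelihood with respect to $w_i$. This establishes the first claim: the Fisher on $\Delta\mathbf{W}$ can be estimated from squared $\mathbf{W}$-gradients, and these are exactly what backpropagation produces at the merged weights.

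\textbf{Step 2: the penalty constrains $\mathbf{A}$ and $\mathbf{B}$.} Substitute $\operatorname{vec}(\Delta\mathbf{W}) = J_A(\mathbf{B})\mathbf{a} = J_B(\mathbf{A})\mathbf{b}$ from Proposition~\ref{prop:regularization} into $\mathcal{R} = \tfrac12 \operatorname{vec}(\mathbf{AB})^\top \mathbf{F}\operatorname{vec}(\mathbf{AB})$. Differentiating gives
\begin{equation}
\nabla_{\mathbf{a}}\mathcal{R} = J_A(\mathbf{B})^\top \mathbf{F}\operatorname{vec}(\mathbf{AB}), \qquad \nabla_{\mathbf{b}}\mathcal{R} = J_B(\mathbf{A})^\top \mathbf{F}\operatorname{vec}(\mathbf{AB}).
\notag
\end{equation}
Because $\mathbf{F}$ is diagonal, these simplify to $(\mathbf{F}\odot \mathbf{AB})\mathbf{B}^\top$ and $\mathbf{A}^\top(\mathbf{F}\odot\mathbf{AB})$, where $\mathbf{F}$ is reshaped to $d_O\times d_I$ and $\odot$ is the elementwise product. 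Each factor's gradient is thus the Fisher-weighted update pulled back through the other factor. Hence the regularizer penalizes the directions of $\mathbf{A}$ (for fixed $\mathbf{B}$) and of $\mathbf{B}$ (for fixed $\mathbf{A}$) that move $\Delta\mathbf{W}$ along high-Fisher coordinates. The Hessian blocks also contain the cross term $J_A^\top\mathbf{F}J_B$ that Proposition~\ref{prop:regularization} showed is absent from separate regularization. The constraint is therefore a genuinely coupled one on $(\mathbf{A},\mathbf{B})$.

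\textbf{Main obstacle.} Step 1 and the gradient formulas are routine. The delicate step is making ``induces constraints on the update directions'' precise. I would formalize it as follows: $\mathcal{R}$ vanishes exactly on the set $\{\mathbf{AB} : \mathbf{F}\odot\mathbf{AB}=0\}$, and grows quadratically in $\|\sqrt{\mathbf{F}}\odot\mathbf{AB}\|_F^2$ otherwise. The gradient of $\mathcal{R}$ vanishes on the same set, because the pulled-back gradients above are zero whenever $\mathbf{F}\odot\mathbf{AB}=0$. Consequently, the admissible low-rank updates are those whose product avoids important coordinates.

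A caveat concerns the empirical-versus-true Fisher distinction. The identity from Step 1 holds for either estimator, so I would state the result for both and not rely on which one is used.
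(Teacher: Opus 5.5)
Your proposal is correct and follows the paper's proof: the identity Jacobian of the translation $\Delta\mathbf{W}\mapsto\mathbf{W}_0+\Delta\mathbf{W}$ gives $\nabla_{\mathbf{W}}\log p = \nabla_{\Delta\mathbf{W}}\log p$, hence $\mathbf{F}_{\Delta\mathbf{W}}=\mathbf{F}_{\mathbf{W}}$, and then the quadratic penalty is differentiated through $\Delta\mathbf{W}=\mathbf{AB}$ to obtain $\nabla_{\mathbf{A}}\mathcal{R}$ and $\nabla_{\mathbf{B}}\mathcal{R}$. Your forms $(\mathbf{F}\odot\mathbf{AB})\mathbf{B}^\top$ and $\mathbf{A}^\top(\mathbf{F}\odot\mathbf{AB})$ are a cleaner statement of the paper's mixed vector--matrix expression $\mathbf{F}_{\Delta\mathbf{W}}\Delta\mathbf{W}\cdot\mathbf{B}^\top$, and your zero-set formalization and Hessian cross-term remark go beyond what the paper proves.
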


\medskip

\paragraph{\textit{Proof.}}
Since $\mathbf{W}_0$ is constant and only $\Delta \mathbf{W}$ is trainable, the loss function $\mathcal{L}$ depends on $\mathbf{W}$ only through $\Delta \mathbf{W}$. Therefore, the gradients satisfy $\nabla_{\mathbf{W}} \mathcal{L} = \nabla_{\Delta \mathbf{W}} \mathcal{L}$, because $\partial \Delta \mathbf{W} / \partial \mathbf{W} = \mathbf{I}$. Let $\mathbf{F}_\mathbf{W}$ denote the empirical Fisher Information Matrix estimated over weight matrix $\mathbf{W}$:
\begin{equation}
\mathbf{F}_{\mathbf{W}} = \mathbb{E}_{x \sim \mathcal{D}} \left[ \operatorname{vec}(\nabla_{\mathbf{W}} \mathcal{L}) \operatorname{vec}(\nabla_{\mathbf{W}} \mathcal{L})^{\top} \right]
\notag
\end{equation}

Using the gradient equivalence above, the Fisher matrix in the $\Delta \mathbf{W}$ space is identical:
\begin{equation}
\mathbf{F}_{\Delta \mathbf{W}} = \mathbb{E}_{x \sim \mathcal{D}} \left[ \operatorname{vec}(\nabla_{\Delta \mathbf{W}} \mathcal{L}) \operatorname{vec}(\nabla_{\Delta \mathbf{W}} \mathcal{L})^{\top} \right] = \mathbf{F}_{\mathbf{W}}
\notag
\end{equation}

For notational simplicity, we omit the $\operatorname{vec(\cdot)}$ operator and treat $\mathbf{W}$ as a vectorized parameter. The Fisher regularization term can be defined as:
\begin{equation}
\mathcal{R}_{\mathbf{W}} = \frac{1}{2} (\mathbf{W} - \mathbf{W}^*)^\top \mathbf{F}_{\mathbf{W}} (\mathbf{W} - \mathbf{W}^*) = \frac{1}{2} \Delta \mathbf{W}^{\top} \mathbf{F}_{\Delta \mathbf{W}} \Delta \mathbf{W}
\notag
\end{equation}

The gradients of $\mathcal{R}_{\mathbf{W}}$ with respect to $\mathbf{A}$ and $\mathbf{B}$ are:
\begin{equation}
\begin{aligned}
\nabla_{\mathbf{A}} \mathcal{R}_{\mathbf{W}} &= \nabla_{\Delta \mathbf{W}} \mathcal{R}_{\mathbf{W}} \cdot \mathbf{B}^\top = \mathbf{F}_{\Delta \mathbf{W}} \Delta \mathbf{W} \cdot \mathbf{B}^\top \\
\nabla_{\mathbf{B}} \mathcal{R}_{\mathbf{W}} &= \mathbf{A}^\top \cdot \nabla_{\Delta \mathbf{W}} \mathcal{R}_{\mathbf{W}} = \mathbf{A}^\top \cdot \mathbf{F}_{\Delta \mathbf{W}} \Delta \mathbf{W}
\notag
\end{aligned}
\end{equation}
with:
\begin{equation}
\nabla_{\Delta \mathbf{W}} \mathcal{R}_{\mathbf{W}} = \mathbf{F}_{\Delta \mathbf{W}} \Delta \mathbf{W}
\notag
\end{equation}

Therefore, by propagating the gradient through the low-rank decomposition $\Delta \mathbf{W} = \mathbf{A}\mathbf{B}$, the Fisher regularization over $\Delta \mathbf{W}$ imposes a constraint on the update directions of the low-rank factors.

\subsection{Implementation Details}
\label{ap2}

This section provides additional details on the method described in Section~\ref{sec3} and the experimental setup in Section~\ref{sec4} of the main text. In particular, we present the algorithm of EWC-LoRA and discuss the effects of hyperparameters, accompanied by additional ablation studies.

\subsubsection{Optimization Algorithms}
\label{ap:optimization}

Algorithm~\ref{alg:ewc_lora} outlines the learning procedure of EWC-LoRA across tasks $\mathcal{T}_1$ to $\mathcal{T}_T$. At each task, the model is updated via low-rank adaptation while incorporating EWC regularization computed from the Fisher Information Matrix. Unlike prior methods that assign task-specific modules, EWC-LoRA regularizes a shared LoRA, thereby maintaining constant memory cost. For clarity, the key equation referenced in the main text is restated here.

\begin{equation}
\mathcal{L}'_{t}(\mathbf{A}, \mathbf{B}) = \mathcal{L}_{t}(\mathbf{A}, \mathbf{B}) + \frac{\lambda}{2} \operatorname{vec}(\mathbf{AB})^\top \mathbf{F}_{t-1}^{\text{cum}} \operatorname{vec}(\mathbf{AB})
\tag{3}
\label{eq:ewc_loss_appendix}
\end{equation}
where $\operatorname{vec}(\mathbf{AB})$ is flattened $\mathbf{AB}$. $\mathbf{F}_{t}^{\text{cum}}$ denotes the accumulated Fisher matrix obtained from task $\mathcal{T}_{t-1}$. $\lambda$ is the regularization strength.

\begin{algorithm}[H]
\caption{The learning procedure of EWC-LoRA.}
\textbf{Input:} A sequence of tasks $\{\mathcal{T}_t\}_{t=1}^{T}$ with datasets $\{\mathcal{D}_t\}_{t=1}^{T}$; A frozen pre-trained model with parameters $\mathbf{W}_0 \in \mathbb{R}^{d_O \times d_I}$; Low-rank adaptation matrices $\mathbf{A} \in \mathbb{R}^{d_O \times r}$ and $\mathbf{B} \in \mathbb{R}^{r \times d_I}$, with rank $r$; Task decay factor $\gamma_t = 0.9, \ \forall t=1,\dots,T$. \\
\textbf{Output:} Adapted model parameters $\mathbf{W}_T \in \mathbb{R}^{d_O \times d_I}$ that generalize across all tasks $\{\mathcal{T}_t\}_{t=1}^{T}$.
\BlankLine
Initialize accumulated Fisher Information Matrix: $\mathbf{F}_{0}^{\text{cum}} \leftarrow \gamma_{\text{prior}} \cdot \mathbf{I}$ \tcp*{\textcolor{gray}{$\mathbf{0}$ if no prior}}
\BlankLine
\For{$t = 1$ \KwTo $T$}{
\BlankLine
\textbf{Step 1:} Initialize $\textbf{A} \leftarrow \mathbf{0}$, $\mathbf{B} \leftarrow \mathcal{U}(a, b)$
\BlankLine
\textbf{Step 2:} Low-rank adaptation on current task $\mathcal{T}_t$: $\textbf{A}^*, \textbf{B}^* \leftarrow$ Eq.~\ref{eq:ewc_loss_appendix} \;
\BlankLine
\textbf{Step 3:} Fisher estimation for $\mathcal{T}_t$\;
\BlankLine
\Indp
1. Get gradient $\nabla_{\mathbf{W}} \mathcal{L}$ on the full-dimensional space \;
\BlankLine
2. Compute diagonal entries of Fisher matrix: $F_t^{i,i} \approx \frac{1}{|\mathcal{D}_t|} \sum_{(x,y) \in \mathcal{D}_t} \left( \frac{\partial \log p_{\mathbf{W}_t^*}(y|x)}{\partial w_i} \right)^2$\;
\BlankLine
3. Update accumulated Fisher Information Matrix: $\mathbf{F}_{t}^{\text{cum}} \leftarrow \gamma_t \cdot \mathbf{F}_{t-1}^{\text{cum}} + \mathbf{F}_t$\;
\BlankLine
\Indm
\textbf{Step 4:} Integrate low-rank parameter: $\mathbf{W}_{t} = \mathbf{W}_{t-1} + \textbf{A}^* \textbf{B}^*$\;
\BlankLine
\textbf{Step 5:} Discard task-specific dataset $\mathcal{D}_t$ and Fisher matrix $\mathbf{F}_t$.
}
\label{alg:ewc_lora}
\end{algorithm}

\subsubsection{Hyperparameters}

We follow the training configurations specified by the authors in the original papers. When such configurations are not available, we adopt a unified set of hyperparameters that has been validated to perform reliably across methods, thereby ensuring consistency in our experimental setup.
In all experiments, we use the Adam optimizer with $\beta_1 = 0.9$ and $\beta_2 = 0.99$. The number of training epochs depends on the specific dataset, and the training batch size is set to 128. No shuffling is applied during training; however, we verified that enabling shuffling generally leads to improved results. Table~\ref{hyperparameter} summarizes the hyperparameters used during training, with method-specific parameters highlighted for each respective approach.

\begin{table}[ht]
\centering
\caption{Hyperparameters for different benchmarks and methods. ``\textbf{lr}'' denotes learning rate. For the parameter $\bm{\epsilon}$, we refer readers to \cite{liang2024inflora} for details.}
\resizebox{\textwidth}{!}{
\begin{tabular}{cccccc}
\toprule
\makecell{Datasets} & \makecell{Methods} \\
\cmidrule(){1-2}
\makecell{CIFAR-100} & \makecell[l]{
\textbf{optimizer:} Adam; \textbf{schedular:} Cosine; \textbf{batch size:} 128; \textbf{shuffle:} False; \textbf{epochs:} 20; \textbf{rank:} 10 \\
\textbf{lr:} 0.0005; \textbf{classifier lr:} 0.005; \textbf{lr decay:} 0.1 \\
\textbf{lr:} 0.008; \textbf{classifier lr:} 0.008; \textbf{lr decay:} 0.1 (SD-LoRA) \\
$\bm{\epsilon:}$ 0.95 (InfLoRA) \\
} \\
\cmidrule(){1-2}
\makecell{DomainNet} & \makecell[l]{
\textbf{optimizer:} Adam; \textbf{schedular:} Cosine; \textbf{batch size:} 128; \textbf{shuffle:} False; \textbf{epochs:} 5; \textbf{rank:} 30 \\
\textbf{lr:} 0.0005; \textbf{classifier lr:} 0.005; \textbf{lr decay:} 0.1 \\
\textbf{lr:} 0.02; \textbf{classifier lr:} 0.02; \textbf{lr decay:} 0.0 (SD-LoRA) \\
$\bm{\epsilon:}$ 0.95 (InfLoRA) \\
} \\
\cmidrule(){1-2}
\makecell{ImageNet-R} & \makecell[l]{
\textbf{optimizer:} Adam; \textbf{schedular:} Cosine; \textbf{batch size:} 128; \textbf{shuffle:} False; \textbf{epochs:} 50; \textbf{rank:} 10 \\
\textbf{lr:} 0.0005; \textbf{classifier lr:} 0.0005; \textbf{lr decay:} 0.1 \\
\textbf{lr:} 0.01; \textbf{classifier lr:} 0.01; \textbf{lr decay:} 0.0; \textbf{weight decay:} 0.0005 (SD-LoRA) \\
\textbf{weight decay:} 0.005 (EWC-LoRA) \\
$\bm{\epsilon:}$ 0.98 (InfLoRA) \\
} \\
\cmidrule(){1-2}
\makecell{ImageNet-A} & \makecell[l]{
\textbf{optimizer:} Adam; \textbf{schedular:} Cosine; \textbf{batch size:} 128; \textbf{shuffle:} False; \textbf{epochs:} 10; \textbf{rank:} 10 \\
\textbf{lr:} 0.0005; \textbf{classifier lr:} 0.005; \textbf{lr decay:} 0.1 \\
$\bm{\epsilon:}$ 0.98 (InfLoRA) \\
} \\
\bottomrule
\end{tabular}}
\label{hyperparameter}
\end{table}

For the regularization strength $\lambda$ in Eq.~\ref{eq:ewc_loss_appendix}, we evaluated a wide range from $10^1$ to $10^{10}$. The trend is illustrated in Figure~\ref{fig:lambda_b_c} of the main text. We observe that across all datasets, when using the empirical Fisher, the best overall performance is achieved around $10^7$. Moreover, with relatively smaller values of $\lambda$ (e.g., $10^1$ to $10^4$), performance tends to decrease initially. For all experiments, the regularization strength $\lambda$ is set to $10^7$.

For the parameter $\gamma$ in Algorithm~\ref{alg:ewc_lora}, we set all tasks to be equally important to $\gamma = 0.9$. To further assess its effect, we evaluate the effect of varying $\gamma$ using a single seed for performance comparison. The results are presented in Table~\ref{tab:gamma_results_cifar} and Figure~\ref{fig:gamma_range}.
The parameter $\gamma$ controls the accumulation of Fisher information across tasks in continual learning. When $\gamma = 0$, no Fisher information is carried over from previous tasks, meaning that only the Fisher matrix of the current task is used to regularize the subsequent task. As $\gamma$ increases, past information is accumulated more strongly, which can help preserve knowledge from earlier tasks. 

As shown in Table~\ref{tab:gamma_results_cifar}, the final accuracy $\overline{A}$ remains relatively similar across different $\gamma$ settings. However, notable differences are observed in terms of stability and plasticity.
From Figure~\ref{fig:gamma_range}, the trends associated with different $\gamma$ values are clearly visible. We find that there exists a broad range of $\gamma$ values that achieve a similar stability–plasticity trade-off. As indicated by the green-shaded region, the method is not overly sensitive to the exact choice of $\gamma$.

\begin{table}[h]
\centering
\caption{Performance comparison under different values of $\gamma$ on CIFAR-100.}
\begin{tabular}{lcccccc}
\toprule
~ & $\gamma=1.0$ & $\gamma=0.9$ & $\gamma=0.7$ & $\gamma=0.5$ & $\gamma=0.3$ & $\gamma=0$ \\
\cmidrule(){1-7}
$\overline{A}_{10}$ & $87.64$ & $87.47$ & $88.07$ & $\uline{88.14}$ & $\textbf{88.22}$ & $86.63$ \\
Avg. & $91.96$ & $92.12$ & $\uline{92.42}$ & $92.34$ & $\textbf{92.46}$ & $92.04$ \\
Stability & $\uline{94.46}$ & $94.45$ & $94.28$ & $94.37$ & $\textbf{94.51}$ & $91.81$ \\
Plasticity & $97.66$ & $97.99$ & $\uline{98.35}$ & $98.33$ & $98.29$ & $\textbf{99.07}$ \\
\bottomrule
\end{tabular}
\label{tab:gamma_results_cifar}
\end{table}

\begin{figure}[htbp]
\centering
\includegraphics[width=0.35\textwidth]{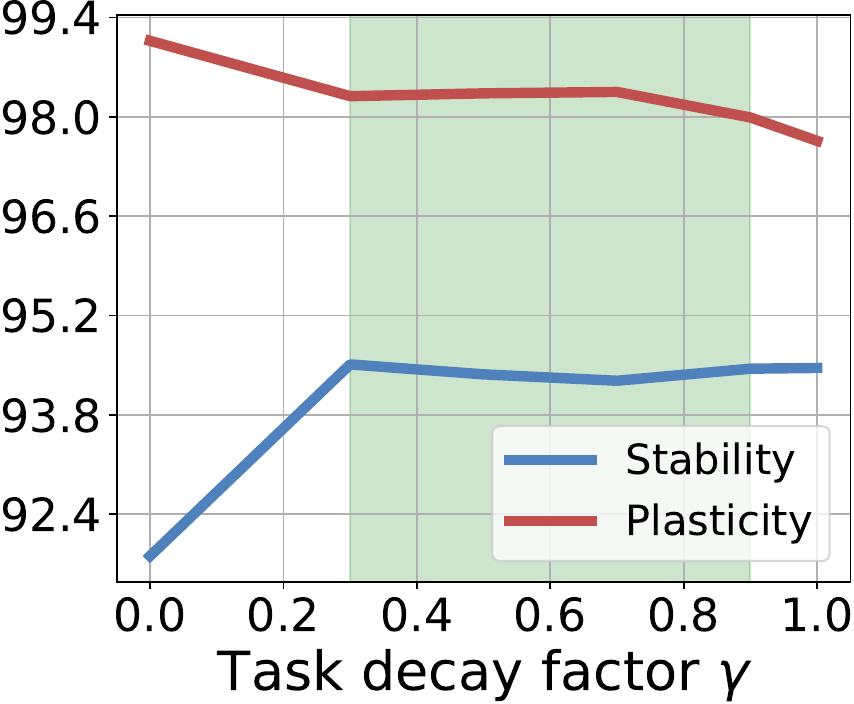}
\caption{Stability-Plasticity trade-off with various task decay factor $\gamma$. A broad range of $\gamma$ (0.3–0.9) yields a similar trade-off, indicating that the method is not sensitive to the precise choice of $\gamma$.}
\label{fig:gamma_range}
\end{figure}


We further illustrate the effect of $\gamma$ using a per-task accuracy matrix. In Figure~\ref{fig:accmat_gamma}, each row represents the performance on all previously encountered tasks (x-axis) after learning the current task (y-axis). The results show that setting $\gamma$ to zero causes a significant drop in stability, with earlier tasks experiencing more severe forgetting. In contrast, higher $\gamma$ values help preserve performance on previous tasks while still enabling effective learning of new tasks.

\begin{figure}[htbp]
\centering
\begin{minipage}{0.9\textwidth}
\centering
\begin{subfigure}[b]{0.31\textwidth}
\includegraphics[width=\textwidth]{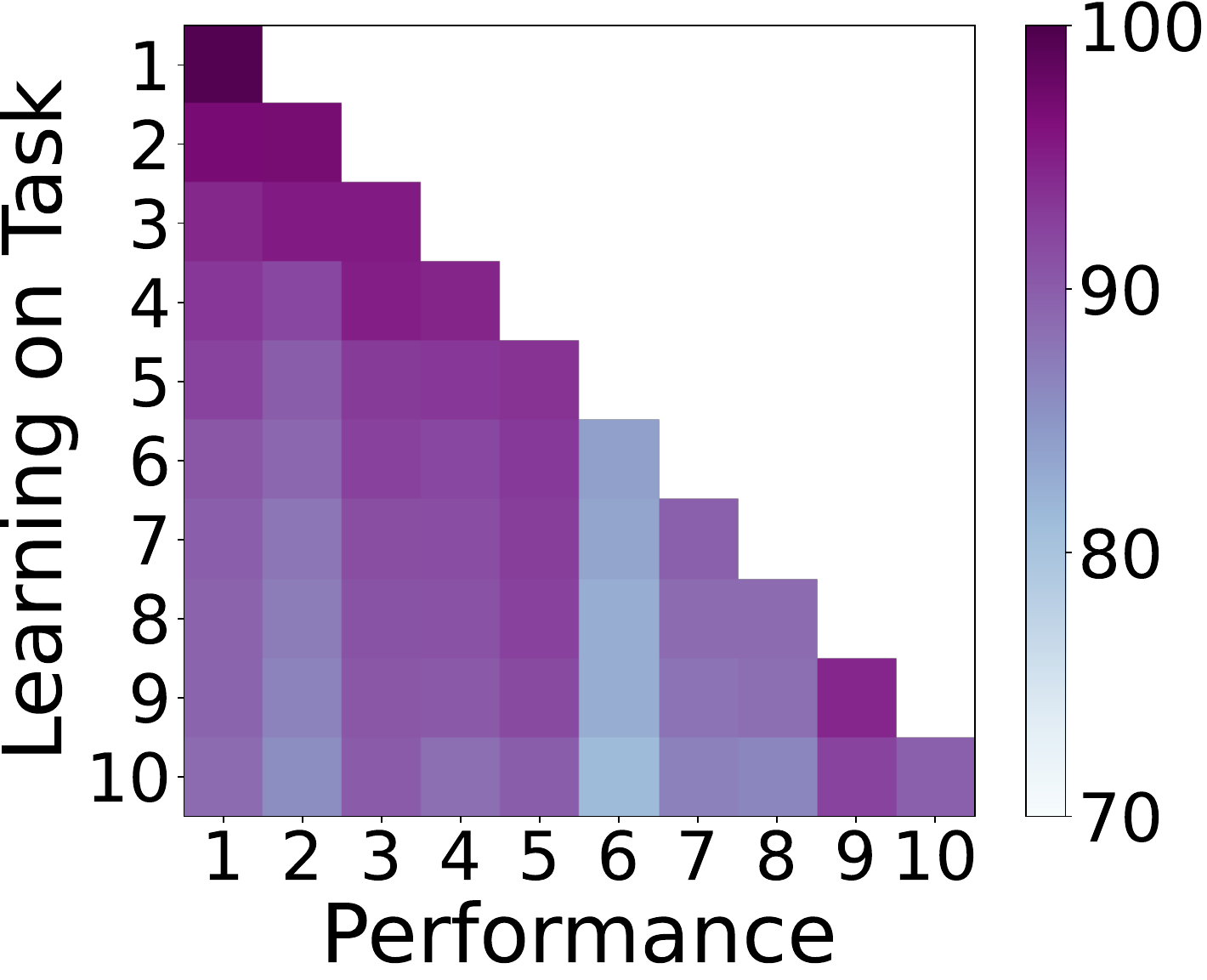}
\caption{$\gamma = 0.9$}
\end{subfigure}
\hfill
\begin{subfigure}[b]{0.31\textwidth}
\includegraphics[width=\textwidth]{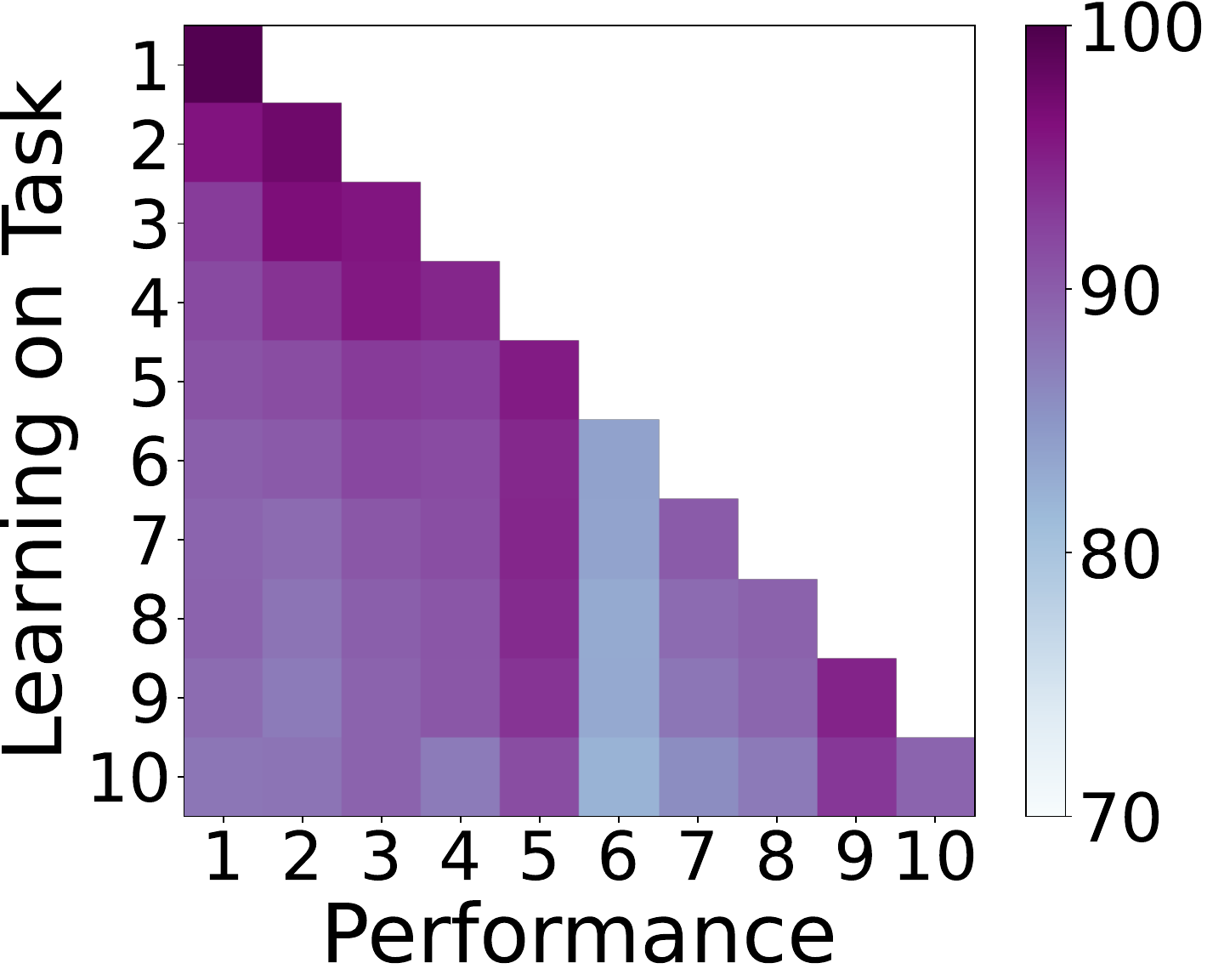}
\caption{$\gamma = 0.5$}
\end{subfigure}
\hfill
\begin{subfigure}[b]{0.31\textwidth}
\includegraphics[width=\textwidth]{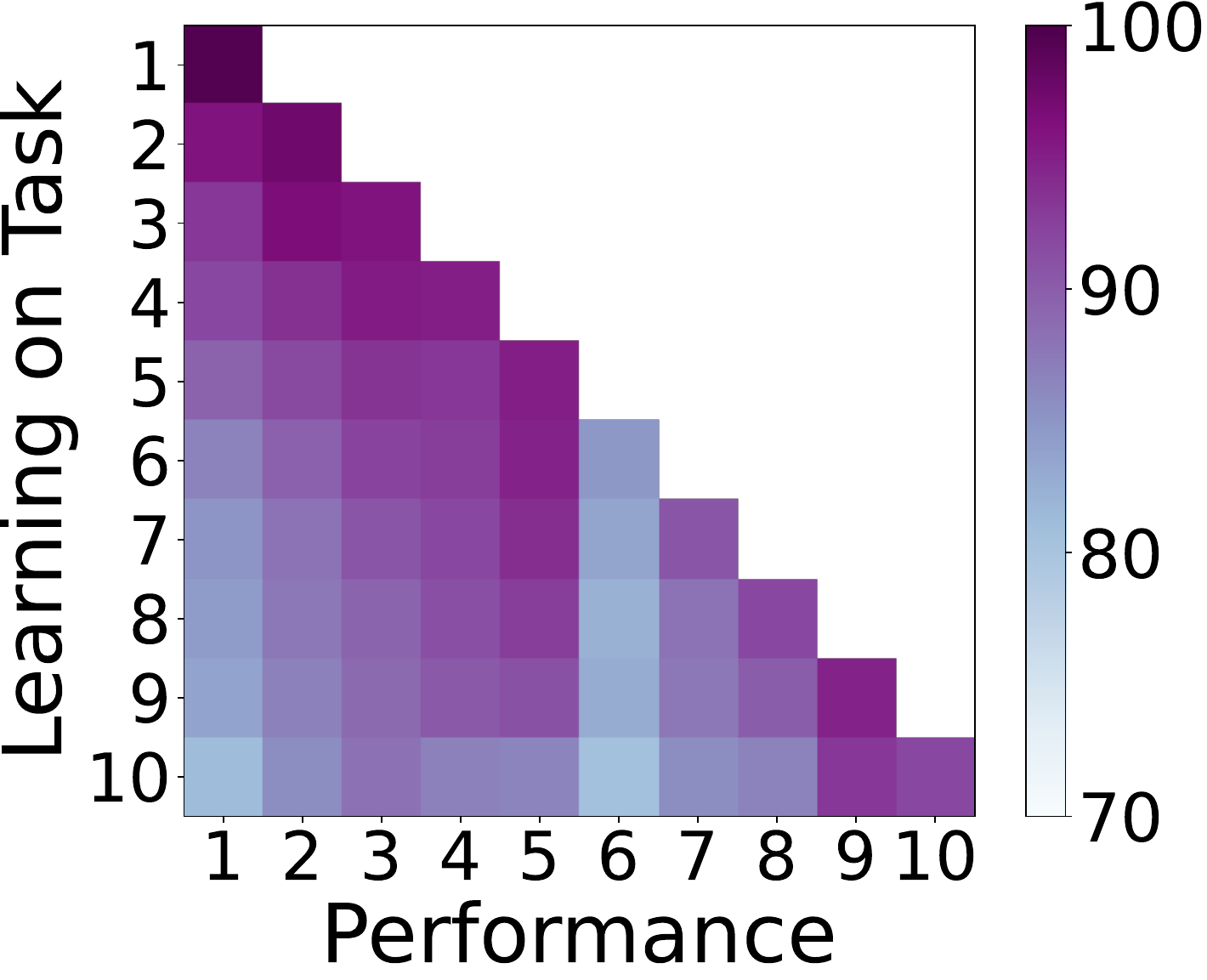}
\caption{$\gamma = 0$}
\end{subfigure}
\end{minipage}
\caption{Task-wise performance on CIFAR-100 under different $\gamma$ settings.}
\label{fig:accmat_gamma}
\end{figure}




\subsection{Additional Experiments}
\label{ap3}

\subsubsection{Comparison Results}
\label{ap3.1}

\paragraph{Results on standard CL benchmark for language tasks.}
To further evaluate the applicability of EWC-LoRA beyond vision tasks, we extend our experiments to natural language processing scenarios using the T5-large and LLaMA-3.2-1B-Instruct pretrained models. Following the standard language CL benchmark and previous work~\citep{wang2023orthogonal}, we use three task orders composed of four text classification datasets, as summarized in Table~\ref{tab:task_order_olora}. We compare EWC-LoRA with O-LoRA and TreeLoRA under identical training settings. The task-wise performance is evaluated using average accuracy, and the results are shown in Figures~\ref{fig:nlp_compare_t5} and~\ref{fig:nlp_compare_llama}. For the larger model T5-large, the effectiveness of EWC-LoRA is more pronounced, as it mitigates forgetting on most tasks across all three orders. For the smaller model LLaMA-3.2-1B-Instruct, EWC-LoRA also exhibits less forgetting compared with the other two methods.

\begin{table*}[t!]
\centering
\caption{Task order of the standard CL benchmark for language tasks.}
\label{tab:task_order_olora}
\begin{tabular}{c|llll}
\toprule
~ & Task 1 & Task 2 & Task 3 & Task 4 \\
\midrule
Order 1 & dbpedia & amazon & yahoo & ag \\
Order 2 & dbpedia & amazon & ag & yahoo \\
Order 3 & yahoo & amazon & ag & dbpedia \\
\bottomrule
\end{tabular}
\end{table*}

\begin{figure}[htbp]
\centering
\begin{minipage}{0.9\textwidth}
\centering
\begin{subfigure}[b]{0.32\textwidth}
\includegraphics[width=\textwidth]{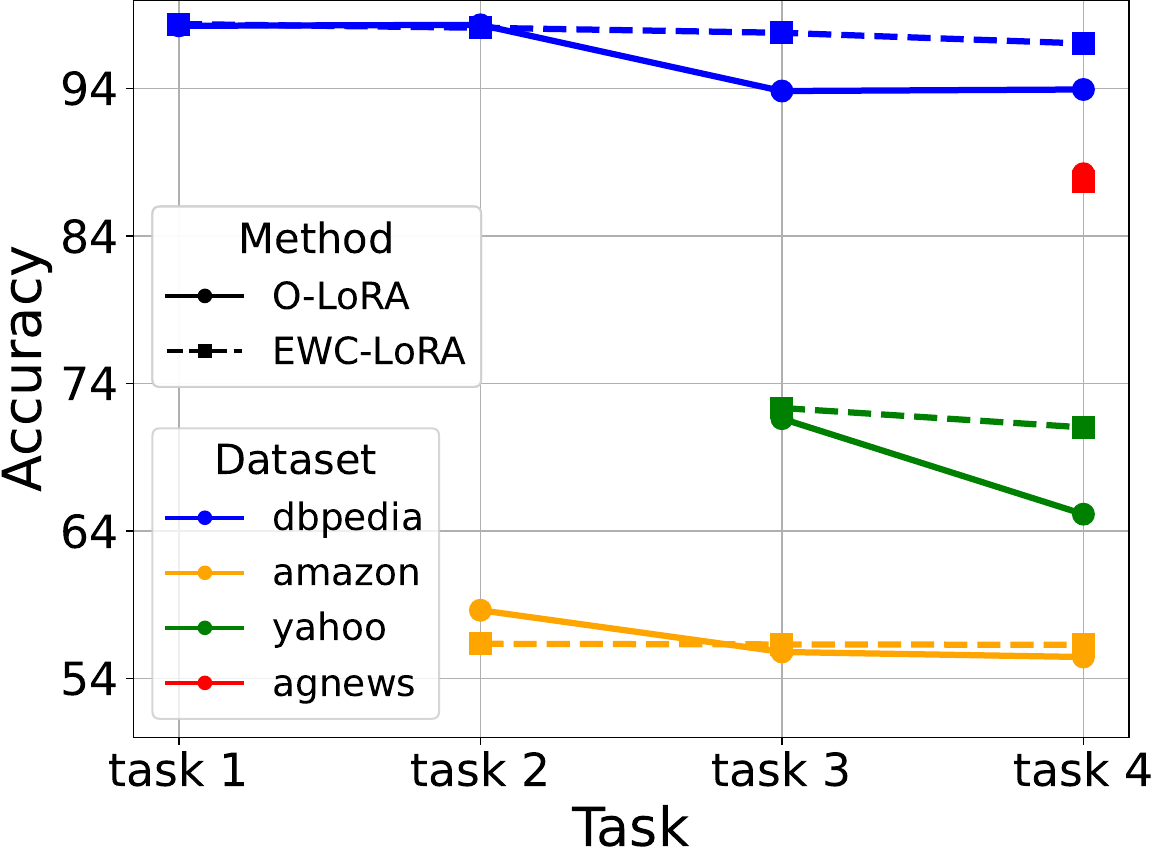}
\caption{Task Order 1}
\end{subfigure}
\hfill
\begin{subfigure}[b]{0.32\textwidth}
\includegraphics[width=\textwidth]{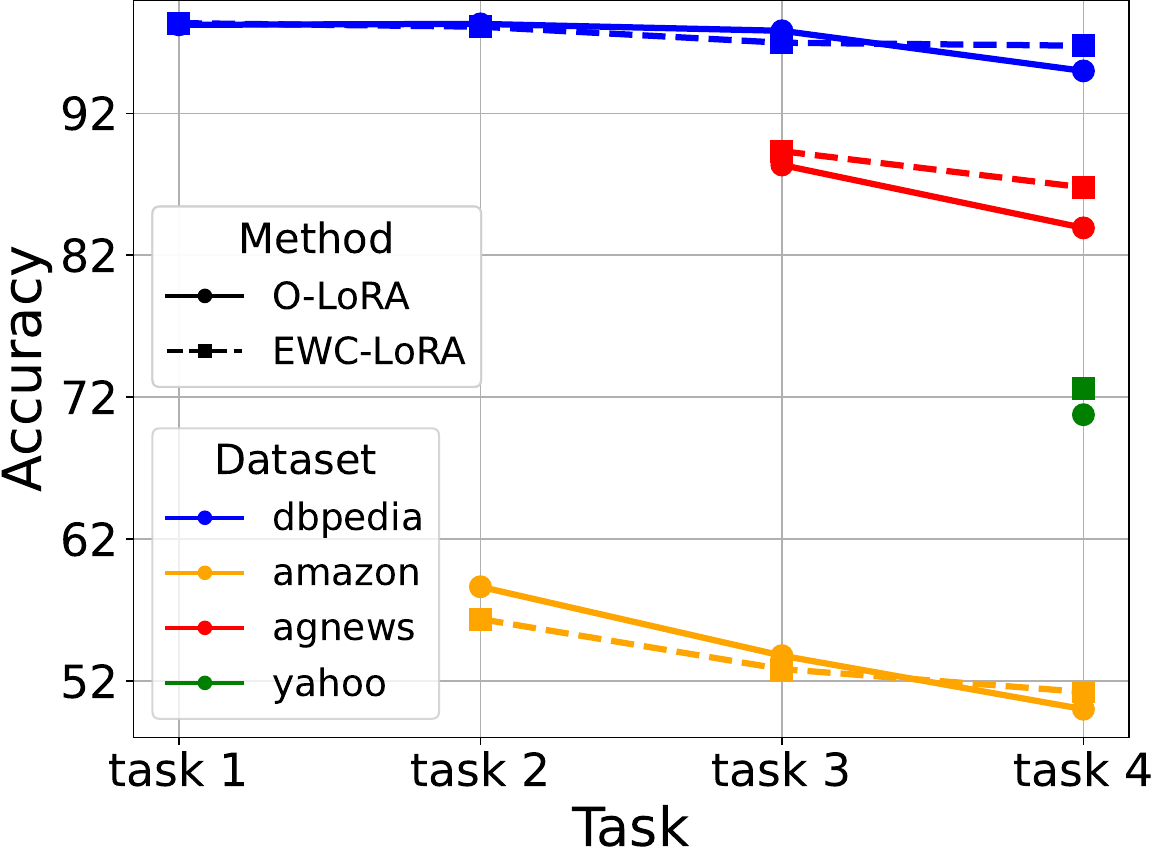}
\caption{Task Order 2}
\end{subfigure}
\hfill
\begin{subfigure}[b]{0.32\textwidth}
\includegraphics[width=\textwidth]{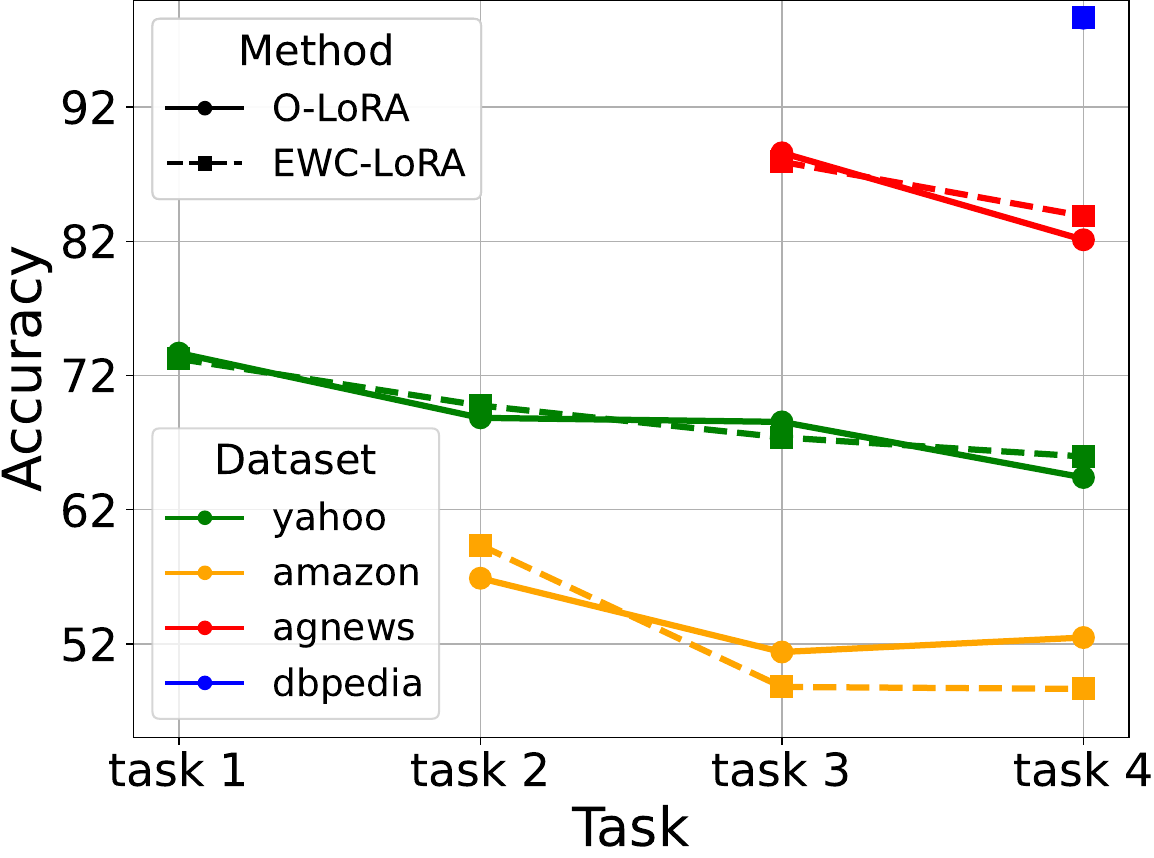}
\caption{Task Order 3}
\end{subfigure}
\end{minipage}
\caption{Task-wise performance across the three task orders using the T5-large model.}
\label{fig:nlp_compare_t5}
\end{figure}

\begin{figure}[htbp]
\centering
\begin{minipage}{0.9\textwidth}
\centering
\begin{subfigure}[b]{0.32\textwidth}
\includegraphics[width=\textwidth]{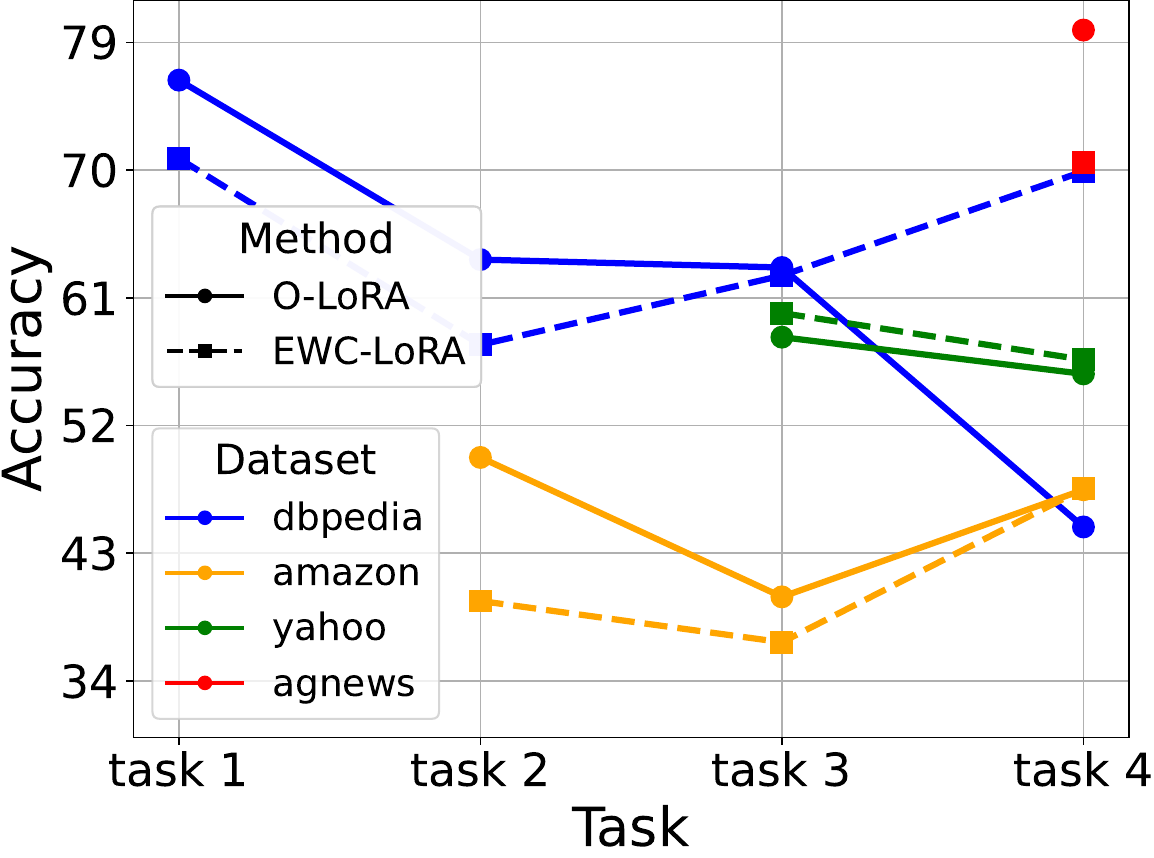}

\includegraphics[width=\textwidth]{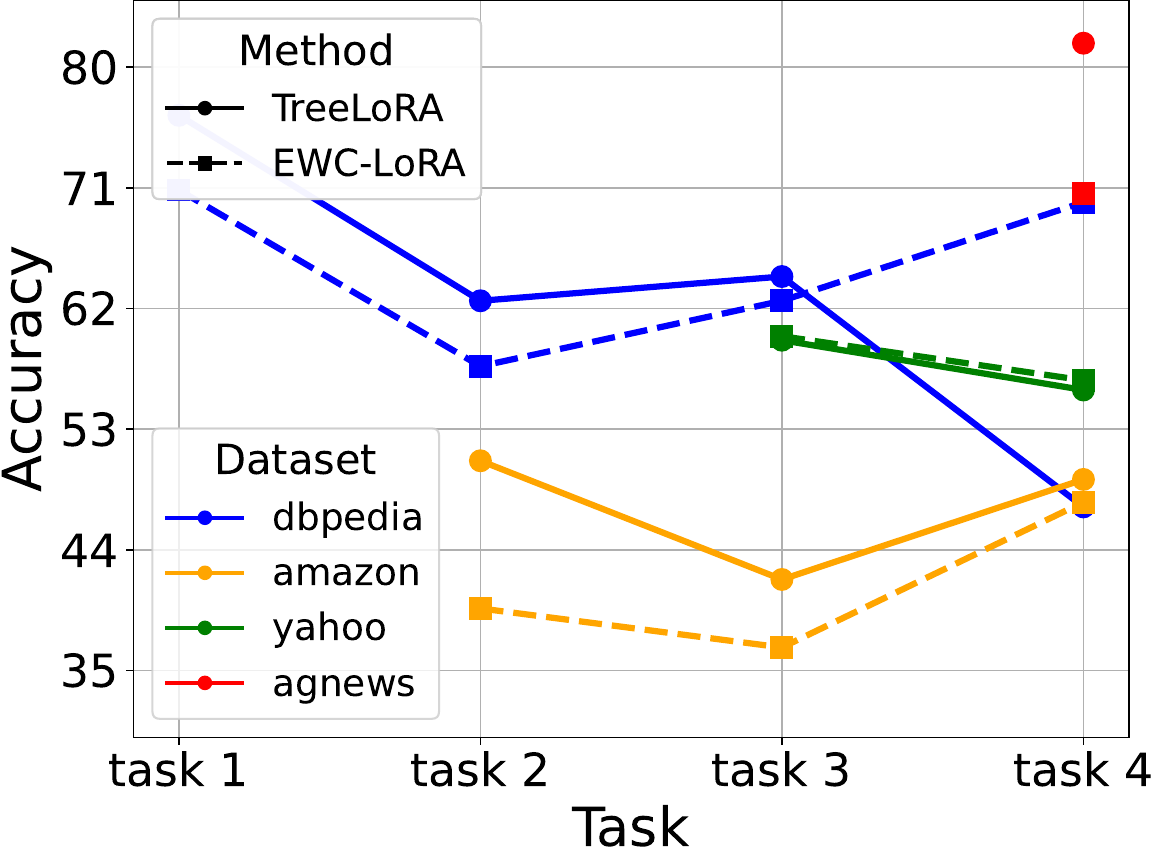}
\caption{Task Order 1}
\end{subfigure}
\hfill
\begin{subfigure}[b]{0.32\textwidth}
\includegraphics[width=\textwidth]{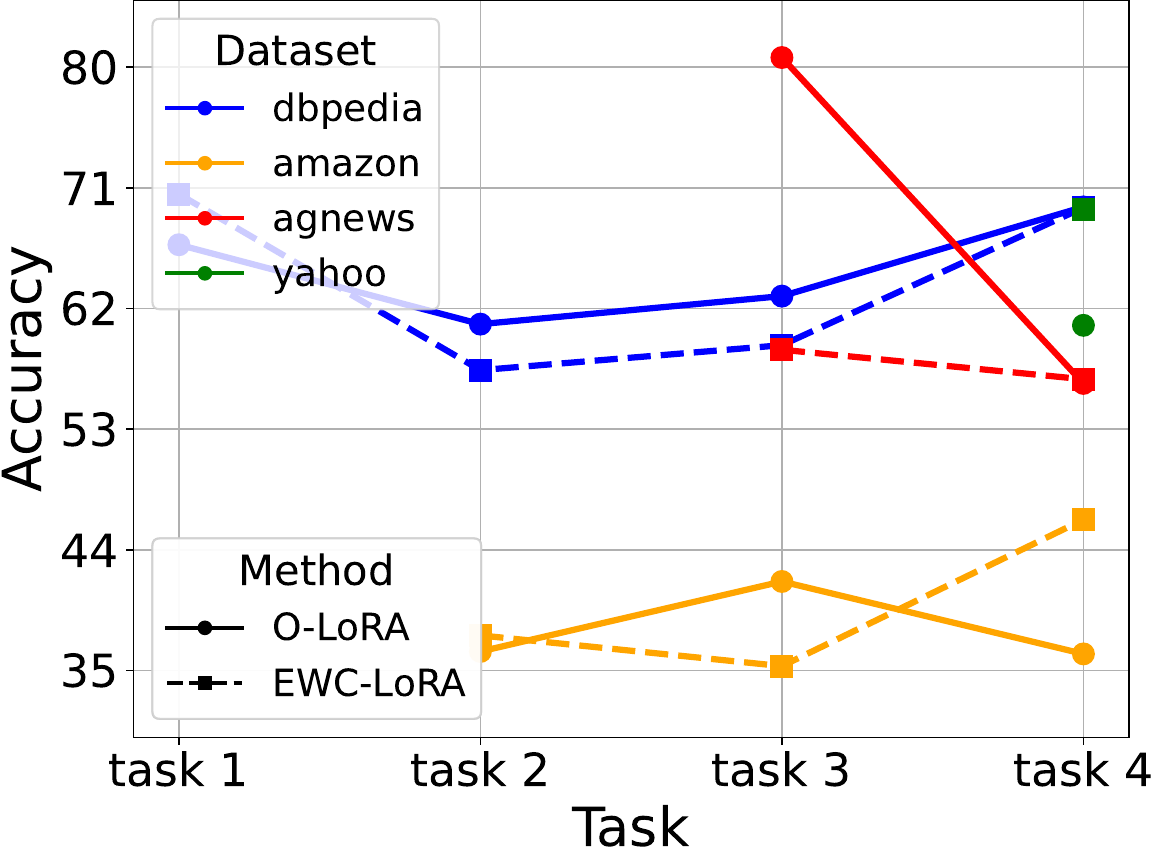}

\includegraphics[width=\textwidth]{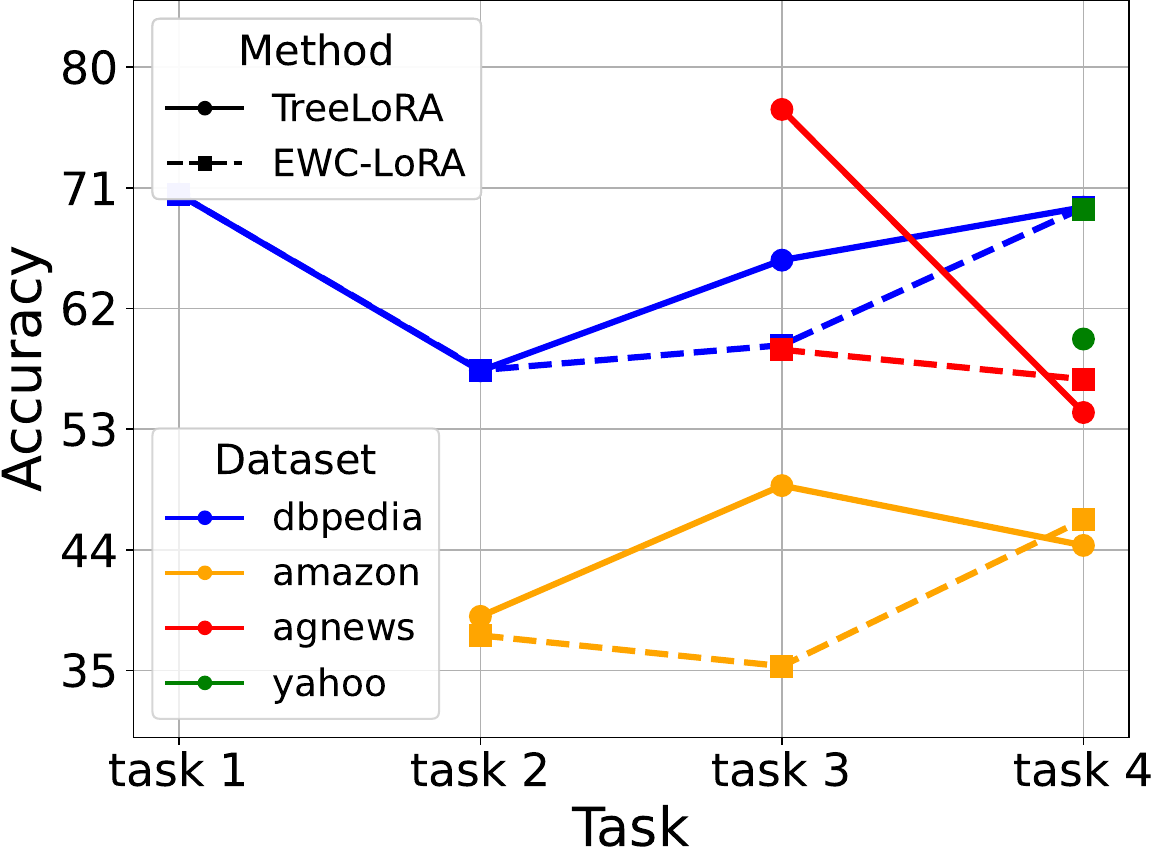}
\caption{Task Order 2}
\end{subfigure}
\hfill
\begin{subfigure}[b]{0.32\textwidth}
\includegraphics[width=\textwidth]{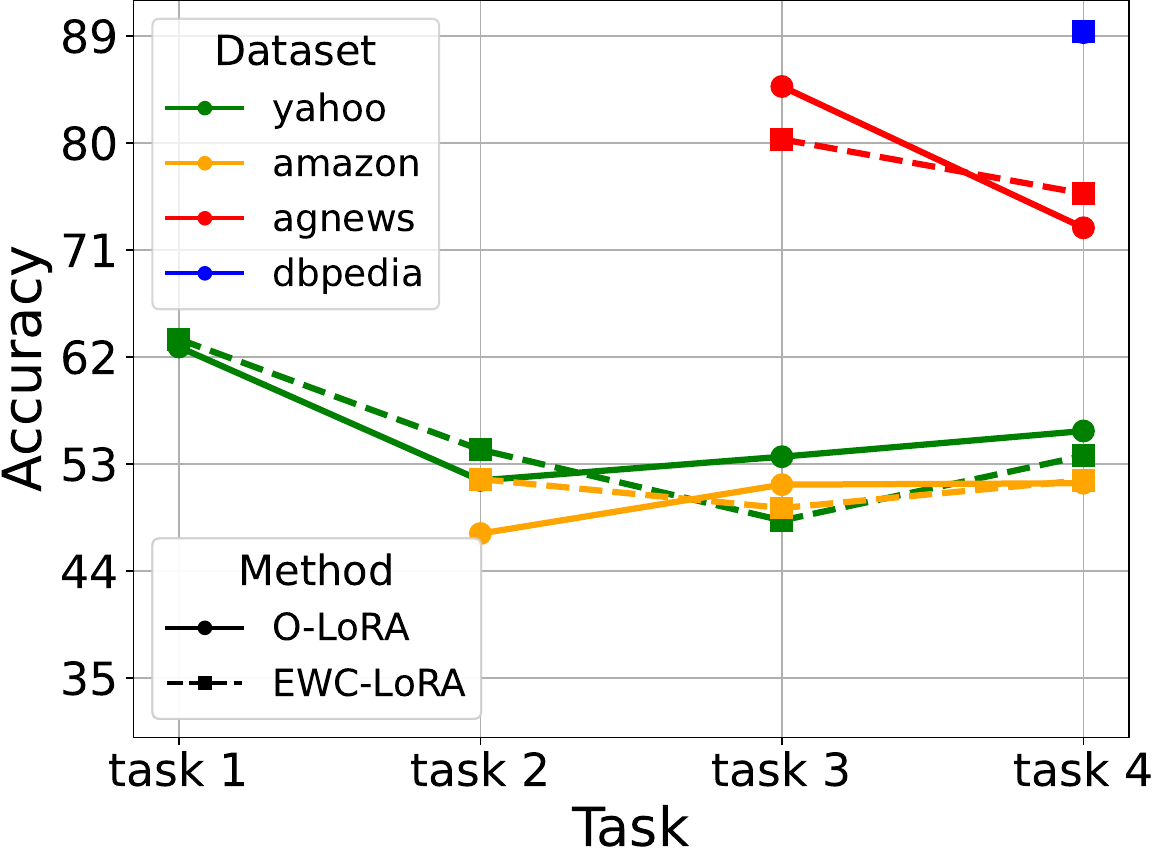}

\includegraphics[width=\textwidth]{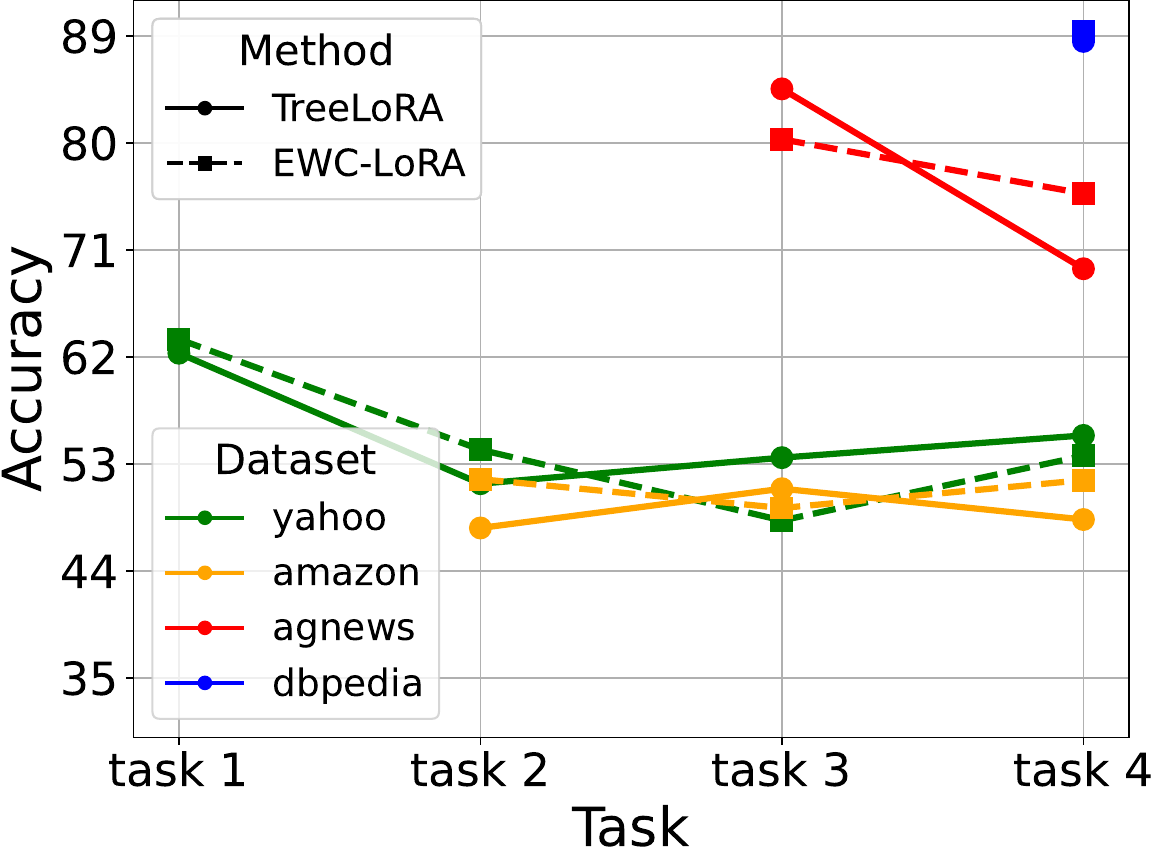}
\caption{Task Order 3}
\end{subfigure}
\end{minipage}
\caption{Task-wise performance across the three task orders using the LLaMA-3.2-1B-Instruct model.}
\label{fig:nlp_compare_llama}
\end{figure}

\paragraph{Results Across Varied Task Length.}

Table~\ref{tab:compare_length_imgr} reports the full results of low-rank continual learning methods under varying task lengths on ImageNet-R. For EWC-LoRA, the performance gains appear relatively modest, likely due to the inherent domain shift in this benchmark, which may constrain the effectiveness of regularization-based methods for continual adaptation. We further observe that InfLoRA achieves better results on shorter sequences, whereas SD-LoRA performs more strongly on longer sequences. Examining the accuracy matrix for longer sequences, we find that although SD-LoRA exhibits lower stability, its higher plasticity often leads to superior final performance. This observation suggests that model evaluation should go beyond reporting only the final accuracy. It is also important to track performance throughout the task sequence and explicitly report both stability and plasticity.

\begin{table}[ht]
\centering
\caption{Comparison results on ImageNet-R across different task lengths (in \%).}
\begin{tabular}{l|cc|cc}
\toprule
Tasks & \multicolumn{2}{c|}{\textbf{ImageNet-R} (N=5)} & \multicolumn{2}{c}{\textbf{ImageNet-R} (N=20)} \\
\cmidrule(){1-5}
Methods & $\overline{A}_{5}$ ($\uparrow$) & Avg. ($\uparrow$) & $\overline{A}_{20}$ ($\uparrow$) & Avg. ($\uparrow$) \\
\midrule
Joint Train & $81.69_{\scriptsize(0.30)}$ & $85.57_{\scriptsize(0.13)}$ & $81.66_{\scriptsize(0.22)}$ & $86.41_{\scriptsize(0.10)}$ \\
Finetune & $69.26_{\scriptsize(0.74)}$ & $78.88_{\scriptsize(0.31)}$ & $47.06_{\scriptsize(2.05)}$ & $63.01_{\scriptsize(0.56)}$ \\
InfLoRA & $\textbf{77.37}_{\scriptsize(0.30)}$ & $\textbf{82.19}_{\scriptsize(0.24)}$ & $69.63_{\scriptsize(0.62)}$ & $76.95_{\scriptsize(0.54)}$ \\
SD-LoRA & $74.90_{\scriptsize(1.58)}$ & $79.93_{\scriptsize(0.29)}$ & $\textbf{72.26}_{\scriptsize(0.37)}$ & $\textbf{77.81}_{\scriptsize(0.21)}$ \\
\midrule
Vanilla LoRA & $70.15_{\scriptsize(1.00)}$ & $79.16_{\scriptsize(0.37)}$ & $56.17_{\scriptsize(1.50)}$ & $69.51_{\scriptsize(0.37)}$ \\
EWC-LoRA & $\uline{76.36}_{\scriptsize(0.21)}$ & $\uline{81.43}_{\scriptsize(0.13)}$ & $\uline{70.18}_{\scriptsize(1.06)}$ & $\uline{77.06}_{\scriptsize(0.54)}$\\
\bottomrule
\end{tabular}
\label{tab:compare_length_imgr}
\end{table}

\paragraph{Task-wise Performance.}

Figure~\ref{fig:accmat_cifar100} and Figure~\ref{fig:accmat_domain} show the accuracy matrix of the LoRA-based methods on CIFAR-100 and DomainNet. Each row corresponds to the performance on all previously encountered tasks after training on the current task. The diagonal entries correspond to the most recently trained tasks. 
As expected, Finetune exhibits severe forgetting on previous tasks while maintaining high performance on the current task, as shown in the matrix entries. On CIFAR-100, we observe that InfLoRA better preserves performance on the earliest task (first column). On DomainNet, SD-LoRA adapts more effectively to new tasks. On both datasets, EWC-LoRA achieves a more balanced trade-off between stability and plasticity compared to the other two methods.

\begin{figure}[htbp]
\centering
\begin{minipage}{\textwidth}
\centering
\begin{subfigure}[b]{0.24\textwidth}
\includegraphics[width=\textwidth]{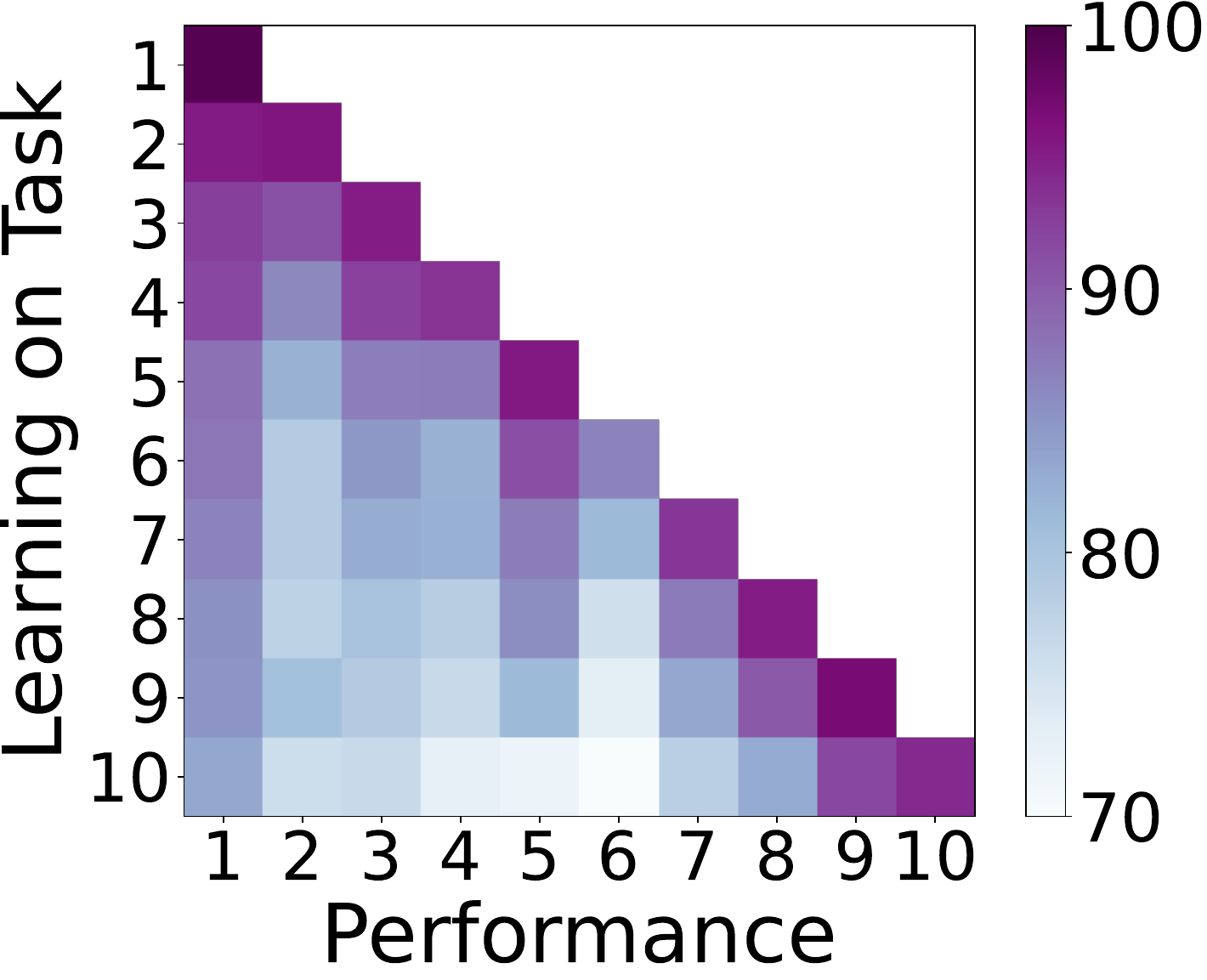}
\caption{Finetune}
\end{subfigure}
\begin{subfigure}[b]{0.24\textwidth}
\includegraphics[width=\textwidth]{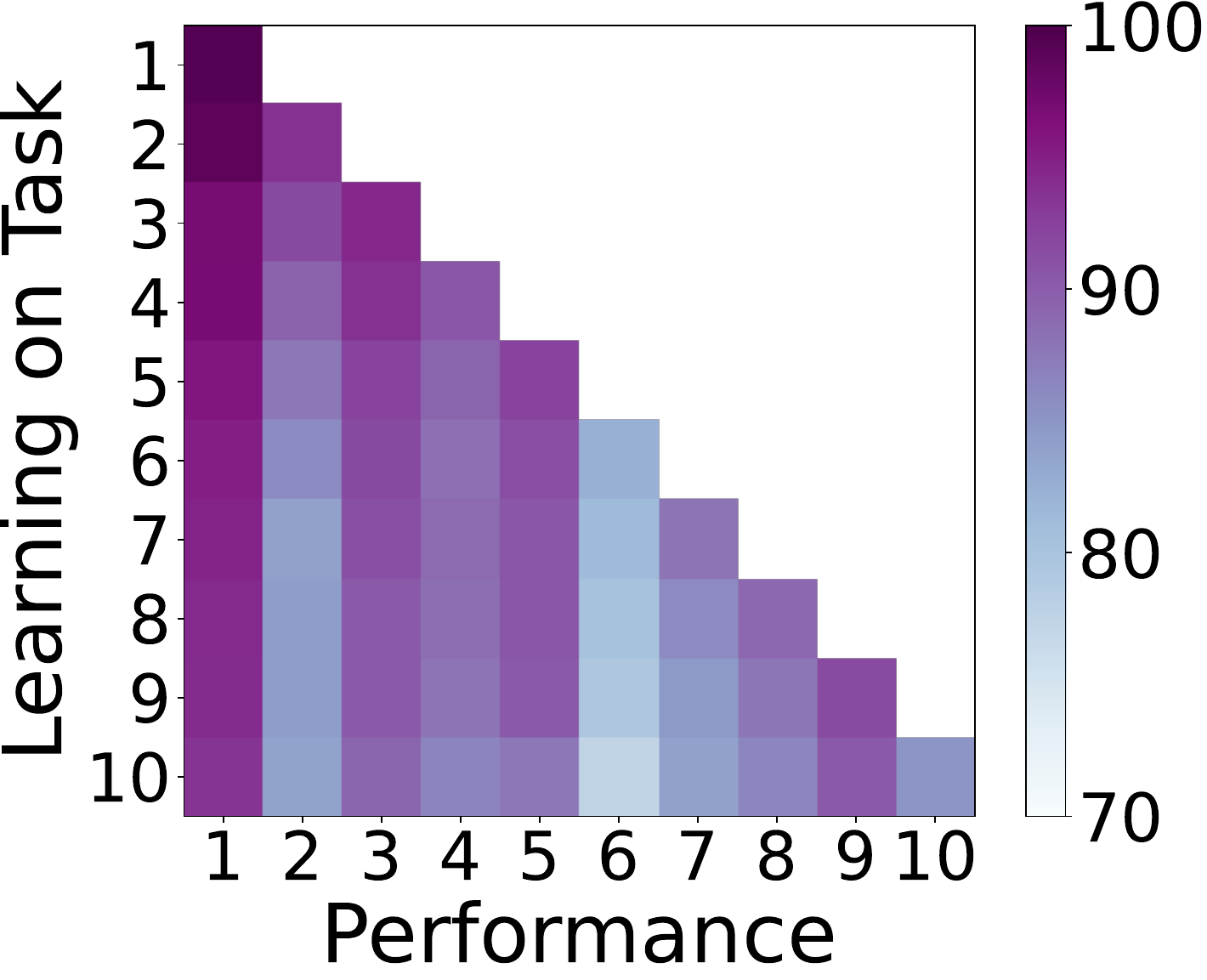}
\caption{InfLoRA}
\end{subfigure}
\begin{subfigure}[b]{0.24\textwidth}
\includegraphics[width=\textwidth]{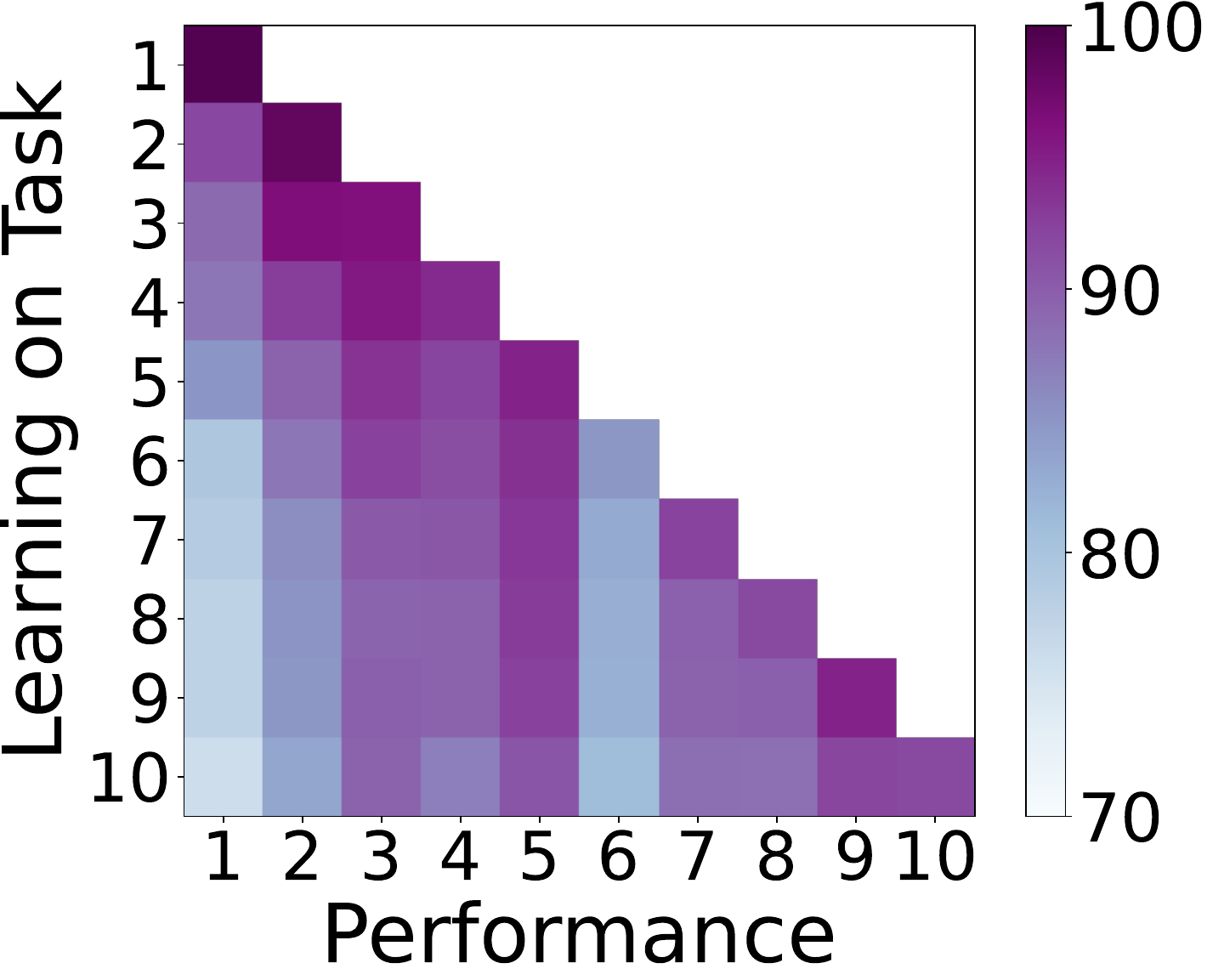}
\caption{SD-LoRA}
\end{subfigure}
\begin{subfigure}[b]{0.24\textwidth}
\includegraphics[width=\textwidth]{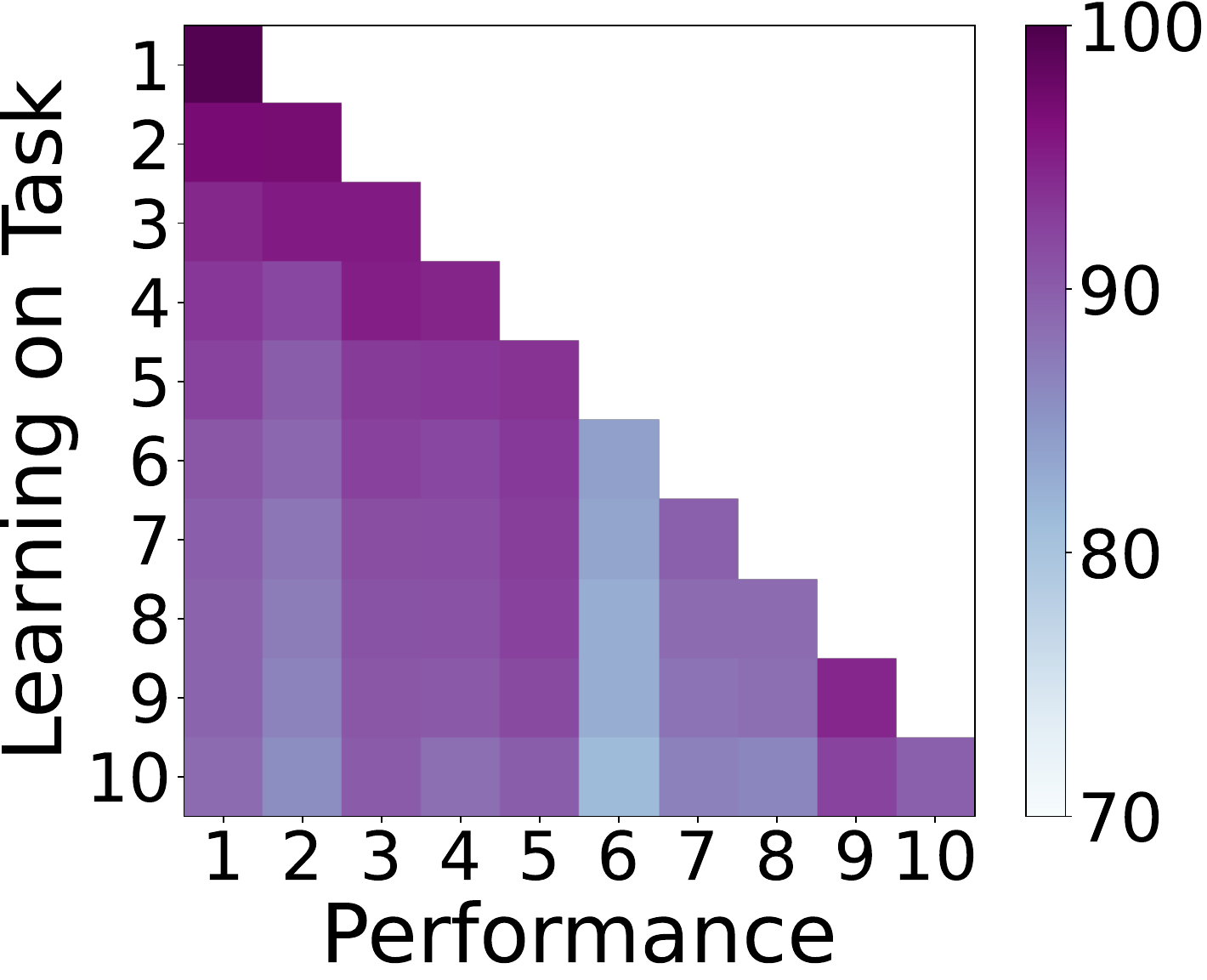}
\caption{EWC-LoRA}
\end{subfigure}
\end{minipage}
\caption{Task-wise performance of LoRA-based methods on CIFAR-100. Each row represents the performance on all previously encountered tasks (x-axis) after learning the current task (y-axis).}
\label{fig:accmat_cifar100}
\end{figure}

\begin{figure}[htbp]
\centering
\begin{minipage}{\textwidth}
\centering
\begin{subfigure}[b]{0.24\textwidth}
\includegraphics[width=\textwidth]{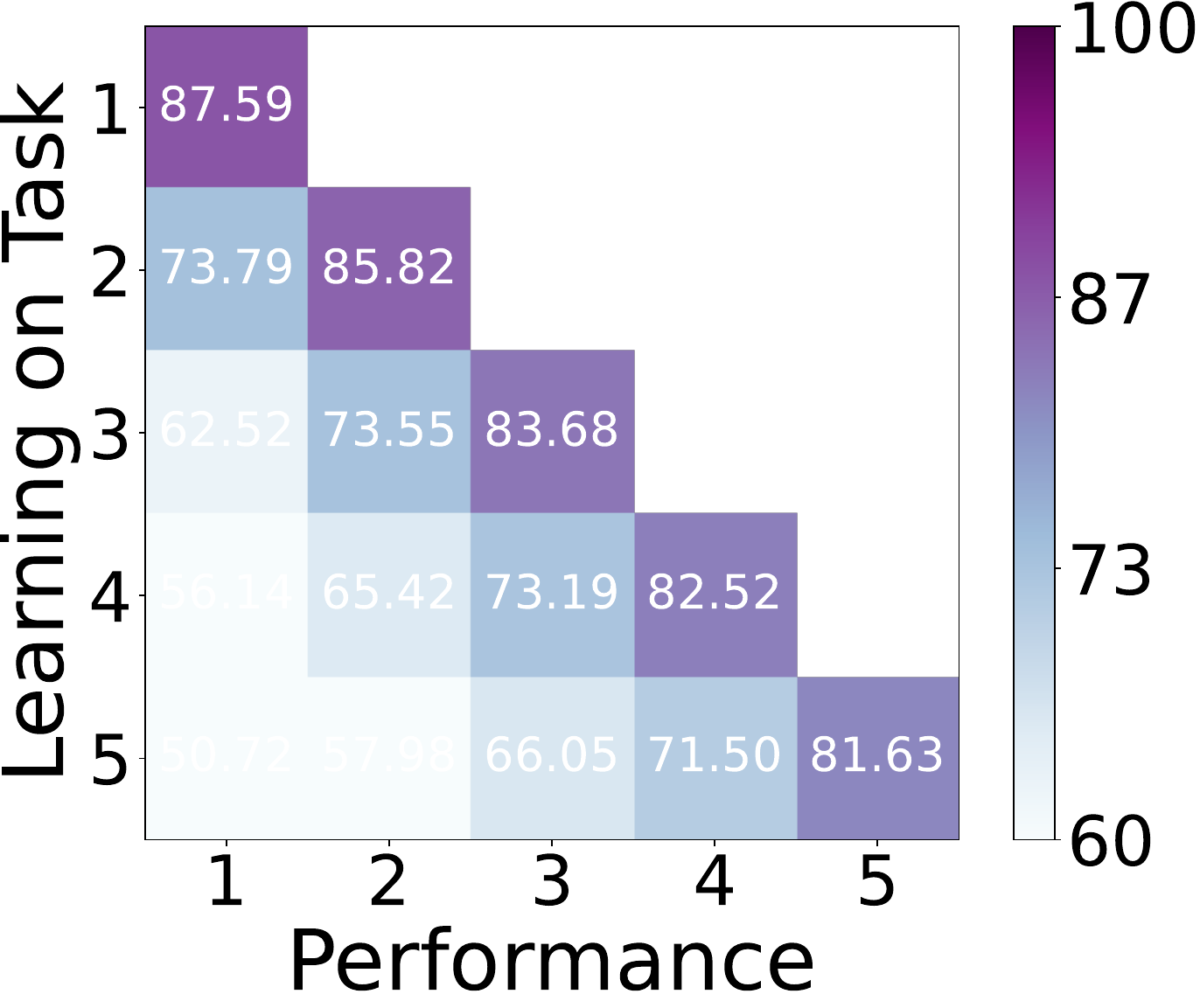}
\caption{Finetune}
\end{subfigure}
\begin{subfigure}[b]{0.24\textwidth}
\includegraphics[width=\textwidth]{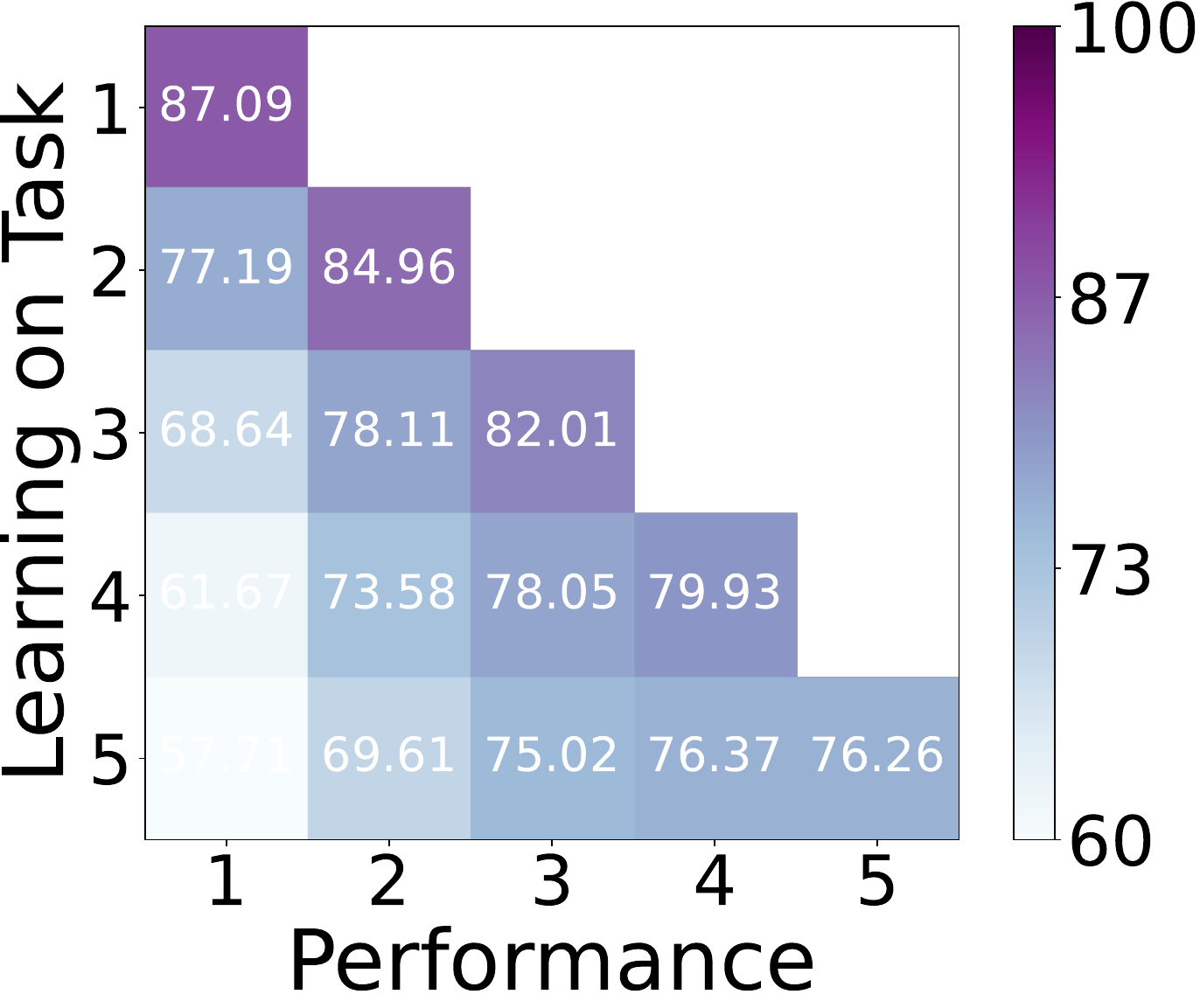}
\caption{InfLoRA}
\end{subfigure}
\begin{subfigure}[b]{0.24\textwidth}
\includegraphics[width=\textwidth]{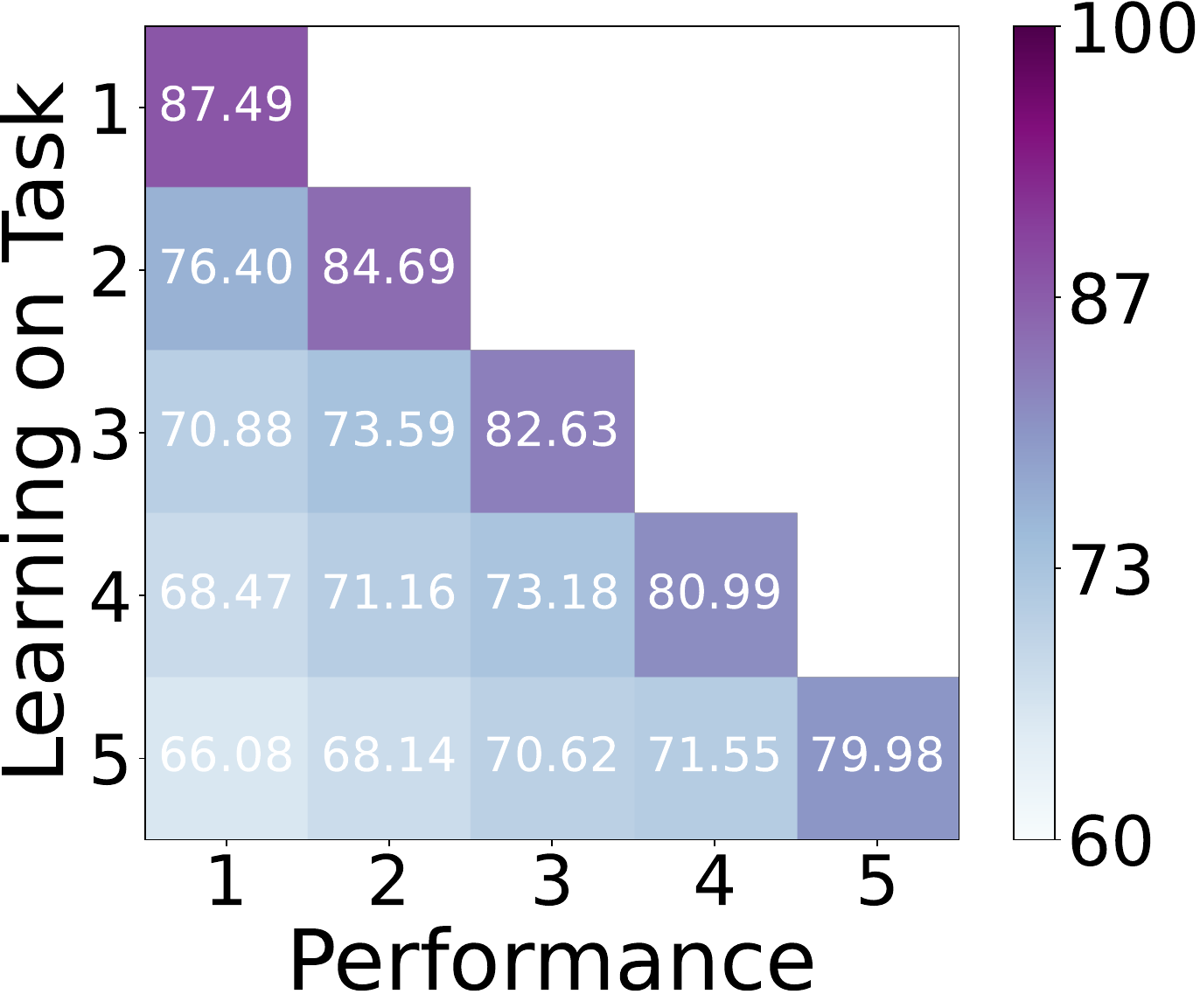}
\caption{SD-LoRA}
\end{subfigure}
\begin{subfigure}[b]{0.24\textwidth}
\includegraphics[width=\textwidth]{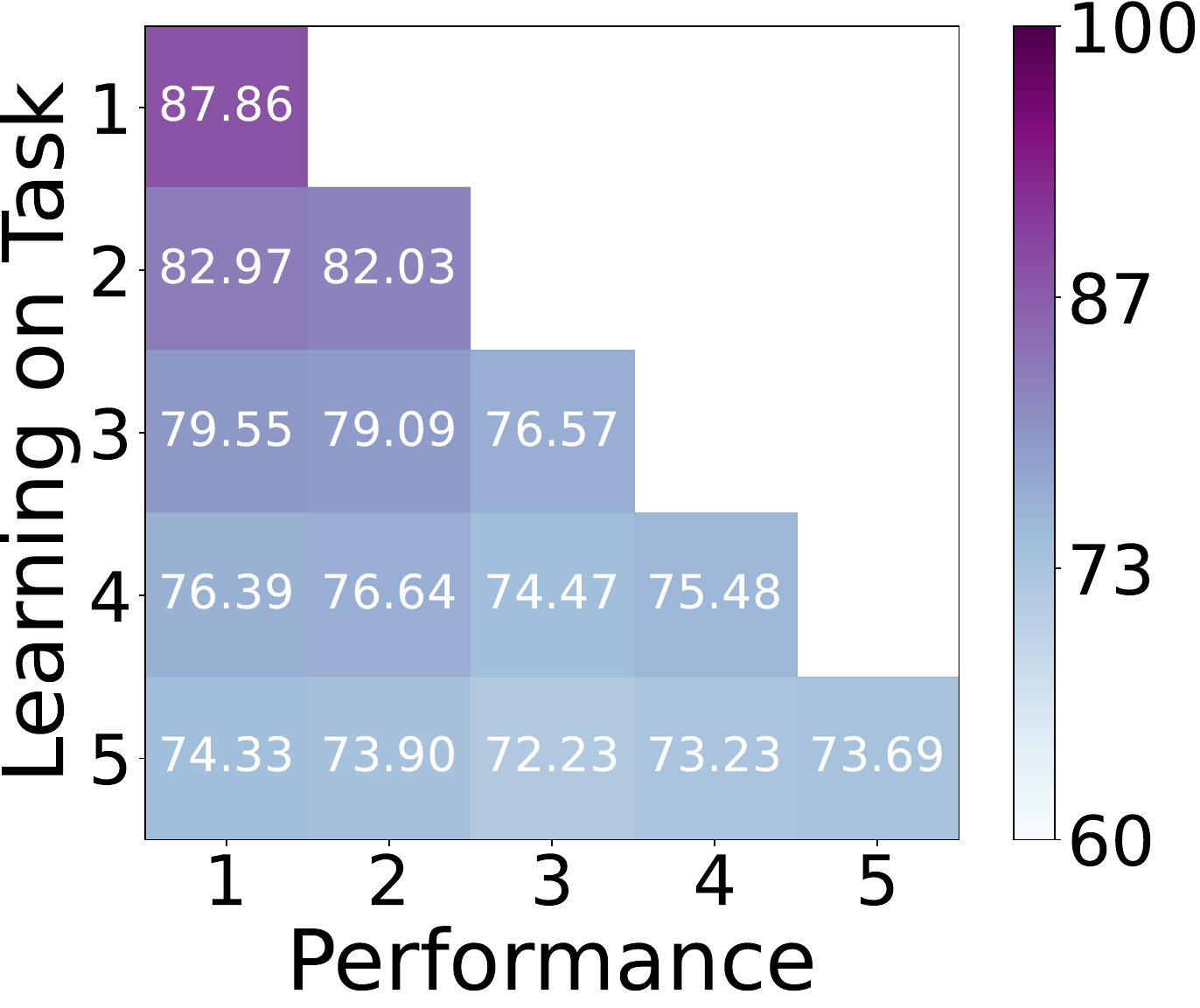}
\caption{EWC-LoRA}
\end{subfigure}
\end{minipage}
\caption{Task-wise performance of LoRA-based methods on DomainNet. Each row represents the performance on all previously encountered tasks (x-axis) after learning the current task (y-axis).}
\label{fig:accmat_domain}
\end{figure}

\paragraph{Trade-off between Stability and Plasticity.}

To better understand how different methods balance stability and plasticity, we introduce a trade-off metric that approximates this balance:
\begin{equation}
\text{T} = \frac{2 \cdot S \cdot P}{S + P}
\end{equation}
where $S$ and $P$ denote the Stability and Plasticity, respectively, as defined in Eq.~\ref{eq:s} and Eq.~\ref{eq:p}.
Figure~\ref{fig:spt} illustrates the trade-off between stability and plasticity for different low-rank CL methods. Vanilla LoRA generally achieves the highest plasticity, as it does not include mechanisms to prevent forgetting. EWC-LoRA attains stability comparable to InfLoRA, while retaining more plasticity than InfLoRA. Overall, EWC-LoRA achieves the best trade-off among the methods.

\begin{figure}[htbp]
\centering
\includegraphics[width=0.45\textwidth]{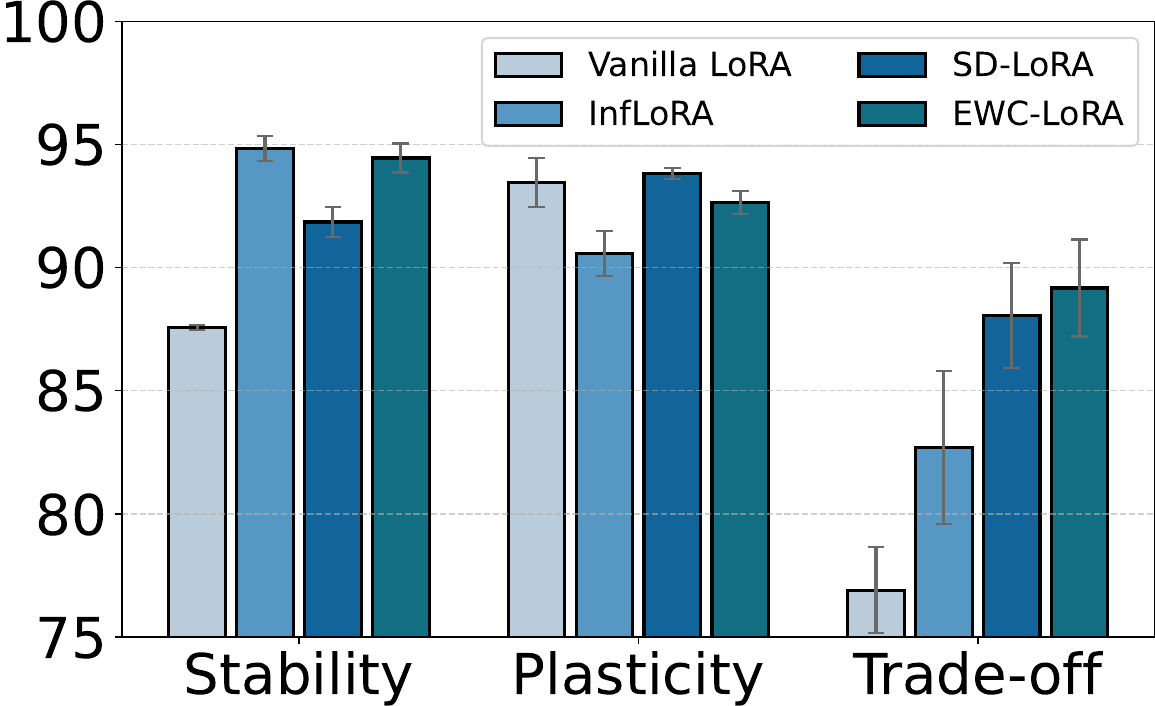}
\caption{Trade-off between stability and plasticity.}
\label{fig:spt}
\end{figure}

\paragraph{Ablation on LoRA Ranks.}

In our main experiments, we used a fixed LoRA rank across all tasks to ensure a fair comparison. Here, we explore whether more complex datasets benefit from different rank settings. Specifically, we conducted an ablation study with different rank ($r$) values on two benchmarks of varying difficulty, and the results are summarized in~\ref{tab:lora_rank_two_benchmarks}. We find that increasing or decreasing the rank $r$ does lead to some variation in performance. However, the difference is relatively small, which indicates that the effective parameter space of the model is inherently low-rank, and that the regularization applied within this low-rank space follows the same principle. 

\begin{table*}[htbp]
\centering
\caption{Ablation study on LoRA ranks $r$ across two benchmarks of varying difficulty.}
\label{tab:lora_rank_two_benchmarks}
\begin{tabular}{c|cc|cc}
\toprule
~ & \multicolumn{2}{c|}{CIFAR-100} & \multicolumn{2}{c}{ImageNet-R} \\
\midrule
$r$ & $\overline{A}_{10}$ & Avg. & $\overline{A}_{10}$ & Avg. \\
\midrule
16 & 88.31 & 92.59 & 72.10 & 79.18 \\
10 & 87.91 & 92.27 & 72.86 & 78.95 \\
4  & 87.79 & 92.12 & 72.42 & 78.78 \\
1  & 86.77 & 91.61 & 70.37 & 77.27 \\
\bottomrule
\end{tabular}
\end{table*}

To exclude the effect of additional regularization, we examine the stability–plasticity trade-off using vanilla LoRA. We observe that a vanilla LoRA with a lower rank (e.g., $r=1$) even outperforms higher-rank settings in terms of final average accuracy. The corresponding results are presented in Table~\ref{tab:lora_ranks_ablation}. The reason is that a smaller rank naturally provides the model with stronger stability, but at the cost of reduced plasticity. For CIFAR-100, this cost of plasticity is relatively minor, whereas for ImageNet-R, it becomes much more pronounced. This indicates that for more challenging benchmarks, the model should adopt a larger rank to ensure sufficient plasticity.

\begin{table*}[htbp]
\centering
\caption{Ablation study on LoRA ranks ($r$) in the Vanilla LoRA setting for CIFAR-100 and ImageNet-R. Note that the plasticity is computed based on the upper bound at rank 10. When the rank is 16, the plasticity can exceed 100\%.}
\label{tab:lora_ranks_ablation}
\begin{tabular}{c|cccc|cccc}
\toprule
~ & \multicolumn{4}{c|}{CIFAR-100} & \multicolumn{4}{c}{ImageNet-R} \\
\midrule
$r$ & $\overline{A}_{10}$ & Avg. & Stability & Plasticity & $\overline{A}_{10}$ & Avg. & Stability & Plasticity \\
\midrule
16 & 83.05 & 89.71 & 86.25 & 100.42 & 66.05 & 76.26 & 74.17 & 101.41 \\
10 & 82.99 & 89.74 & 87.56 & 98.86 & 66.32 & 76.35 & 75.69 & 99.86 \\
4  & 83.35 & 90.10 & 88.70 & 98.24 & 67.52 & 76.33 & 77.27 & 97.30 \\
1  & 83.72 & 89.94 & 89.95 & 97.36 & 68.43 & 77.09 & 81.95 & 96.75 \\
\bottomrule
\end{tabular}
\end{table*}

\subsubsection{Fisher Estimation Analysis}
\label{ap3.2}

\paragraph{Precomputed Fisher.}

As illustrated in \cite{xiang2023language} and \cite{vsliogeris2025full}, they use a precomputed Fisher Information Matrix to preserve prior knowledge. Following this approach, we evaluate performance when a Fisher Information Matrix is precomputed and used throughout the continual learning process. Unlike these works, we do not rely on a large-scale dataset to compute the Fisher matrix. We consider two settings: (1) Uniform parameter importance: All parameters are assigned equal importance, i.e., $\gamma_{\text{prior}}$ in Algorithm~\ref{alg:ewc_lora} is set to a constant, and the Fisher matrix is an identity matrix. (2) Dataset-based Fisher: The Fisher Information Matrix is computed in advance using the entire dataset. The results are shown in Table~\ref{tab:precomputed_fisher}. We observe that the uniform Fisher exhibits lower stability, while the plasticity of both methods is similar, but still much lower than ours. This suggests that using a precomputed, fixed Fisher imposes stronger constraints on the weights, thereby limiting plasticity. As expected, the dataset-based Fisher achieves higher stability than the uniform Fisher, which is reasonable since it better captures the true importance of the parameters.

\begin{table}[h!]
\centering
\small
\caption{Regularization using precomputed Fisher on CIFAR-100.}
\begin{tabular}{l|cccc}
\toprule
\textbf{Strategy} & $\overline{A}_{10}$ & Avg. & Stability & Plasticity \\
\midrule
Uniform $\mathbf{F} = \mathbf{I}$ & $83.02$ & $88.85$ & $92.26$ & $94.63$ \\
Dataset-based $\mathbf{F}$ & $83.87$ & $89.36$ & $93.15$ & $94.74$ \\
\bottomrule
\end{tabular}
\label{tab:precomputed_fisher}
\end{table}

\paragraph{Computation on Fisher Information Matrix.} 
As suggested by~\cite{van2025computation}, we investigate different methods for estimating the FIM. The results are presented in Table~\ref{tab:fisher_est}. Here, ``Exact'' indicates that the inner expectation in Eq.~\ref{eq:fisher} is computed exactly for each training sample. ``Exact (n=500)'' denotes that the outer expectation is calculated using a subset of 500 samples from the old training data. ``Sample'' indicates that the inner expectation is computed over a sampled class.
The results indicate that the optimal regularization strength varies according to the estimation method. In general, the Exact Fisher outperforms the Empirical Fisher, requiring a smaller regularization strength. The Sample method yields slightly better results than the Empirical Fisher.

\begin{table}[H]
\centering
\small
\captionof{table}{Different ways for estimating the Fisher matrix. Final accuracy of each variant using its optimal strength $\lambda$ on CIFAR-100.}
\begin{tabular}{l|c|c|c}
\toprule
Estimation & $\overline{A}_{10}$ & Avg. & Best $\lambda$ \\
\midrule
Exact & $88.32$ & $92.77$ & $\lambda = 10^5$ \\
Exact (n=500) & $88.28$ & $92.76$ & $\lambda = 10^5$ \\
Sample & $88.10$ & $92.50$ & $\lambda = 10^7$ \\
Empirical & $87.91$ & $92.27$ & $\lambda = 10^7$ \\
\bottomrule
\end{tabular}
\label{tab:fisher_est}
\end{table}

\paragraph{Accuracy of the Fisher Estimation.}

In rehearsal-free EWC, the Fisher matrix for the current task is computed only once at the end of training and is not updated thereafter, which may result in a stale Fisher estimate when the parameters drift substantially from the task-specific optimum~\citep{wu2024meta}. We further investigate whether a similar issue exists in low-rank CL methods. To this end, we track the evolution of the Fisher matrix for each task throughout the learning process. Specifically, for a given task $i$, we first compute its Fisher matrix $\mathbf{F}_i^{(i)}$. The model is then sequentially trained on subsequent tasks $i+1$, $i+2$, $\dots$, $t$. We evaluate the accuracy of the Fisher Information Matrix from two perspectives: \textbf{(1) Scale sensitivity}, which indicates whether the parameter importance encoded by the Fisher matrix becomes degraded or inflated. \textbf{(2) Structural pattern}, which reflects whether the set of parameters that EWC aims to protect changes over time.

\begin{figure}[t!]
\centering
\begin{subfigure}[b]{0.32\textwidth}
\centering
\includegraphics[width=\textwidth]{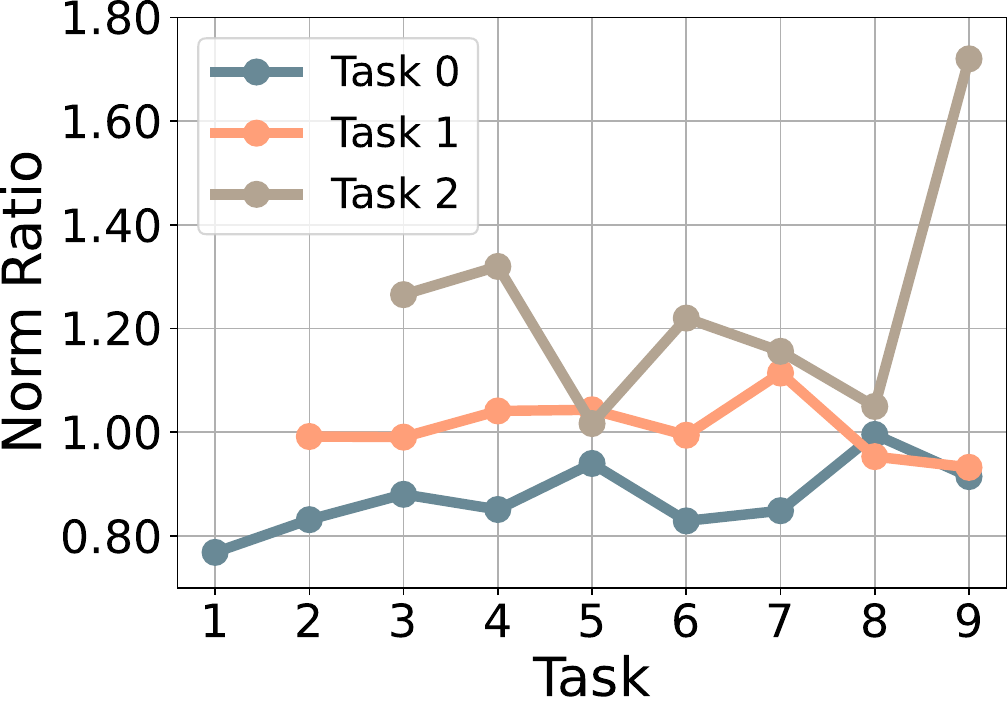}
\end{subfigure}
\hfill
\begin{subfigure}[b]{0.32\textwidth}
\centering
\includegraphics[width=\textwidth]{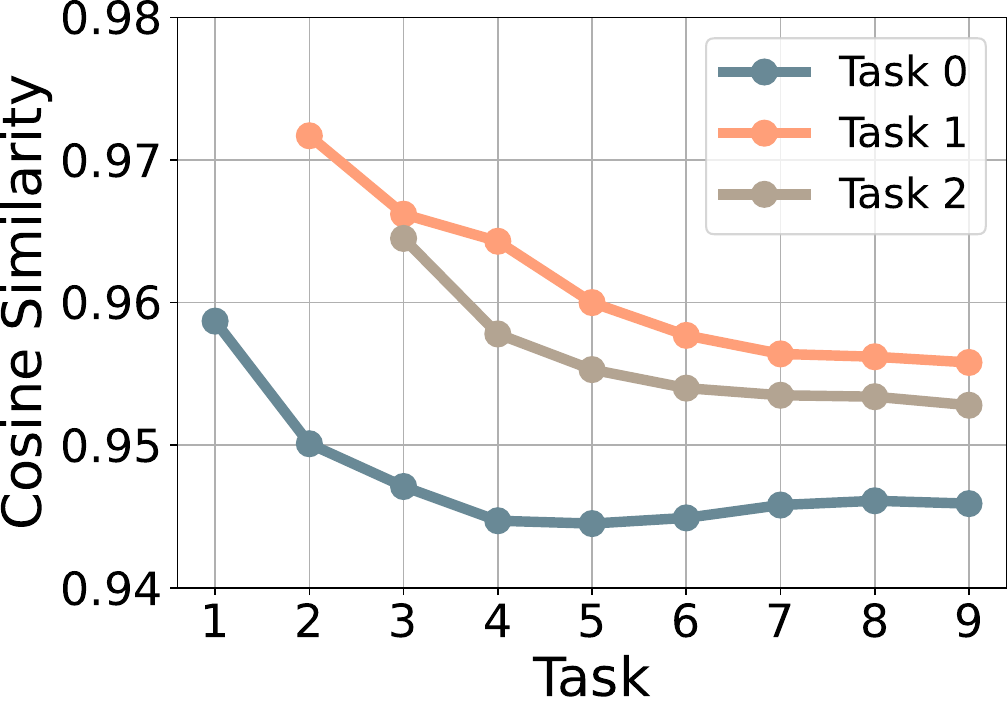}
\end{subfigure}
\hfill
\begin{subfigure}[b]{0.32\textwidth}
\centering
\includegraphics[width=\textwidth]{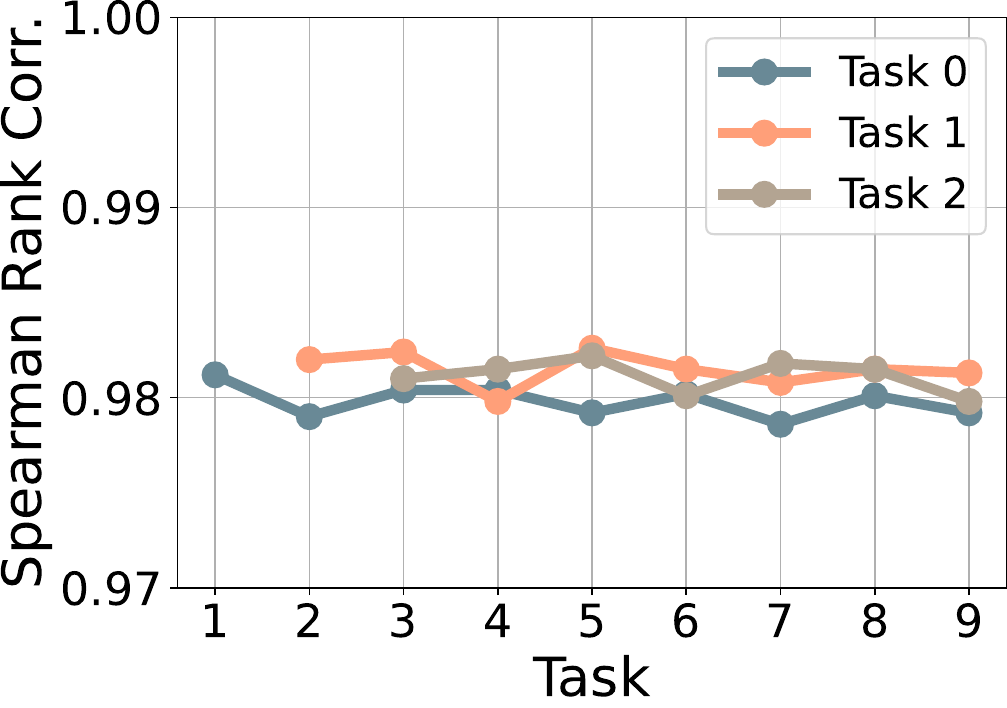}
\end{subfigure}
\caption{Evaluation of Fisher accuracy as the task index increases on CIFAR-100 with 10 tasks.}
\label{fig:fisher_eval_cifar_10t}
\end{figure}

\begin{figure}[t!]
\centering
\begin{subfigure}[b]{0.32\textwidth}
\centering
\includegraphics[width=\textwidth]{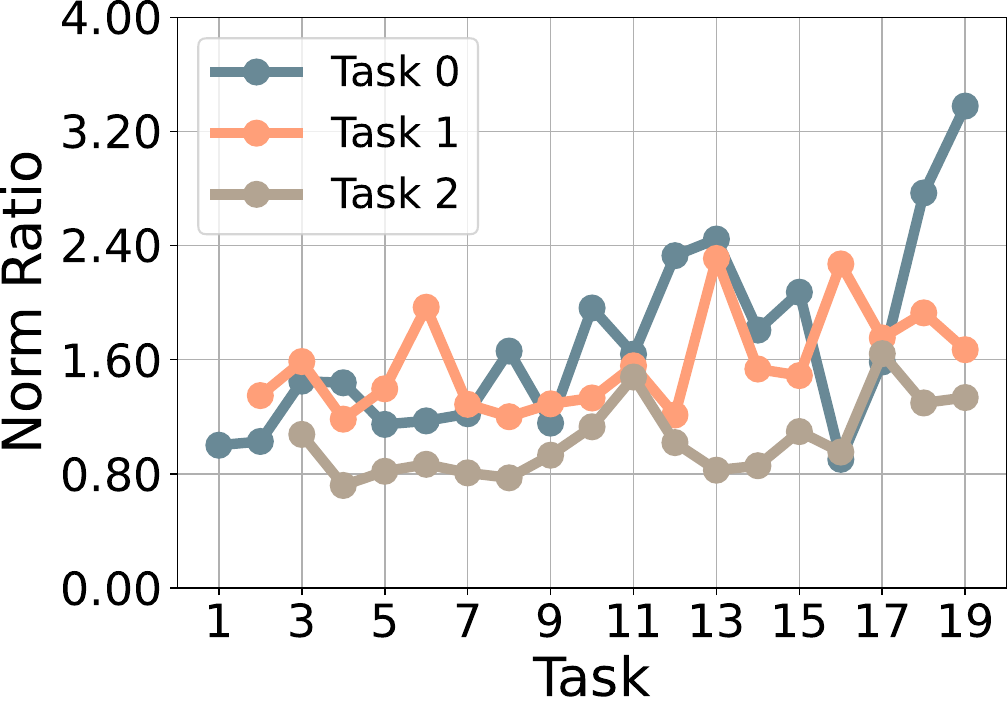}
\end{subfigure}
\hfill
\begin{subfigure}[b]{0.32\textwidth}
\centering
\includegraphics[width=\textwidth]{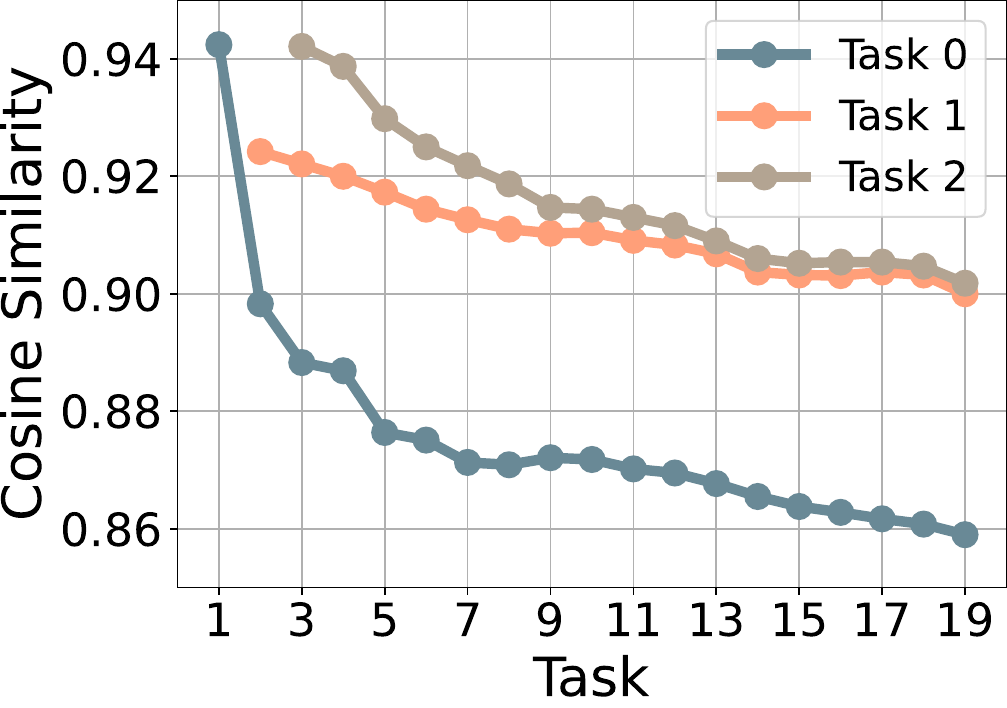}
\end{subfigure}
\hfill
\begin{subfigure}[b]{0.32\textwidth}
\centering
\includegraphics[width=\textwidth]{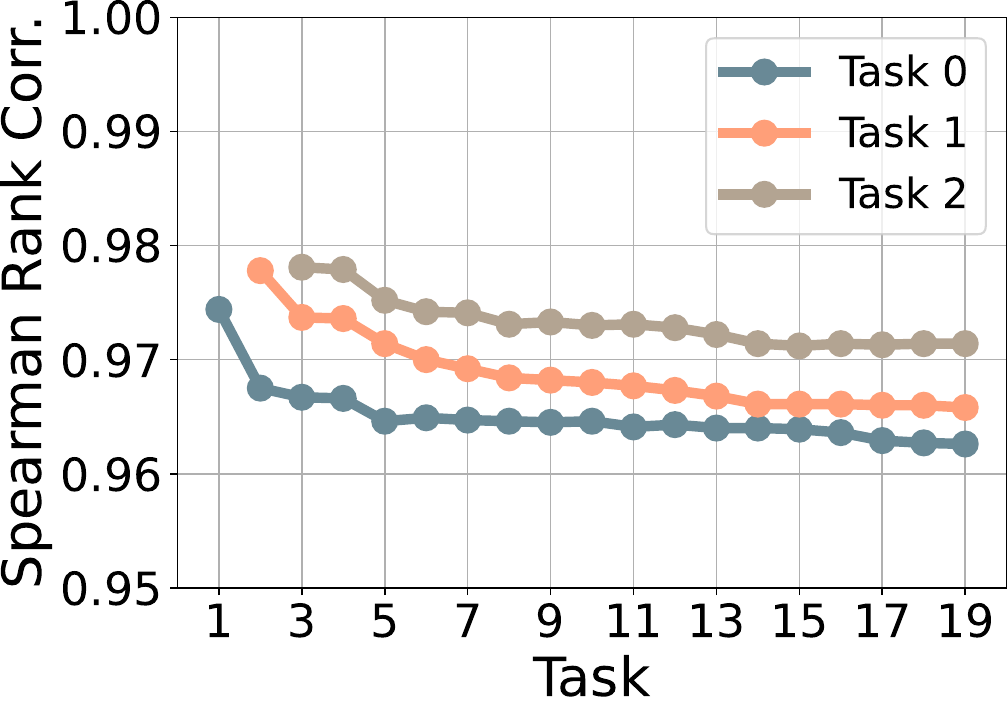}
\end{subfigure}
\caption{Evaluation of Fisher accuracy as the task index increases on CIFAR-100 with 20 tasks.}
\label{fig:fisher_eval_cifar_20t}
\end{figure}

To evaluate scale sensitivity, we recompute the Fisher matrices $\mathbf{F}_{t}^{(i)}$ using the current model (subscript $t$) and the data from task $i$ (superscript $(i)$) after learning each new task. We use the Norm Ratio to quantify the numerical changes in the Fisher matrix. The Norm Ratio is defined as:
\begin{equation}
NR = \frac{||\mathbf{F}_t^{(i)}||}{||\mathbf{F}_i^{(i)}||}
\end{equation}
where $NR > 1$ indicates that the Fisher matrix has become inflated, and $NR < 1$ indicates that it has degraded.

To quantify changes in the structural patterns of the Fisher matrix, we measure the similarity between the Fisher matrix estimated at task $t$ and the original Fisher matrix. We consider two settings: (1) Rehearsal-free: we compare the accumulated Fisher matrix $\mathbf{F}_{t}^{\mathrm{cum}}$ with the original optimal Fisher matrix $\mathbf{F}_i^{(i)}$, and (2) Rehearsal-based, we compute the optimal Fisher matrix $\mathbf{F}_{t}^{(1...t)}$ for the current task by incorporating data from previous tasks, and compare $\mathbf{F}_{t}^{(1...t)}$ with $\mathbf{F}_i^{(i)}$.
We employ Spearman Rank Correlation to capture the overall consistency and Cosine Similarity to assess shifts in the most critical task parameters. The Spearman Rank Correlation evaluates the agreement in rank ordering between two vectors. Given two vectors $\mathbf{v}_1$ and $\mathbf{v}_2$, each element is first converted to its rank, denoted as $r_{1,i} = \text{rank}(v_{1,i})$ and $r_{2,i} = \text{rank}(v_{2,i})$. The Spearman correlation coefficient is then computed as:
\begin{equation}
\rho = 1 - \frac{6 \sum_{i=1}^{n} (r_{1,i} - r_{2,i})^2}{n(n^2 - 1)}
\end{equation}
where $n$ represents the number of elements in the vectors. The Spearman correlation coefficient $\rho \in [-1, 1]$, where $\rho = 1$ indicates identical rank ordering, 0 indicates no correlation, and -1 indicates complete inverse ordering. Spearman correlation captures the overall pattern similarity between two vectors regardless of their scale.

Cosine Similarity focuses on the directional alignment between two vectors. A high cosine similarity indicates that the important parameters in one model are largely aligned with those in the other, capturing the consistency of parameter sensitivity patterns across models. The cosine similarity between $\mathbf{F}_{t}^{\mathrm{cum}}$ and $\mathbf{F}_i^{(i)}$ is defined as:
\begin{equation}
\text{CosSim}(\mathbf{F}_{t}^{\mathrm{cum}}, \mathbf{F}_i^{(i)}) =
\frac{\mathbf{F}_{t}^{\mathrm{cum}} \cdot \mathbf{F}_i^{(i)}}{\|\mathbf{F}_{t}^{\mathrm{cum}}\| \, \|\mathbf{F}_i^{(i)}\|}
\end{equation}


We report the above metrics for the three oldest tasks throughout the learning process, and the results are shown in Figure~\ref{fig:fisher_eval_cifar_10t},~\ref{fig:fisher_eval_cifar_20t} and~\ref{fig:fisher_eval_cifar_optimal}. 
We observe that the Norm Ratio fluctuates around 1, indicating that the strength of the constraint imposed on old tasks changes as the model learns new ones. As for the structural pattern of the Fisher Information Matrix, we observe that as the model learns more tasks, the Cosine Similarity gradually decreases, whereas the Spearman Rank Correlation shows a much milder decline. This indicates that although the absolute directions of parameter importance shift with new tasks, the relative ordering of important parameters remains largely stable. 

\cite{wu2024meta} has demonstrated the staleness of Fisher estimates in full-parameter EWC. In low-rank continual learning, a similar issue also exists for EWC. Under the rehearsal-based setting, however, the Fisher matrix remains relatively stable compared to the rehearsal-free setting. As the model learns more tasks, the key parameters identified by the Fisher matrix remain largely consistent, and the observed inaccuracies arise primarily from changes in magnitude and directional scaling, rather than from a fundamental reordering of parameter importance.

\begin{figure}[t!]
\centering
\begin{subfigure}[b]{0.32\textwidth}
\centering
\includegraphics[width=\textwidth]{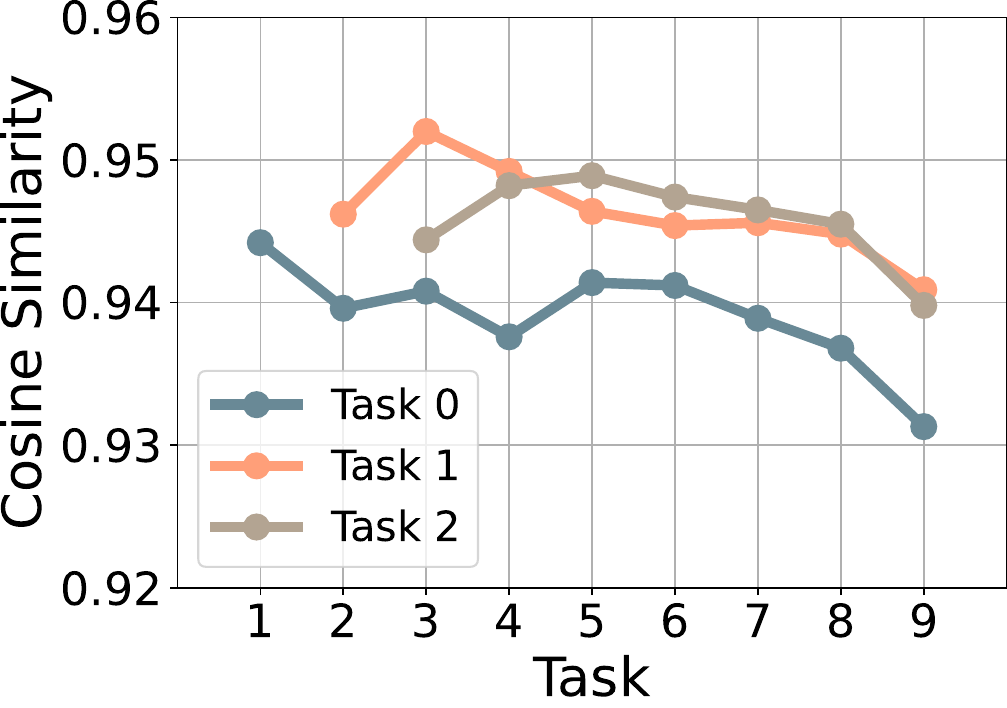}
\end{subfigure}
\hspace{0.05\textwidth}
\begin{subfigure}[b]{0.32\textwidth}
\centering
\includegraphics[width=\textwidth]{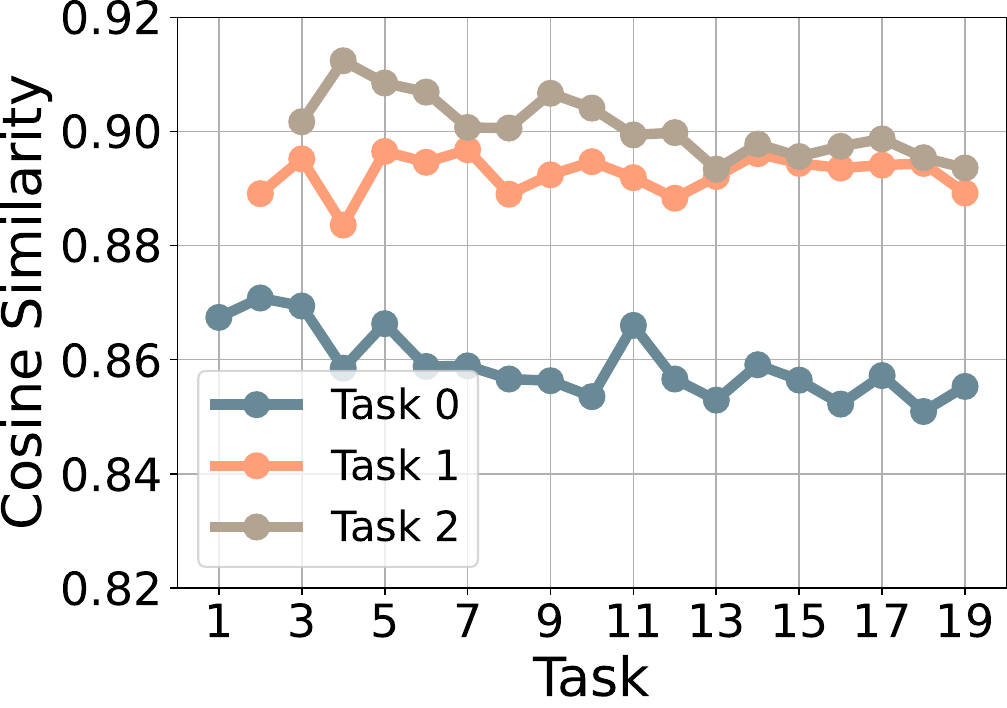}
\end{subfigure}
\caption{Cosine similarity between two optimal Fisher matrices under the rehearsal-based setting.}
\label{fig:fisher_eval_cifar_optimal}
\end{figure}

\subsection{The Use of Large Language Models (LLMs)}

In the preparation of this manuscript, the LLM was used to refine sentence structures, ensure clarity, and improve the readability of the text.

\end{document}